\documentclass{article}
\usepackage{iclr2027_conference,times}
\usepackage[utf8]{inputenc}
\usepackage[T1]{fontenc}
\usepackage[unicode]{hyperref}
\usepackage{url}
\usepackage{amsmath,amssymb,amsthm}
\usepackage{booktabs}
\usepackage{graphicx}
\usepackage{longtable}
\usepackage{natbib}

\usepackage{microtype}
\usepackage{xcolor}

\theoremstyle{plain}
\newtheorem{theorem}{Theorem}
\newtheorem{proposition}[theorem]{Proposition}
\newtheorem{lemma}[theorem]{Lemma}
\newtheorem{corollary}[theorem]{Corollary}
\theoremstyle{definition}
\newtheorem{definition}[theorem]{Definition}
\theoremstyle{remark}
\newtheorem{remark}[theorem]{Remark}

\newcommand{\R}{\mathbb{R}}

\newcommand{\E}{\mathbb{E}}
\newcommand{\Od}{O(d)}

\newcommand{\ip}[2]{\langle #1,\, #2\rangle}
\newcommand{\norm}[1]{\lVert #1\rVert}
\newcommand{\spn}{\operatorname{span}}
\DeclareMathOperator{\Hom}{Hom}
\DeclareMathOperator{\tr}{tr}
\DeclareMathOperator{\rank}{rank}
\DeclareMathOperator{\Fix}{Fix}
\DeclareMathOperator{\Stab}{Stab}
\DeclareMathOperator{\TV}{TV}
\newcommand{\dmin}{d_{\min}}
\newcommand{\dcov}{d_{\mathrm{cov}}}
\newcommand{\dproj}{d_{\mathrm{proj}}}

\title{Relational Composition as
\\
Parallel Transport:\\
What Per-Entity Frames Force in Attention}
\iclrfinalcopy % preprint mode: prints the author block, removes the review line numbers; the running header is set to "Preprint" in the style file
\author{Aakash Sarkar \\
University of California, San Francisco \\
\texttt{aakash.sarkar@ucsf.edu} % fill in
\And
Ananya Mukherjee \\
Brigham and Women's Hospital \\
\texttt{amukherjee8@bwh.harvard.edu} % fill in
\And
Motoshige Sato \\
University of California, San Francisco \\
\texttt{motoshige.sato@ucsf.edu} % fill in
}

\hypersetup{hidelinks,pdfauthor={Aakash Sarkar, Ananya Mukherjee, Motoshige Sato},pdftitle={Relational Composition as Parallel Transport: What Per-Entity Frames Force in Attention},pdfsubject={Preprint}}
\begin{document}
\maketitle

\begin{abstract}
A dot product means something only when both vectors share a coordinate system. Content words have one---the corpus fixes what every direction means---but a fresh entity, a name in a story or a variable in a program, does not: rotating its coordinates alone changes nothing. We take this per-entity frame freedom as a symmetry of the representation and ask what it forces in attention. Every invariant score depends on two entities only through their norms and the transported inner product $\ip{x_i}{T_{ij}x_j}$, and every equivariant message lies in the span of $x_i$ and $T_{ij}x_j$. A transport computed from the endpoints' own channels has rank at most the number of channels, never a group element from one vector each, so relations must be supplied; the standard transformer is the flat sector of one shared frame. Representation theory then attaches an integer to each axis of a mechanism, and the readout decides which integer. A readout with convex decision cells composes a finite group exactly only at widths of at least its smallest faithful real representation, and at that width every exact solution is a faithful representation with a finite latent group---as a conclusion, not a hypothesis. An even readout can decode a cover of the group instead, at a smaller width, and the gap is unbounded. Below its integer a mechanism is capped by the part of the group it cannot see---the commutator subgroup for a commutative mechanism, the derived series for a stack of commutative layers, the lower central series for a signature readout---all computable from the group before training. Empirically, every fitting seed at the minimal width generalizes exactly to length $1000$ ($17$ of $17$, against $7$ of $18$ at twice the width), with certified horizons read from the weights. On $C_3\times A_4$, a $36$-element group whose faithful width is $5$, gradient descent at width $4$ learned the $72$-state double cover under a quadratic readout and under sign-blind attention ($6$ of $48$ seeds, half declared in advance; monotone attention $0$ of $48$), and on a frozen cover the cross-entropy-optimal affine head is provably uniform while a quadratic head is exact. On parsed CLUTRR, transported composition beats a matched transformer by $41$ points over four to ten hops (trained on two and three), and forcing isometries reverses that by $65$---no isometry represents an idempotent rule such as sibling of sibling, as Krohn--Rhodes predicts---while on StepGame's commutative algebra isometries help instead. Static embeddings gave each word one vector; contextual models gave each occurrence its own; per-entity frames make each relation an operator, the unit in which relations compose. This points toward attention that produces its operators from context rather than retrieving them from a library, and the carrier theorem writes its specification in advance: not from the two entities, but from a carrier the context supplies.
\end{abstract}

\section{Introduction}
Attention compares vectors with a dot product, and a dot product is meaningful only when the two vectors are written in the same coordinate system. That assumption is natural for embeddings learned in one vocabulary, but less natural for a newly introduced name, object, or program variable whose role is specified through relations. We study an explicit alternative: changing the coordinates of one entity, while transforming its attached relation data consistently, should not change the prediction. This is an inductive bias, not a claim that fresh names logically imply a physical symmetry.

The premise leads to a simple rule: \emph{transport, then compare}. Each entity owns its frame, so a relation operator must carry a neighbour's vector into the receiver's frame before it enters a score or message (Figure~\ref{fig:concept}), a translator between two coordinate languages, of which standard attention is the case where every entity already speaks the same one. The form itself has fifteen years of empirical precedent, in knowledge-graph embedding, orthogonal recurrences and local-frame attention~\citep{nickel2011,sun2019,he2021,miyato2024}; what is ours is the derivation from the premise and what follows from it. Our questions are what the premise requires, what information can supply the operator, what exact composition requires of the resulting memory, and what a mechanism short of that requirement can still do.

\begin{figure}[t]
\centering
\includegraphics[width=0.96\linewidth]{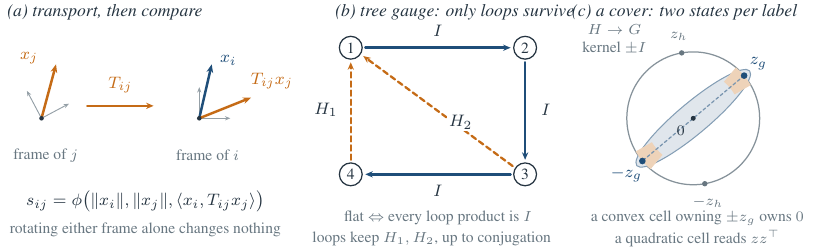}
\caption{\textbf{Frames, loops and covers.} (a) Under per-entity frames, with the relation data transformed consistently, an invariant score can depend on the pair only through the norms and the transported inner product $\ip{x_i}{T_{ij}x_j}$: transport, then compare (Theorem~\ref{thm:forced}). (b) A spanning-tree gauge makes every tree transport $I$; only the loop products $H_1,H_2$ survive, up to one common conjugation, and the connection is flat iff they are all $I$ (Section~\ref{sec:flat}). (c) An exact memory need not represent $G$ itself: a finite $H\to G$ with kernel $\pm I$, acting freely on a generic initial state, has two states $\pm z_g$ per label; a convex decision cell owning both owns their midpoint $0$, so no convex-cell readout decodes the cover, while a quadratic readout, which reads $zz^{\top}$ and so cannot tell $z$ from $-z$, decodes it (Theorem~\ref{thm:graded}).}
\label{fig:concept}
\end{figure}

The last two questions have one answer: the pair (width, readout) selects the algebra the model can realize. With convex decision cells, exact tracking of a finite group needs its smallest faithful real representation, and at that width exactness forces the learned transitions to be the group itself, their latent closure finite as a conclusion. Below the width a transport does not degrade gracefully: its accuracy is capped by the largest quotient the width admits, the kernel it cannot see, and in our runs descent lands on that quotient; or, with a readout blind to sign, it can realize a cover of the group, more states in fewer dimensions. For $C_3\times A_4$ the faithful width is five, but a double cover works in four; for $A_4\times A_4$, six and four; and gradient descent at width four finds the cover, certified, when the readout can read it and not when it cannot.

Sections~\ref{sec:frames}--\ref{sec:cert} prove this chain: the forced form of scores and messages and the rank of a transport computed from endpoint channels (Section~\ref{sec:frames}); convex-readout rigidity, the readout-graded width, and the ceilings below the integer, which classify tasks by their algebra before training (Section~\ref{sec:width}); and finite certificates that separate a verified snapped model from the learned weights and bound the latter to an explicit horizon (Section~\ref{sec:cert}). Section~\ref{sec:exp} measures all of it, the frozen-cover intervention most directly: the states are identical, and changing only the head changes what can be decoded.

Throughout, a proof of existence is not a guarantee that descent finds the construction, and finite-test accuracy is not all-word exactness: unsuccessful runs are reported, exploratory protocols kept separate, and the raw weights given worst-case horizons.

\section{Independent frames and transported attention}\label{sec:frames}\label{sec:premise}\label{sec:theory}
Entity $i$ carries a vector $x_i\in V_i\cong\R^d$. Independent frames act by $x_i\mapsto g_ix_i$, where $g_i\in\Od$. An invariant scalar is unchanged; an equivariant output at $i$ transforms by $g_i$. A transport $T_{ij}:V_j\to V_i$ transforms as
\begin{equation}\label{eq:gauge}
T_{ij}\mapsto g_iT_{ij}g_j^{-1},\qquad T_{ij}x_j\mapsto g_iT_{ij}x_j.
\end{equation}
The metric makes $\Od$ the frame group; invariant attributes and labels may also be supplied.

\begin{samepage}
\subsection{What the symmetry forces}
\begin{theorem}[Scores and messages]\label{thm:forced}
For independent endpoint frames and $T_{ij}\in\Od$, a continuous score is invariant if and only if
\[
s_{ij}=\phi\bigl(\norm{x_i},\norm{x_j},\ip{x_i}{T_{ij}x_j}\bigr)\quad\text{for some continuous }\phi.
\]
A continuous message from $j$ to $i$ is equivariant if and only if it has the form $m_{ij}=\alpha x_i+\beta T_{ij}x_j$, with invariant scalar coefficients $\alpha,\beta$ (continuous on constant-rank strata of the two-vector span).
\end{theorem}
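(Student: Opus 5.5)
The approach is to reduce to a single frame. Both the score and the message are functions of the triple $(x_i,x_j,T_{ij})$, and \eqref{eq:gauge} gives the action $(x_i,x_j,T)\mapsto(g_ix_i,g_jx_j,g_iTg_j^{-1})$ of $\Od\times\Od$. Since $T\in\Od$, I would choose $g_j=g_iT$. This sends $T$ to $I$ and $x_j$ to $g_iTx_j$. So every orbit meets the slice $\{T=I\}$, and the residual symmetry preserving that slice is the diagonal $g_i=g_j=g$. On the slice the pair becomes $(x_i,y)$ with $y:=Tx_j$, and the diagonal $\Od$ acts on it.

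In this form the problem is classical invariant theory for two vectors under $\Od$. By the first fundamental theorem for $\Od$, or more elementarily because two pairs with the same Gram matrix are related by an orthogonal map (extend the isometry of their spans), any invariant function of $(x_i,y)$ is a function of $\norm{x_i}$, $\norm{y}=\norm{x_j}$, and $\ip{x_i}{y}=\ip{x_i}{T x_j}$. The only-if direction therefore holds with $\phi$ defined on the image of the Gram map. The if direction is immediate, since each of the three quantities is invariant under \eqref{eq:gauge}.

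\textbf{Continuity of $\phi$.} The step I expect to need care is that $\phi$ is continuous. The Gram map from pairs to (norms, inner product) is a quotient map onto its image, because it is continuous, closed (proper on compacts), and surjective with $\Od$-orbits as fibres. A continuous invariant therefore descends to a continuous function on that image. The image is a closed set $\{(a,b,c):a,b\ge0,\ |c|\le ab\}$, so Tietze extends $\phi$ continuously to all of $\R^3$.

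\textbf{Messages.} Restrict again to the slice. An equivariant $m(x_i,y)$ under the diagonal action satisfies $m(gx_i,gy)=gm(x_i,y)$. Let $W=\spn\{x_i,y\}$ and take $g$ equal to the identity on $W$ and $-I$ on $W^\perp$; this requires $d>\dim W$. Then $gm=m$, which forces $m\in W$. So $m=\alpha x_i+\beta y$, and equivariance of $m$ together with uniqueness of the coefficients on each rank stratum makes $\alpha,\beta$ invariant. Concretely, solve the Gram system $\ip{m}{x_i}$, $\ip{m}{y}$, which is Cramer's rule with the invertible Gram matrix on the rank-2 stratum and its rank-1 analogue otherwise. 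This gives $\alpha,\beta$ as continuous functions of invariants on each stratum, which accounts for the stratum-wise continuity clause in the statement. Transporting back gives $m_{ij}=\alpha x_i+\beta T_{ij}x_j$. The converse is a direct check.

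\textbf{Expected obstacle.} The main obstacle is the degenerate width $d\le 2$, where $W^\perp$ can be trivial. There I would use instead a reflection fixing $W$ when $\dim W=1$. When $\dim W=2=d$ the conclusion $m\in W$ is automatic, since $W$ is the whole space.
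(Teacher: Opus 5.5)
Your proof is correct and follows essentially the paper's route. The paper likewise reduces to the diagonal $O(d)$-action on $(u,v)=(x_i,T_{ij}x_j)$; your slice $T=I$, whose stabilizer is the diagonal, is a cleaner way to obtain the paper's orbit bijection. It then uses Gram-matrix separation for the score and the fixed space of the stabilizer of $W=\spn\{u,v\}$ for the message, with coefficients read off the Gram system. Your proper-map and Tietze argument makes explicit the continuity of $\phi$, which the paper takes from the cited invariant-theory fact.

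Two small corrections. First, on the rank-one stratum the coefficients are not unique, so your ``rank-1 analogue'' should be the minimal-norm solution $\Gamma^{+}\bigl(\ip{m}{x_i},\ip{m}{y}\bigr)^{\top}$, with $\Gamma$ the Gram matrix of $(x_i,y)$; it is continuous there because $\Gamma$ has constant rank, whereas a rule putting all weight on $x_i$ whenever $x_i\neq0$ is not. Second, the $d\le2$ obstacle is vacuous: $I_W\oplus(-I_{W^{\perp}})$ works in every dimension, and the case $W^{\perp}=0$ is trivial.
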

\end{samepage}
\begin{proof}[Proof sketch]
The independent-frame orbits of $(x_i,x_j,T_{ij})$ correspond to the common-frame orbits of $(x_i,T_{ij}x_j)$. The latter are separated by their Gram matrix. The stabilizer of the two vectors fixes no direction outside their span. These are standard orthogonal invariant-theory facts~\citep{weyl1946,villar2021}; Theorem~\ref{note:thm:forced} gives the full argument.
\end{proof}
For normalized vectors the score is a function of the transported inner product alone; the theorem specifies admissible arguments, not a nonlinearity or an architecture. For example, with attention weights $a_{ij}$ and output $y_i$,
\begin{equation}\label{eq:attention}
a_{ij}=\operatorname{softmax}_j(s_{ij}),\qquad
y_i=\sum_j a_{ij}T_{ij}x_j
\end{equation}
is equivariant. Score and value share the transport here because one connection is supplied; symmetry permits distinct covariant carriers. For nonorthogonal operators, further invariants and covariants are needed (Proposition~\ref{note:prop:gl}). The theorem says what the transport must do, not what it must be: an isometry, or a non-invertible map, which we call a \emph{reset}; which kind a task needs is decided by the algebra of its relations, and both kinds are required on public data (Section~\ref{sec:public}). The empirical face of the theorem is one cell: under independent random rotations of the entities' inputs, a fixed-frame bilinear score reaches $0.19$ on a relational task where chance is $0.17$, an MLP score $0.16$, and the forced score $1.000$ (three seeds; Appendix~\ref{app:m5}).

\subsection{Where a transport can come from}
\begin{theorem}[Endpoint carriers]\label{thm:carrier}
Let the endpoints carry channels $X_i\in\R^{d\times m_i}$ and $X_j\in\R^{d\times m_j}$, and let $e$ be invariant data. Every continuous covariant map computed only from these inputs has the form
\[
T(X_i,X_j,e)=X_iC(X_i^\top X_i,X_j^\top X_j,e)X_j^\top,
\qquad \rank T\le\min(\rank X_i,\rank X_j).
\]
The coefficients may be chosen invariant and continuous on fixed-rank strata. In particular, one vector per endpoint supplies at most a rank-one carrier, never an orthogonal transport when $d\ge2$.
\end{theorem}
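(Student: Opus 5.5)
The plan is to make covariance precise and then reduce to Theorem~\ref{thm:forced}-style invariant theory applied column by column. Covariance means $T(g_iX_i,g_jX_j,e)=g_iT(X_i,X_j,e)g_j^{-1}$ for all $g_i,g_j\in\Od$, with $i$ and $j$ independent. I would first prove the range and corange statements and only then produce the coefficient matrix $C$.

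\textbf{Range.} Fix the inputs and let $S_i=\operatorname{col}(X_i)$. Take any $h\in\Od$ that fixes $S_i$ pointwise; for instance, $h$ acts as a reflection on a vector $u\perp S_i$ and trivially on $u^\perp$. Setting $g_i=h$ and $g_j=I$ leaves the inputs unchanged, so covariance gives $hT=T$. Hence every column of $T$ is fixed by the pointwise stabilizer of $S_i$. That stabilizer fixes no vector outside $S_i$; this is the same stabilizer fact used in the proof of Theorem~\ref{thm:forced}. So $\operatorname{range}T\subseteq S_i$.

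\textbf{Corange.} Symmetrically, take $g_i=I$ and $g_j=h$ fixing $S_j$ pointwise. Then $Th^{-1}=T$, so $T$ vanishes on $S_j^\perp$. Together these give $T=P_iTP_j$, where $P_i,P_j$ are the orthogonal projections onto $S_i,S_j$. The rank bound $\rank T\le\min(\rank X_i,\rank X_j)$ is then immediate.

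\textbf{Factorization.} Write $P_i=X_iX_i^{+}$ and $P_j=(X_j^{+})^\top X_j^\top$. Then
\[
T=X_i\,\bigl[X_i^{+}\,T\,(X_j^{+})^\top\bigr]\,X_j^\top ,
\]
and I define $C$ to be the bracketed matrix. It remains to show that $C$ is invariant under the frame action and is a function of the Gram matrices and $e$ alone. For invariance, note $(g_iX_i)^{+}=X_i^{+}g_i^{-1}$, so covariance of $T$ gives $C(g_iX_i,g_jX_j,e)=C(X_i,X_j,e)$. For the Gram dependence, two inputs with the same Gram matrices $X_i^\top X_i$ and $X_j^\top X_j$ are related by some $(g_i,g_j)$, because the Gram matrix separates $\Od$-orbits of tuples of vectors (the same fact cited in Theorem~\ref{thm:forced}). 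So $C$ descends to a function of $(X_i^\top X_i,X_j^\top X_j,e)$.

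\textbf{Continuity.} The pseudoinverse is continuous on the set where $\rank X$ is fixed, so $C$ is continuous on fixed-rank strata, as the statement claims. This is where I expect the main subtlety. $C$ is not unique when $X$ has dependent columns, and it can blow up where the rank drops. I would therefore claim only stratum-wise continuity, matching the statement, and use the canonical choice through $X^{+}$ rather than attempt a global extension.

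\textbf{One vector per endpoint.} When each endpoint carries a single vector, $m_i=m_j=1$ and $T=c\,x_ix_j^\top$ has rank at most $1$. An orthogonal map has rank $d\ge 2$, so such a $T$ is never orthogonal.
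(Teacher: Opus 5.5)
Your proposal is correct and follows the paper's proof essentially step for step. The stabilizers of $X_i$ and $X_j$ force $T=P_iTP_j$, and the pseudoinverse factorization gives $C=X_i^{+}T(X_j^{+})^{\top}$. Its invariance and descent to $(X_i^\top X_i,X_j^\top X_j,e)$ come from Gram-matrix orbit separation, with continuity claimed only on fixed-rank strata. Using single reflections in place of the full group $O(S_i^\perp)$ is a harmless simplification of the same stabilizer argument.
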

\begin{proof}[Proof sketch]
The stabilizer of $X_i$ rotates its orthogonal complement freely, so every column of $T$ must lie in $\operatorname{col}(X_i)$. Applying the same argument at $j$ gives $T=P_iTP_j$. Substitution of $P_i=X_iX_i^+$ gives the factorization (Theorem~\ref{note:thm:carrier}).
\end{proof}
With square invertible channel matrices the polar frames $F_i=X_i(X_i^\top X_i)^{-1/2}$ give $T_{ij}=F_i\widetilde C F_j^\top$ (rectangular full-rank inputs admit such frames on charts, not globally), so a relation-label library supplies operators relative to a frame choice; our experiments use a common fixed frame. The theorem concerns endpoint-only inputs: supplied graph transports bring further anchors to an endpoint, and one generic anchor with a cycle of simple spectrum generates a full frame (Lemma~\ref{note:lem:framesynth}).

\subsection{Shared frames and graph composition}\label{sec:flat}
For an inverse-consistent orthogonal connection, all cycle products equal the identity if and only if $T_{ij}=F_iF_j^{-1}$ for some $F_i\in\Od$ per entity, equivalently there is a gauge with every $T_{ij}=I$. We call this the \emph{flat sector}. Standard attention operates in a shared frame: a head with query, key, value and output matrices $W_Q,W_K,W_V,W_O$ has $W_Q^\top W_K$ and $W_OW_V$ as tensors on that common space, rather than independently transforming edge transports. This identifies the geometry; it does not restrict what a deep transformer can compute.

On a graph, composition along a path is a product of transports; a spanning-tree gauge leaves only the fundamental loop products, or holonomies, up to simultaneous conjugation, and traces of words in them separate connection orbits~\citep{procesi1976,gao2021} (Proposition~\ref{note:prop:hol}).

\section{The readout determines the composition width}\label{sec:width}
Let letters $a\in\Sigma$ generate a finite target group $G$ through $g_a$. A transport recurrence assigns orthogonal matrices $R_a$ and an initial state $x_0$, with
\begin{equation}\label{eq:model}
x_w=R_{a_L}\cdots R_{a_1}x_0,\qquad
g_w=g_{a_L}\cdots g_{a_1},\qquad w=a_1\cdots a_L.
\end{equation}
A decoder $r$ is \emph{exact} when $r(x_w)=g_w$ for every finite word, including the empty word. A \emph{convex-cell} decoder has pairwise disjoint convex sets $C_g$ covering the reachable states, with $r=g$ on $C_g$. Affine argmax, one-prototype-per-class nearest-neighbor decoding and argmax over softmax probabilities of affine logits satisfy this condition, whereas general attention followed by arbitrary value aggregation need not have convex cells.

\subsection{Exactness forces a faithful representation}
Let $\dmin(G)$ denote the smallest dimension in which $G$ has a faithful real representation, the least width at which orthogonal transitions can realize $G$ at all.
\begin{theorem}[Convex-readout width and rigidity]\label{thm:width}
Suppose the orthogonal recurrence~\eqref{eq:model} has an exact convex-cell decoder. Let $H=\overline{\langle R_a\rangle}$ be the closure of the group generated by the transitions. There is a continuous surjective homomorphism $\varphi:H\to G$ with $\varphi(R_a)=g_a$, and $G$ acts faithfully on $\Fix(\ker\varphi)=\{v:nv=v\text{ for all }n\in\ker\varphi\}$, the subspace the kernel fixes, so
\[
d\ge\dim\Fix(\ker\varphi)\ge\dmin(G).
\]
At $d=\dmin(G)$, $\varphi$ is an isomorphism: the latent group is finite and the transitions form a faithful representation. Conversely, a faithful representation with a generic initial state admits exact affine-argmax decoding.
\end{theorem}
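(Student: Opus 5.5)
The plan is to first show that the latent state determines the group label on the whole orbit closure, and then to extract $\varphi$ from the action on states. Let $\Gamma=\langle R_a\rangle$ act on $x_0$. Exactness means $r(hx_0)=g_w$ whenever $h=R_w$. The first step is to check that the label of a reachable state depends only on $h$: if $R_w=R_{w'}$ then $x_w=x_{w'}$, so $g_w=g_{w'}$. The harder point is that the label is even a function of $h$ alone, not of $x_0$ through $h$. For this I will use that the decoder also applies to $R_u x_w$ for every extension $u$. If $R_wx_0=R_{w'}x_0$ but $g_w\ne g_{w'}$, we get a contradiction from $r(x_w)$ being single-valued. Hence $\psi:\Gamma x_0\to G$, $R_wx_0\mapsto g_w$, is well defined. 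Since the state after appending $u$ is $R_uR_wx_0$ and its label is $g_ug_w$, the map $\psi$ is $\Gamma$-equivariant. It follows that $\Stab_\Gamma(x_0)\cdot$(words) respects labels. Concretely, $N=\{h\in\Gamma: \psi(hy)=\psi(y)\ \forall y\}$ is a normal subgroup, and $\Gamma/N\to G$ is a surjective homomorphism with $R_a\mapsto g_a$.

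The second step, which I expect to be the main obstacle, is passing to the closure $H$ and getting continuity. This is where convexity enters. The reachable set $\Gamma x_0$ is dense in the compact orbit $Hx_0$, a finite union of connected components of a homogeneous space. I claim each cell's points are separated from the other cells' points by at least some $\epsilon>0$ on the orbit. I would try this first via compactness of $H$: if $h_n x_0\in C_g$ and $h'_nx_0\in C_{g'}$ converge to the same point, then $h_n^{-1}h'_n\to$ an element $k$ fixing (a limit of) $x_0$. Translating by the reachable elements $h_n^{-1}$ would place a state labelled $g^{-1}g'\ne e$ near $x_0$. Then I would exploit the fact that $C_e$ contains $x_0$ and convex sets cannot be interleaved densely along the orbit. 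Specifically, the cone of orbit directions through $x_0$ would have to lie in both $\overline{C_e}$ and $\overline{C_{g''}}$, and averaging over $H$ (or over the finitely many elements of $G$) produces a common midpoint, contradicting disjointness. The midpoint trick of Figure~\ref{fig:concept}(c) is the model for this. Once uniform separation holds, $\psi$ is locally constant on $\Gamma x_0$. It therefore extends continuously to $Hx_0$, and the homomorphism extends to a continuous $\varphi:H\to G$ with open kernel $K=\ker\varphi$ of finite index.

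The third step is the dimension bound. $K$ is normal in $H$, so $\Fix(K)$ is $H$-invariant, and $H$ acts on it through $H/K\cong G$. For faithfulness, suppose $g\ne e$ acted trivially on $\Fix(K)$. Then I would apply $K$-averaging: $\bar x=\int_K kx_0\,dk\in\Fix(K)$. Because $Kx_0\subset\overline{C_e}$-region of label $e$, convexity puts $\bar x$ in the closure of $C_e$. Similarly $h\bar x=\int_K hkx_0$ averages states labelled $g$, with $h\in\varphi^{-1}(g)$. Faithfulness failing gives $h\bar x=\bar x$, a common point of the two cells. Together with the separation from step two, this is a contradiction, provided $\bar x$ lands in the interior sense needed. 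I would arrange this by working with the convex hulls of the finitely many label-orbits and using $\epsilon$-separation. Hence $d\ge\dim\Fix(K)\ge\dmin(G)$. At $d=\dmin(G)$ we get $\Fix(K)=\R^d$, so $K=\{I\}$ and $H\cong G$ is finite.

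For the converse, take a faithful orthogonal representation $\rho$ and $x_0$ outside the finitely many proper fixed subspaces $\Fix(\rho(g))$, $g\ne e$. Then the orbit points $\rho(g)x_0$ are distinct and all have norm $\norm{x_0}$. The decoder $\arg\max_g\ip{\rho(g)x_0}{x}$ is affine argmax, and by Cauchy--Schwarz it returns $g$ exactly at $x=\rho(g)x_0$. This gives exact decoding.
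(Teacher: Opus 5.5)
Your skeleton is the right one, and your converse (generic $x_0$, equal-norm prototypes, Cauchy--Schwarz) is correct. The crux, your step two, is asserted rather than proved, though, and the sketch would not work as written.

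You translate by $h_n^{-1}$ as if it were reachable. It is not: only the monoid $M=\{R_w\}$ of positive words acts on labelled states. The same issue affects your step-one subgroup, which is defined through $\psi(hy)$ for $h\in\Gamma$. So $h_n^{-1}h_n'$ carries no label until continuity is known, and continuity is what you are trying to prove. This part is repairable. Pass to the closure $J$ of the pairs $(g_w,R_w)$ in $G\times\Od$; it is a compact group, because a closed submonoid of a compact group is a subgroup. Your limit then yields $(g'',k)\in J$ with $g''\ne e$ and $kx_0=x_0$.

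The real problem is the contradiction. Reachable $g''$-states converging to $x_0\in C_e$ contradict nothing, because disjoint convex cells may share boundary points (an open half-space and its closed complement). Placing orbit directions in $\overline{C_e}\cap\overline{C_{g''}}$ therefore proves nothing. Averaging over $H$ or over $G$ also fails, since it yields a point that mixes every label.

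What is missing is averaging over the right set, and landing in the cells rather than their closures. The slice $J_e=\{h:(e,h)\in J\}$ is a compact normal subgroup of $H$, and $J_{g''}=kJ_e$. Hence $kJ_ex_0=J_ekx_0=J_ex_0=:S$, with the same Haar pushforward. The mean of that measure lies in $\mathrm{ri}\,\mathrm{conv}(S)$ because the measure has full support. For every dense subset $D$ of a compact $S$ one has $\mathrm{ri}\,\mathrm{conv}(S)\subseteq\mathrm{conv}(D)$. The reachable $e$-states and the reachable $g''$-states are both dense in $S$, so the mean lies in $C_e\cap C_{g''}$, which is the contradiction. In particular no $(g,I)$ with $g\ne e$ lies in $J$. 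So $J\to H$ is a continuous bijection of compact Hausdorff spaces, and $\varphi=\pi_G\circ\pi_H^{-1}$ is continuous at once. Your uniform separation is then a consequence, not a route to the result.

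The same relative-interior lemma is what your step three lacks at the point you flag (\emph{provided $\bar x$ lands in the interior sense needed}). $\epsilon$-separation of the label orbits does not keep the kernel mean off the boundary of $C_e$. Full support plus the lemma does: it puts $Phx_0$ in $C_{\varphi(h)}$ for every $h$. After that, faithfulness on $\Fix(\ker\varphi)$, the bound $d\ge\dim\Fix(\ker\varphi)\ge\dmin(G)$, and rigidity at $d=\dmin(G)$ follow as you say.
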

\begin{proof}[Proof sketch]
Let $K$ be the closure in $G\times\Od$ of the pairs $(g_w,R_w)$ over all words; it is a compact group, and its slice over $g\in G$ is the set of latent transitions that can carry the label $g$. Suppose the slice over some $g\ne e$ contained the identity, $(g,I)\in K$. Then the states reachable with label $e$ and those reachable with label $g$ have the same closure, hence, because Haar measure on a compact group has full support, the same mean, and each label's convex cell contains the mean of its own reachable states; two disjoint convex cells cannot share a point, so no such $g$ exists. The slice over $e$ is therefore the kernel of a homomorphism $\varphi:H\to G$. Averaging over that kernel is the orthogonal projection $P$ onto its fixed subspace; the projected states $Phx_0$ keep their labels, so $G$ acts freely on them and faithfully on $\Fix(\ker\varphi)$, giving $d\ge\dim\Fix(\ker\varphi)\ge\dmin(G)$. At $d=\dmin$ the projection is the identity, the kernel is trivial, and the transitions are a faithful representation. Theorem~\ref{note:thm:widthcompact} supplies the details, including nonclosed cells.
\end{proof}
No finite-order, margin, or finite-latent-group assumption is needed. For $SO(d)$ transitions the minimum is the least faithful special-orthogonal dimension $s=\dmin^{SO}(G)\in\{\dmin,\dmin+1\}$, and exactness at $d=s$ again forces a faithful representation (Corollary~\ref{note:cor:so}); for $D_4$ the two minima are two and three.

The theorem concerns all words: a rotation displaced slightly from a root of unity decodes correctly for thousands of steps before drifting across a decision boundary, so fitting a finite sample does not meet its hypothesis, and verification rather than finite accuracy connects a learned model to it (Section~\ref{sec:cert}).

\subsection{Nonlinear readouts can decode a smaller cover}
\begin{table}[t]
\centering\small
\resizebox{\textwidth}{!}{\begin{tabular}{@{}lcccccccc@{}}
\toprule
group & $|G|$ & commutative ceiling $1/|[G,G]|$ & derived length & nilpotency class & $\dmin^{O}$ & $\dmin^{SO}$ & $\dproj$ & $\dcov$\\
\midrule
$D_4$ & 8 & $0.5$ & $2$ & $2$ & $2$ & $3$ & $2$ & $2$\\
$D_{50}$ & 100 & $0.04$ & $2$ & --- (cap $0.04$) & $2$ & $3$ & $2$ & $2$\\
$A_4$ & 12 & $0.25$ & $2$ & --- (cap $0.25$) & $3$ & $3$ & $3$ & $3$\\
$S_4$ & 24 & $0.083$ & $3$ & --- (cap $0.083$) & $3$ & $3$ & $3$ & $3$\\
$A_5$ & 60 & $0.0167$ & never (perfect) & --- (cap $0.0167$) & $3$ & $3$ & $3$ & $3$\\
$Q_8$ & 8 & $0.5$ & $2$ & $2$ & $4$ & $4$ & $4$ & $4$\\
$S_5$ & 120 & $0.0167$ & never & --- (cap $0.0167$) & $4$ & $5$ & $4$ & $4$\\
$C_3\times A_4$ & 36 & $0.25$ & $2$ & --- & $5$ & $5$ & $4$ & $4$\\
$A_5\times A_5$ & 3600 & $1/3600$ & never & --- & $6$ & $6$ & $4$ & $4$\\
\midrule
kinship rules & --- & --- & --- & --- & \multicolumn{4}{c}{resets required (idempotent rules, e.g.\ sibling of sibling)}\\
\bottomrule
\end{tabular}}
\caption{\textbf{Selection rules}, computed from each group's Cayley table before training: the limiting ceiling of any commutative or order-blind mechanism; the depth at which a commutative cascade becomes exact; the signature level at which a signature readout becomes exact (a dash: not nilpotent, every level capped); the minimal faithful width over $\Od$ and $SO(d)$ (convex-cell readouts); $\dproj$, the width at which a quadratic or sign-blind readout can decode a double cover ($C_{2n}/\pm I\cong C_n$ and $D_{2n}/\pm I\cong D_n$ give $2$ for cyclic and dihedral groups; an order-$16$ subgroup of $O(4)$ gives $4$ for $Q_8$; for $S_5$ the standard representation has trivial centre and $PO(3)\cong SO(3)$ has no $S_5$); and $\dcov$ (arbitrary readouts). The three widths first separate at $\dmin=5$. The last row is CLUTRR's kinship composition rules (Section~\ref{sec:public}), which contain idempotents and so need non-invertible transports.}
\label{tab:selection}
\end{table}
Define $\dcov(G)$ as the least $d$ for which a finite $H\le\Od$ surjects onto $G$, and $\dproj(G)$ as the least $d$ for which $G$ embeds in $PO(d)=\Od/\{\pm I\}$. For decoder class $\mathcal R$, let $w_{\mathcal R}$ be the least exact width among models with a finite latent group.
\begin{theorem}[Readout-dependent width]\label{thm:graded}
For affine argmax, degree-at-most-two polynomial argmax, and arbitrary decoders (widths $w_{\rm lin}$, $w_{\rm quad}$, $w_{\rm any}$),
\[
w_{\rm lin}=\dmin,\qquad
w_{\rm quad}\le\min(\dmin,\dproj),\qquad
w_{\rm any}=\dcov.
\]
For $C_3\times A_4$, the cyclic group of order three times the tetrahedral rotation group ($36$ elements), $w_{\rm any}=w_{\rm quad}=4<5=w_{\rm lin}$; for $A_4\times A_4$ and $A_5\times A_5$, the corresponding separation is $4<6$. On $A_5^{2m}$, linear width is $6m$ while quadratic width is at most $4m$, giving an unbounded additive gap.
\end{theorem}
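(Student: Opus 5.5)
The plan is to prove the three width claims separately and then check the numerical examples.

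\emph{Linear width.} For $w_{\rm lin}=\dmin$, the lower bound is Theorem~\ref{thm:width}, since affine argmax has convex cells. For the upper bound I use the converse half of that theorem: a faithful representation of dimension $\dmin$, started at a generic $x_0$, gives a finite latent group with an exact affine-argmax decoder.

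\emph{Arbitrary decoders.} For $w_{\rm any}=\dcov$, suppose the latent group $H$ is finite and the model is exact. For the lower bound, the key step is to show that $R_w\mapsto g_w$ is well defined. If $R_w=R_{w'}$ then $x_w=x_{w'}$, so $g_w=r(x_w)=g_{w'}$. This map is a homomorphism, because $R_{uw}=R_uR_w$ and $g_{uw}=g_ug_w$, and it is surjective because the $g_a$ generate $G$. Hence $H\le\Od$ is finite and surjects onto $G$, so $d\ge\dcov$. For the upper bound, take a finite $H\le O(\dcov)$ with a surjection $\pi:H\to G$. Lift each generator $g_a$ to some $R_a\in\pi^{-1}(g_a)$. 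If these lifts do not generate all of $H$, replace $H$ by the subgroup $\langle R_a\rangle$; it still surjects onto $G$, since its image contains every generator. Choose $x_0$ with trivial stabilizer in this subgroup; such points exist because, for a finite group, the fixed spaces of nonidentity elements form a finite union of proper subspaces. Then $h\mapsto hx_0$ is injective, and the lookup decoder $hx_0\mapsto\pi(h)$ on the finite orbit is exact.

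\emph{Quadratic readouts.} For $w_{\rm quad}\le\min(\dmin,\dproj)$, the $\dmin$ bound follows because affine functions are polynomials of degree at most two. For the $\dproj$ bound, embed $G$ in $PO(d)$ and let $H$ be the preimage of the image in $\Od$. This $H$ is a finite group with $H/\{\pm I\}\cong G$, possibly after restricting to the subgroup generated by lifts, as above. I will choose generic $x_0$ so that the map $h\mapsto hx_0$ is injective. The states labeled $g$ are then exactly $\pm z_g$, and their outer products $z_gz_g^\top$ are distinct for distinct $g$, because $zz^\top=z'z'^\top$ forces $z'=\pm z$.

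The decoder is built from quadratic scores $q_g(x)=\ip{x}{z_g}^2$, which are degree-two polynomials. Since $\ip{z_h}{z_g}^2\le\norm{z_h}^2\norm{z_g}^2$ with equality only when $z_h=\pm z_g$, and all states have equal norm, the argmax of $q_g$ is attained uniquely at the correct label. This step needs genericity so that no $z_h$ equals $\pm z_g$ for $h\ne g$, and that is guaranteed by injectivity of $h\mapsto hx_0$ together with the fact that $-I\in H$ maps $z_g$ to $-z_g$.

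\emph{Examples.} For $C_3\times A_4$, there are four things to establish.
\begin{itemize}
\item \emph{$\dmin=5$.} The faithful irreducible real representations have the form $\rho\otimes\chi$ with $\chi$ a nontrivial character of $C_3$. Over $\R$ these come paired with their conjugates, which forces $3\cdot 2=6$ or splits as $2+3$. The minimum is $5$, realized by a rotation by $2\pi/3$ on $\R^2$ plus the $A_4$ action on $\R^3$; any smaller representation would miss either $C_3$-faithfulness on a complementary factor or the $A_4$ part.
\item \emph{$\dproj\le 4$, $\dcov\le 4$.} Use $SO(4)\cong(Sp(1)\times Sp(1))/\pm1$ acting by $v\mapsto pvq^{-1}$ on quaternions. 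Take $p$ in a $C_6\subset Sp(1)$ and $q$ in the binary tetrahedral group $2T$. The image is a finite group of order $72$ with quotient $C_3\times A_4$ modulo $\pm I$, so both widths are at most four.
\item \emph{$\dcov\ge 4$.} A finite subgroup of $O(3)$ surjecting onto $C_3\times A_4$ is impossible by the classification of finite subgroups of $O(3)$.
\item \emph{The other products.} The same construction with $2T\times 2T$ or $2I\times 2I$ gives $4$ for $A_4\times A_4$ and $A_5\times A_5$. Their $\dmin=6$, because a faithful representation must restrict faithfully to each factor and each factor needs $3$.
\end{itemize}

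\emph{Unbounded gap.} Linear width for $A_5^{2m}$ is $6m$, since each of the $2m$ simple factors requires a $3$-dimensional block. Stacking $m$ copies of the $4$-dimensional quadratic construction for $A_5\times A_5$ gives quadratic width at most $4m$. The key claim is that degree-two polynomials, namely block-diagonal outer products of each block's state, can jointly decode the product. I would argue this by taking as score the sum over blocks of the blockwise scores $\ip{x^{(k)}}{z^{(k)}_{g_k}}^2$. The argmax of a sum of independent blockwise terms is the tuple of blockwise argmaxes, which gives exact decoding.

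\emph{Main obstacle.} I expect the hardest part to be the lower bounds in the examples. Showing $\dmin(C_3\times A_4)=5$ and $\dmin(A_5^{2m})=6m$ needs a careful argument about real irreducible representations of products. It must handle complex-type factors such as $C_3$, and it must establish additivity of the minimal faithful dimension over simple nonabelian factors, which holds because each factor's kernel condition requires its own nontrivial irreducible summand.
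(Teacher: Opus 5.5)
Your proofs of the three general claims follow the paper's own argument, and so do the $SO(4)$ quaternion covers and the blockwise decoder for $A_5^{2m}$:
\begin{itemize}
\item $w_{\rm lin}=\dmin$ comes from the convex-cell width theorem and its converse.
\item $w_{\rm quad}\le\dproj$ comes from the preimage in $O(d)$ of a projective embedding, decoded by $\ip{x}{z_g}^2=\ip{xx^{\top}}{z_gz_g^{\top}}$.
\item $w_{\rm any}=\dcov$ comes from a lookup on a free orbit together with the induced surjection. You should add one step: the realized $R_w$ form a finite submonoid of $H$, hence all of $H$.
\end{itemize}

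\textbf{The gap: the lower bound $\dmin=6$ for the products.} Your reason is that a faithful representation restricts faithfully to each factor and each factor needs $3$. That only gives $\dmin\ge3$. Your proposed repair, that each factor needs its own irreducible summand, is false: $\mathbf 3\boxtimes\mathbf 3$ is a single faithful irreducible of $A_5\times A_5$. What is missing is the cost of sharing a constituent.
\begin{itemize}
\item For $A_5^n$: every real irreducible of $A_5$ is of real type, so the real irreducibles of $A_5^n$ are tensor products $U_1\boxtimes\cdots\boxtimes U_n$. One that is nontrivial on $j$ factors has dimension at least $3^j\ge3j$. Every normal subgroup of $A_5^n$ is a subproduct, so faithfulness means each factor is moved by some constituent. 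This gives $6$ for $A_5\times A_5$ and $6m$ for $A_5^{2m}$.
\item For $A_4\times A_4$ the factor-by-factor heuristic does not apply at all. $A_4$ is not simple, and there are $2$-dimensional real irreducibles nontrivial on both factors. The right criterion is that every nontrivial normal subgroup contains $V_4\times1$ or $1\times V_4$. The only constituents nontrivial on $V_4\times1$ are $\mathbf 3\boxtimes W$ or its realification. So detecting both copies of $V_4$ costs at least $\min(3+3,9)=6$.
\end{itemize}

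\textbf{A second gap: the lower bound $\dcov\ge4$.} The statement asserts $w_{\rm any}=w_{\rm quad}=4$ for all three examples, so you need $\dcov\ge4$ for each. You assert it only for $C_3\times A_4$ (``by the classification'') and omit it for the two products. The paper settles all three at once, as follows.
\begin{enumerate}
\item Under $O(3)\cong SO(3)\times C_2$, a finite subgroup of $O(3)$ is isomorphic to a rotation group $K\in\{C_n,D_n,A_4,S_4,A_5\}$ or to $K\times C_2$.
\item Quotients of rotation groups are rotation groups.
\item A quotient of $K\times C_2$ is $A$ or $A\times C_2$, with $A$ a quotient of $K$.
\item Hence every quotient of a finite subgroup of $O(3)$ embeds in $O(3)$. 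So $\dcov(G)\le3$ would force $\dmin(G)\le3$, contradicting $\dmin\in\{5,6\}$.
\end{enumerate}

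Your sketch of $\dmin(C_3\times A_4)=5$ can be made precise the same way. The abelianization $C_3\times C_3$ has no nontrivial real characters. The only real irreducibles nontrivial on $1\times V_4$ are the $3$-dimensional one, with kernel $C_3\times1$, and a $6$-dimensional one. So no representation of dimension at most $4$ is faithful.
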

\begin{proof}[Proof sketch]
The upper bounds are constructions. Lift a projective embedding $G\hookrightarrow PO(d)$ to a finite subgroup $H\le\Od$ with kernel $\{\pm I\}$; for a generic initial state the feature $x_wx_w^\top$ identifies the target element while discarding the sign of the lift, and distinct feature prototypes are affinely separable, so a quadratic readout is exact at width $\dproj$. The identity $PSO(4)\cong SO(3)\times SO(3)$ supplies the four-dimensional examples, and independent blocks give the family $A_5^{2m}$. For the lower bounds, an arbitrary readout with a finite latent group $H$ still needs $H\twoheadrightarrow G$, so its width is at least $\dcov$, and an affine readout is a convex-cell readout, so Theorem~\ref{thm:width} gives $\dmin$ (Theorem~\ref{note:thm:graded}).
\end{proof}
The saving is in state dimension, not computation (a quadratic head at width four has ten features). A squared-prototype head, logits $\beta\ip{p_g}{x}^2$ with one prototype $p_g$ per class and scale $\beta$, is an attention-form readout insensitive to $x\mapsto-x$; we compare it with a monotone prototype score on the same recurrence.

\paragraph{A concrete double cover.}
For $G=C_3\times A_4$ the tetrahedral group $A_4$ has the binary tetrahedral double cover $2T$; $H=C_3\times2T$ has $72$ elements acting faithfully and orthogonally on $\R^4$, and quotienting by its central $\{\pm I\}$ gives the $36$ labels, so a generic initial state has two lifts $\pm z_g$ per label: more states in fewer dimensions. A target involution $b$ lifts to an update with $R_b^2=-I$: two repetitions return the label and reverse the state, which quadratic features ignore since $(-z_g)(-z_g)^{\top}=z_gz_g^{\top}$. This also shows the convex obstruction without dimension counting (Figure~\ref{fig:concept}c): a convex cell containing both lifts contains their midpoint $0$, and two labels cannot both own $0$. The cover is an exact compositional memory that no convex-cell head decodes and a squared-prototype score does; the experiment below tests exactly this on one frozen set of states.

The finite-latent assumption in Theorem~\ref{thm:graded} matters for arbitrary readouts: without it an infinite free subgroup and a discontinuous lookup encode the word itself at low dimension, so the result is not a memory lower bound for unrestricted computation.

\paragraph{A decoder intervention with a sharp prediction.}
On a balanced antipodal cover with $M$ distinct label pairs $\{\pm z_g\}$, an affine-softmax head with weights $W$ and biases $(b_h)$ has cross-entropy
\begin{equation}\label{eq:frozen}
\mathcal L(W,b)\ge\log\sum_h e^{b_h}-\frac1M\sum_gb_g\ge\log M.
\end{equation}
The first inequality averages the logits of the two lifts and applies Jensen's inequality to the convex log-sum-exp; the second uses balanced labels. Equality predicts uniformly on the orbit. Quadratic features distinguish the label pairs. This concerns cross-entropy, not maximal affine classification accuracy: a nearest-selected-lift classifier can score $1/2$ despite a worse loss. The proof and kernel-averaging generalization are in Proposition~\ref{note:prop:frozen}.

\subsection{Below the integer: ceilings}\label{sec:ceilings}
\begin{theorem}[Ceilings, joint-process form]\label{thm:ceiling}
Let the letters act on $G\times H$ by $(g_a,R_a)$ with $H$ finite, let $K$ be the joint group they generate and $N_G=\pi_G(K\cap(G\times\{I\}))$ the part of $G$ the latent process cannot see. On uniform random words the accuracy of any readout of the latent state is at most $1/|N_G|$ plus a term that vanishes as the joint walk mixes (so the ceilings below are limits under mixing, not finite-length bounds), and the bound is attained by realizing the quotient. Consequently:
\begin{list}{}{\setlength{\leftmargin}{1.6em}\setlength{\labelwidth}{1.3em}\setlength{\labelsep}{0.3em}\setlength{\itemsep}{0pt}\setlength{\parsep}{0pt}\setlength{\topsep}{1pt}\setlength{\partopsep}{0pt}}
\item[(a)] any mechanism whose transitions commute is capped at $1/|[G,G]|$, $[G,G]$ the commutator subgroup, and an order-blind function of the word is bounded at every length by the exact multiset-optimal accuracy $c_G(L)$, which tends to the same limit;
\item[(b)] $k$ cascaded commutative layers are capped at $1/|G^{(k)}|$, $G^{(k)}$ the $k$-th derived subgroup, reaching $1$ at the derived length;
\item[(c)] a level-$k$ signature decodes exactly only for nilpotency class at most $k$, and a readout of the level-$k$ nilpotent quotient is capped at $1/|\gamma_{k+1}G|$, $\gamma_{k}G$ the lower central series;
\item[(d)] a transport of width $d<\dcov(G)$ is capped by $\max|Q|/|G|$ over quotients $Q$ with $\dcov(Q)\le d$ (Appendix~\ref{app:proofs}, \ref*{note:thm:ceiling}--\ref*{note:prop:sig}).
\end{list}
\end{theorem}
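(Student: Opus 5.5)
The main bound is a statement about the joint random walk on $K\le G\times H$, and each of (a)--(d) follows by identifying $N_G$ for the relevant class of latent processes. For the main bound, uniform random words drive a random walk on the finite group $K$. I would pass to the limit of the walk as it mixes. If the letter set generates $K$ aperiodically, the law of $(g_w,h_w)$ converges to uniform measure on $K$. If there is periodicity, the walk converges instead to uniform measure on a coset of an index-$p$ subgroup determined by $L \bmod p$; I would handle this by working coset by coset, and the distance to stationarity is the vanishing term.

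Under the uniform measure on $K$, conditioning on the latent coordinate $h$ leaves the fibre $\{g:(g,h)\in K\}$. This fibre is a coset of $N_G$, because $(g,h),(g',h)\in K$ implies $(g^{-1}g',I)\in K$. The label is therefore uniform on a set of size $|N_G|$. Any readout $r(h)$, deterministic or randomized, is correct with conditional probability at most $1/|N_G|$. Adding the total-variation distance to stationarity gives the bound. Attainment goes as follows. $K/(N_G\times\{I\})$ projects onto $H$ with trivial kernel, so $h$ determines $gN_G$; let the readout return a fixed representative of that coset. Taking $H$ to be $G/N_G$ acting on itself realizes the quotient.

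For (a): if the $h_a$ commute, $K$ contains $([g_a,g_b],[h_a,h_b])=([g_a,g_b],I)$. Since $K$ is generated by the pairs $(g_a,h_a)$, the commutator subgroup of $K$ projects onto $[G,G]$ while its $H$-part is trivial, so $N_G\supseteq[G,G]$. Equality is attained by $H=G^{\rm ab}$. For the order-blind clause, the word's multiset (Parikh vector) is the only information available. I would define $c_G(L)$ as the Bayes accuracy of predicting $g_w$ from that multiset. It converges to $1/|[G,G]|$, since conditional on the multiset the product is a uniformly random ordering, which equidistributes on a coset of $[G,G]$ as $L\to\infty$; I would argue this by a mixing argument for random products within a fixed abelianization class.

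For (b): I would induct on the number of layers. Layer $k$ reads the previous latent state together with the letter, and its transitions commute. Applying the argument of (a) to the joint group of the previous layers shows that the part of $G$ invisible after $k$ layers contains the commutator subgroup of the previous invisible part, which gives $G^{(k)}$. The matching construction takes $G/G^{(1)}$, then $G^{(1)}/G^{(2)}$ via a wreath-type cascade (Krohn--Rhodes). The cap is $1/|G^{(k)}|$, and it reaches $1$ once $G^{(k)}=1$.

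For (c): I would use the universal property that the level-$k$ truncated signature generates the free nilpotent group of class $k$, so its image is $G/\gamma_{k+1}G$. This gives $N_G=\gamma_{k+1}G$, and exactness is equivalent to class at most $k$. For (d): the image of $K$ in $H\le O(d)$ is a finite latent group surjecting onto $G/N_G$, so $\dcov(G/N_G)\le d$ and the ceiling is at most $\max|Q|/|G|$. Attainment realizes the optimal $Q$ at width $d$.

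I expect the hardest steps to be the order-blind limit $c_G(L)\to 1/|[G,G]|$ and the periodic case of mixing. Both require care that is not visible in the stationary argument.
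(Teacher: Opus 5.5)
The main bound, (a) and (d) follow the paper's proof: Goursat fibres, total variation to $\mathrm{Unif}(K)$, and an abelian or width-$d$ quotient $G/N_G\cong H/N_H$. Your induction in (b) is equivalent to the paper's wreath-product bound on derived length, once you add one clarification. What acts abelianly on layer $k$ is the subgroup of the joint group that fixes the lower layers; it acts through $B_k^{S_1\times\cdots\times S_{k-1}}$. The composite transitions themselves do not commute.

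The genuine gap is in (c). You feed the level-$k$ signature into the joint-process argument with latent group $N_k=F/\gamma_{k+1}F$, but that argument needs $H$ finite, and $N_k$ is infinite. As a result, positive words reach only a submonoid $M$, and there is no uniform limit law on $K\le G\times N_k$. The group-level identity $N_G=\gamma_{k+1}G$ therefore does not produce two \emph{positive} words with equal signature and different labels. The ``only if'' direction of exactness needs the monoid homomorphism $[w]\mapsto g_w$ on $M\subseteq N_k$ to extend to a homomorphism $N_k\to G$. The paper gets this from the Ore condition. If $uM\cap vM=\emptyset$, the $2^r$ products of $r$ factors from $\{u,v\}$ are distinct yet lie in a ball of radius $O(r)$, contradicting the polynomial growth of $N_k$ (Bass--Guivarc'h). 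Hence $N_k$ is the group of fractions of $M$. You also need linearly independent increments. Finally, the cap $1/|\gamma_{k+1}G|$ follows from the main theorem only for readouts that factor through the finite quotient $G/\gamma_{k+1}G$; for arbitrary functions of the signature the paper assumes conditional mixing.

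Two further steps fail as written.

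\emph{Periodicity.} Working ``coset by coset'' cannot give $1/|N_G|$. Put $s_a=(g_a,h_a)$ and $K_0=\langle s_as_b^{-1}\rangle$. A periodic walk converges at length $L$ to the uniform law on a coset of $K_0$. The fibres of that coset are cosets of $\pi_G(K_0\cap(G\times\{I\}))$, which can be strictly smaller than $N_G$. For example, take one letter with $g_a$ of order two and a constant latent state: then $N_G=G$, yet the constant readout $e$ is correct at every even length. So aperiodicity has to be a hypothesis, as in the paper (an identity letter $(e,e)$), not a case you handle.

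\emph{The order-blind limit.} The paper does not prove $c_G(L)\to1/|[G,G]|$; it states it as a hypothesis. Your claim that the product equidistributes given the multiset is false pointwise, since a single-letter multiset is a point mass at every length. The limit does hold in expectation over $\mu$, and you could prove it as follows.
\begin{itemize}
\item Let $\rho$ be a complex irreducible representation with $\dim\rho\ge2$. Then $M(\theta)=|\Sigma|^{-1}\sum_ae^{i\theta_a}\rho(g_a)$ has spectral radius below $1$ for every $\theta$, because a unimodular eigenvalue would force a common eigenvector of all $\rho(g_a)$.
\item By compactness of the torus, $\sup_\theta\norm{M(\theta)^L}\le C\lambda^L$ for some $\lambda<1$.
\item Read off the coefficient of $\prod_ax_a^{n_a}$ in $(\sum_ax_a\rho(g_a))^L$ on the torus $x_a=e^{i\theta_a}$. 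This gives $\E_\mu\norm{\E[\rho(g_w)\mid\mu]}\le\binom{L+|\Sigma|-1}{|\Sigma|-1}C\lambda^L\to0$.
\item The uniform law on a coset of $[G,G]$ has zero Fourier coefficient at exactly these $\rho$. It agrees with the conditional law at the linear characters, because both are supported on the coset that $\mu$ determines.
\item Fourier inversion then gives $c_G(L)\to1/|[G,G]|$.
\end{itemize}
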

\begin{proof}[Proof sketch]
By Goursat's lemma $K$ projects onto both factors with normal kernels, and the latent state after a word depends only on the coset $g_wN_G$ of its label; so a readout of that state is right on at most one label per coset, giving $1/|N_G|$ once the walk has mixed over each coset. Commuting transitions generate an abelian $H$, so $[G,G]\subseteq N_G$, and a multiset function is a commutative mechanism; the derived and lower central series are the abelian-tower and nilpotent-tower kernels (Chen's theorem for signatures). The order-blind bound is concurrently proved by \citet{zhangli2026}; (b) sharpens the threshold of \citet{shakerinava2026} into the ceiling below it.
\end{proof}
The depth and signature series part at $A_4$, solvable but not nilpotent: two cascaded commutative layers compose it exactly, while no signature level decodes it and, under the hypotheses of Theorem~\ref{thm:ceiling}(c), every level is capped at $0.25$; the constructed cascade is exact at depth $2$ and learned signature readouts sit at chance under the cap (Appendix~\ref{app:additional}). Nothing here depends on how a task is presented, only on the group its relations generate and on the readout and mechanism class: Table~\ref{tab:selection} lists the integers for the groups we measure, computed before training, and the Krohn--Rhodes distinction~\citep{krohn1965} between groups and monoids with resets is its last row, for the composition rules of CLUTRR rather than a monoid identified from data.

\section{From exact constructions to learned-model guarantees}\label{sec:cert}
Learned matrices have nonzero residuals, so we distinguish an algebraically verified \emph{snapped model} from the original recurrence and its finite evaluation.
\begin{theorem}[Finite prototype certificate]\label{thm:cert}
Let a finite set $Z$ contain $z_0=x_0$, with transitions $\delta:Z\times\Sigma\to Z$ and labels $\lambda:Z\to G$ satisfying $\lambda(z_0)=e$ and $\lambda(\delta(z,a))=g_a\lambda(z)$. Suppose
\[
\epsilon=\max_{z,a}\norm{R_az-\delta(z,a)},\qquad
r\equiv\lambda(z)\text{ on the ball }B(z,\eta)\quad\text{for all }z\in Z.
\]
Then every word of length $L$ with $L\epsilon<\eta$ is decoded correctly, and every word if $\epsilon=0<\eta$.
\end{theorem}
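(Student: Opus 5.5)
The plan is to compare the true trajectory $x_w$ with a shadow trajectory in $Z$ and show that they never separate by more than the accumulated residual. Write $w=a_1\cdots a_L$ with prefixes $x_k=R_{a_k}\cdots R_{a_1}x_0$. Define the shadow states by $z_0=x_0$ and $z_k=\delta(z_{k-1},a_k)$. The first step is to check, by induction from $\lambda(z_0)=e$ and $\lambda(\delta(z,a))=g_a\lambda(z)$, that $\lambda(z_k)=g_{a_k}\cdots g_{a_1}$. This is exactly the label $g_w$ of the prefix, in the left-multiplication convention of~\eqref{eq:model}. The snapped model is therefore exact as a finite automaton, and it remains to show that the real state stays inside the decoding ball of its shadow.

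The key estimate is $\norm{x_k-z_k}\le k\epsilon$, proved by induction. Split the step with the triangle inequality:
\[
\norm{x_k-z_k}\le\norm{R_{a_k}x_{k-1}-R_{a_k}z_{k-1}}+\norm{R_{a_k}z_{k-1}-\delta(z_{k-1},a_k)}.
\]
The first term equals $\norm{x_{k-1}-z_{k-1}}$ because $R_{a_k}$ is orthogonal, so errors are transported without amplification. The second term is at most $\epsilon$ by the definition of $\epsilon$, since $z_{k-1}\in Z$. Summing the two gives the linear bound, which is the only place the orthogonality of the transitions is used. If $R_a$ were merely bounded, the estimate would instead be geometric, of the form $\sum_k\norm{R}^k\epsilon$.

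For the conclusion, suppose $L\epsilon<\eta$. Then $\norm{x_L-z_L}\le L\epsilon<\eta$, so $x_L\in B(z_L,\eta)$, and by hypothesis $r(x_L)=\lambda(z_L)=g_w$. Applied to the empty word, this is the statement $r(x_0)=\lambda(z_0)=e$, which holds because $z_0=x_0$. If $\epsilon=0$, then $x_k=z_k$ exactly for every $k$, so every word is decoded correctly as long as $\eta>0$, which ensures that $z$ itself lies in the ball where $r\equiv\lambda(z)$.

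There is no serious obstacle. The only care needed is in conventions: the composition order of the labels must match~\eqref{eq:model}, and the ball should be treated as open, so that the strict inequality $L\epsilon<\eta$ is the right condition.
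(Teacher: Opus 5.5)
Your proof is correct and follows the paper's own argument: you run the prototype automaton as a shadow of the recurrence and get $\lambda(z_w)=g_w$ by induction from label consistency. You then use orthogonality and the triangle inequality to bound $\norm{x_w-z_w}\le|w|\epsilon<\eta$, so the readout ball returns the correct label, and you handle the empty word and the case $\epsilon=0<\eta$ correctly.
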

\begin{proof}
Run the prototype automaton alongside the recurrence on the same word. Because the transitions are orthogonal, the distance from the learned state to its prototype grows by at most $\epsilon$ per letter, so $\norm{x_w-z_w}\le|w|\epsilon<\eta$; the readout therefore returns $\lambda(z_w)=g_w$ by label consistency.
\end{proof}
The certificate checks $|\Sigma||Z|$ transitions and $|Z|$ readout neighborhoods; for linear logits $\eta$ is the minimum margin over the norm of the weight difference, and quadratic scores admit a norm bound or a tighter trust-region radius (Theorem~\ref{note:thm:proto}). Orbit-indexed prototypes handle several lifts per label; proposing a snapped representation is separate from verifying it.

The guarantees concern real arithmetic with the stored matrices (if $\norm{R_a}\le1+\kappa$, replace $L\epsilon$ by $\epsilon\sum_{t<L}(1+\kappa)^t$); floating-point execution would need arithmetic-error bounds besides.

\section{Experiments}\label{sec:exp}
We separate three questions: does the model fit, does it generalize on sampled words, and what can be certified? Unless stated otherwise, \emph{fit} means accuracy at least $0.99$ at length ten; \emph{evaluation success} means $1.000$ on 2,000 fresh words at each reported length through 1,000. Neither means all-word exactness. Protocols and full cell counts appear in Appendix~\ref{app:exp}.

\subsection{Width and verification}\label{sec:width_exp}
The eight-group ladder runs from cyclic and dihedral groups to $Q_8$, $A_5$ and $S_5$ (Appendix~\ref{app:ladder}), with predicted orthogonal widths from two to four; order alone does not determine width, since $Q_8$ needs four dimensions and the larger $A_5$ three. Every group fits at its predicted width, and raw generalization varies (recorded $L=300$ accuracies $0.549$ for $D_{50}$, $0.746$ for $Q_8$, $0.843$ for $S_5$): the ladder supports representability and extraction, not uniformly reliable optimization.

For a more controlled comparison, $A_4$ and $D_4$ were trained with $SO(d)$ transitions at their minimum special-orthogonal width $d=3$ and at $d=6$, using anchor, fit, and polish stages. Among fitting runs in the reported grid, $17/17$ at $d=3$ passed the finite evaluation criterion, compared with $7/18$ at $d=6$. A separate certificate cohort of six seeds per group at $d=3$ is exact to length $1000$ in every seed, and the snapped generators of every seed close to a group of $|G|$ elements (their products form exactly $|G|$ matrices with a Latin-square multiplication table) with homomorphism defect at most $6\times10^{-13}$, exact at $10^4$; with the recorded radius the raw weights are certified to $71$--$276$ letters ($A_4$) and $100$--$222$ ($D_4$), and the first observed error on a $200{,}000$-letter walk comes at $7{,}253$--$73{,}076$ letters for three $A_4$ seeds and never for three, and at $10{,}541$--$67{,}191$ for $D_4$ (Appendix~\ref{app:ladder} and Figure~\ref{fig:minwidth}; every certified run lies above the diagonal of Figure~\ref{fig:horizons}).

The gap between worst-case and observed horizons is expected, since the certificate charges the largest residual at every step while realized residuals are smaller and partly cancel; conversely a $Q_8$ run that passed the length-$1{,}000$ test failed at $2{,}000$, so long finite success establishes no exact representation.

\subsection{A learned cover below faithful width}\label{sec:spinor}
The decisive group is $C_3\times A_4$ (convex decoder: width five; even decoder: a four-dimensional double cover); the registered protocol tested twelve seeds per cell. Two width-four quadratic runs passed the evaluation criterion and were identified with a 72-state cover over 36 labels. For the target involution $b$ the raw matrices satisfy $\norm{R_b^2+I}_F\le3.6\times10^{-3}$, and $10^{-15}$ after snapping: the cover with kernel $\{\pm I\}$ and $R_b^2=-I$. The raw models drift at longer lengths: certified horizon $15$, first observed errors at $1{,}737$ and $2{,}872$ letters (minimum over five walks).

\begin{figure}[t]
\centering
\begin{minipage}[t]{0.5\textwidth}
\vspace{0pt}\centering\scriptsize\setlength{\tabcolsep}{3pt}
\begin{tabular}{@{}llrr@{}}
\toprule
Protocol & Readout and width & Fit & Success\\
\midrule
Registered & quadratic, $d=4$ & 2/12 & 2/12\\
Registered & affine, $d=4$ & 0/12 & 0/12\\
Registered & prototype, monotone, $d=4$ & 0/12 & 0/12\\
Registered & prototype, even, $d=4$ & 0/12 & 0/12\\
Registered & affine, $d=5$ & 6/12 & 0/12\\
\midrule
Exploratory & cosine, even, $d=4$ & 3/24 & 3/24\\
Exploratory & cosine, monotone, $d=4$ & 0/24 & 0/24\\
Exploratory & cosine, monotone, $d=5$ & 7/12 & 6/12\\
\midrule
Confirmatory & cosine, even, $d=4$ & 3/24 & 3/24\\
Confirmatory & cosine, monotone, $d=4$ & 0/24 & 0/24\\
Confirmatory & cosine, monotone, $d=5$ & 7/24 & 7/24\\
\bottomrule
\end{tabular}
\end{minipage}\hfill
\begin{minipage}[t]{0.47\textwidth}
\vspace{0pt}\centering
\includegraphics[width=\linewidth]{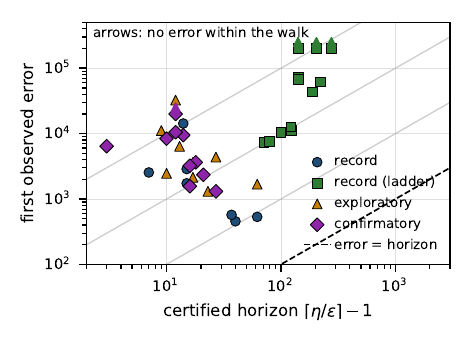}
\end{minipage}
\caption{\textbf{Left: readout parity and optimization} on $C_3\times A_4$; fit is $\ge0.99$ at length ten, evaluation success is $1.000$ on $2{,}000$ words at every length through $1{,}000$. The registered prototype-attention cells failed to fit; exploratory cells use cosine scores, initial scale ten and doubled training; confirmatory cells re-run them with byte-identical code on seeds $24$--$47$ declared in advance. Every successful width-four even cell is a certified double cover (Appendix~\ref{app:spinor}). \textbf{Right: observed errors occur beyond the computed horizons.} Certified horizon of the raw weights (Theorem~\ref{thm:cert}, record radius) against the first observed error on a random walk, one point per certified run of Table~\ref{tab:evidence} with a positive horizon ($39$ runs); every point lies above the diagonal, by a factor of $9$ to $2{,}711$ (median $201$) over the $35$ uncensored runs; arrows mark walks that found no error.}
\label{fig:horizons}
\end{figure}

The registered attention-form heads did not fit at either parity. An exploratory protocol then used cosine logits $\beta\cos(p_g,x)$ and $\beta\cos^2(p_g,x)$, $\cos(p_g,x)$ the cosine of the angle between prototype and state, initial scale $\beta=10$ and twice the training budget: the even head produced three width-four covers in $24$ attempts, the monotone head none there and six of twelve at width five. This is consistent with the expressivity separation, with the small basin and post hoc protocol limiting the trainability claim; a zero-success finite experiment is not itself required by the all-word theorem.

The three cosine cells were then re-run on seeds $24$--$47$, declared in advance, with byte-identical code (hashes in a declaration written before the first run) against a pass criterion written down beforehand (Appendix~\ref{app:spinor}). No run was excluded. Sign-blind attention at width four: $3$ of $24$ exact, the sweep's rate, each a certified double cover (margins $12.1$--$12.7$, certified horizons $10$--$14$, first raw errors at $8{,}362$--$10{,}484$); monotone at width four $0$ of $24$, none fitting; monotone at width five $7$ of $24$, all faithful. Pooled: sign-blind $6$ of $48$, monotone $0$ of $48$ at width four and $13$ of $36$ at width five. The replication makes the result confirmatory for this variant without making the variant pre-registered: it independently reproduced cover learning at the sweep's rate on declared seeds, and every exact seed is the certified cover.

Two controls qualify the result: at width five the two fitting quadratic runs selected a padded width-four cover although a faithful representation fits, and on $A_4\times A_4$ the width-four quadratic cell had $0/12$ fitting runs, so that $6$-to-$4$ separation remains a construction (width-six linear control: $2/12$).

\paragraph{Frozen states isolate the decoder.}
For two registered covers and one exploratory cover, we enumerate 72 states and fit only the head with L-BFGS and five restarts. On the snapped balanced orbit, affine cross-entropy is $3.5835=\log36$ to four decimals, as predicted by~\eqref{eq:frozen}. Full quadratic and squared-prototype heads attain accuracy $1.000$ and cross-entropy at most $10^{-5}$ on the same states. A nearest-selected-lift affine head attains $0.500$ accuracy, the cross-entropy optimum differing from the accuracy optimum; the raw-orbit intervention, whose approximate symmetry is outside the theorem's hypotheses, is in Appendix~\ref{app:frozen}.

\subsection{Transport constraints on parsed relational tasks}\label{sec:public}
StepGame~\citep{shi2022} and CLUTRR~\citep{sinha2019} are evaluated through common parsed interfaces: every model receives the same extracted relations. These experiments concern composition given relations, not text understanding or text-level state of the art; accuracy is the mean out of distribution (OOD), on more hops than were trained (StepGame: train $k\le5$, test $6$--$10$; CLUTRR: train $k\in\{2,3\}$, test $4$--$10$; every model and control in Table~\ref{tab:public_full}).

StepGame's displacements compose in an abelian algebra. An additive composer with a difference readout scores $0.753$ against the transformer's $0.481$, and within the source-initialized probe family replacing general linear maps by rotations raises the mean from $0.73$ to $0.82$ (the composer and probe rows use different readouts and are not one comparison); commutativity alone does not make every attention weighting a mean aggregator.

CLUTRR gives the opposite intervention: unconstrained matrix scans score $0.848$ OOD against the transformer's $0.436$, and $0.202$ when constrained to be orthogonal. The motivation is algebraic: a nonidentity idempotent has no invertible representation, so rules with resets need non-invertible transports, the Krohn--Rhodes distinction~\citep{krohn1965} of Table~\ref{tab:selection}. It does not identify CLUTRR's relation monoid; it shows that one isometry constraint helps one parsed task and harms the other, so transport structure must be tested against the task.

\section{Related work and scope}\label{sec:related}
\paragraph{Local frames and relational operators.}
GET~\citep{he2021}, Neural Sheaf Diffusion~\citep{bodnar2022}, Bundle Neural Networks~\citep{bamberger2025} and geometry-aware attention~\citep{miyato2024} use local spaces and transported features, BiEquiFormer~\citep{pertigkiozoglou2024} aligns keys and values across rotated inputs; bilinear scores and path products predate these in knowledge-graph models~\citep{nickel2011,guu2015,sun2019,xu2019}. Connection Laplacians and synchronization~\citep{singer2012,gao2021} and the operator view of attention of \citet{lin2026} are the geometric background.

\paragraph{Composition, width, and readout.}
Orthogonal recurrences~\citep{arjovsky2016,wisdom2016} and state-tracking models built from reflection products or structured sparse transitions~\citep{grazzi2025,siems2025,terzic2025} are the primitives. \citet{larson2026}, concurrently, trains a matrix memory toward a prescribed $\dmin$-dimensional representation under a fixed readout and finds the recruited rank equal to $\dmin$ on five groups, necessity there being the rank of the prescribed target, scored at training lengths. Theorems~\ref{thm:width}--\ref{thm:graded} prescribe no target and fix no readout: exactness under a convex-cell readout forces a faithful representation, and a quadratic readout escapes through a cover, which a fixed decoder cannot show. \citet{howe2026} and \citet{nebli2026} note readout dependence and a quadratic-versus-affine separation in other settings; the finite-group cover and the compact-closure necessity are ours. \citet{nowak2026} give the dynamics/readout view underlying group division. The cascade threshold of \citet{shakerinava2026} is the exactness case of Theorem~\ref{thm:ceiling}(b). Transformers compute noncommutative composition by other mechanisms~\citep{liu2023}; nothing here says they are order blind.

\section{Discussion and limitations}\label{sec:outlook}
One premise, that each entity owns its frame, forces the arc of this paper. Frame ownership admits directional information only through covariant transport, whose orthogonality constrains the latent algebra to a quotient, a cover or a faithful copy of the target; which of these a model realizes depends on the pair (width, readout) and, in our runs, on which of them gradient descent found. The certificate, where applied, separates exact structure from a nearby approximation. Each step rests on explicit hypotheses about one model class.

The sharpest consequence is the readout's part in it. On the same $72$ frozen states the cross-entropy-optimal affine head is provably uniform and the quadratic head exact; in training, the cover was found only under readouts that can use it ($6$ of $48$ sign-blind, $0$ of $48$ monotone). The representation is a property of dynamics and decoding together: what a model builds depends on what its decoder can read. Whether this holds beyond transport models is the question the paper leaves open.

The limitations of this work mark the boundary of what is currently established. The groups are small because finiteness makes the theorems exact, the certificates finite and the integers computable before training; continuous or large groups need new tools. Optimization success is rare and its determinants are not predicted by the theory; the attention-form variant was chosen exploratorily and then tested on prospectively declared seeds. The benchmarks use parsed relations, and their algebraic reading is tentative. The distance to a transformer stack is explicit (Proposition~\ref{note:prop:flat}) and not crossed here.

The carrier theorem also poses the next step. With exact matched anchors $B=QA$ an orthogonal transport is identified only on the anchored subspace, since $BA^{+}=QAA^{+}=QP_{\operatorname{col}(A)}$; inferring transports from incomplete relational evidence and propagating the remaining uncertainty through attention is the problem that follows, and the present experiments, with supplied relation operators, do not evaluate it. Within this class, behavior is governed by integers computed before training; how far that governance extends beyond the class is the open question.

\label{maintext:end}

\clearpage
\appendix
\section{Full statements and proofs}\label{app:proofs}
\setcounter{theorem}{0}
\renewcommand{\thetheorem}{A.\arabic{theorem}}
\renewcommand{\theHtheorem}{A.\arabic{theorem}}
The full statements below include the supporting results omitted from the main text; every statement carries its hypotheses. Main-text results and their full versions:
\begin{center}\footnotesize
\begin{tabular}{@{}p{4.2cm}p{9.2cm}@{}}
\toprule
main text & full statement and proof\\
\midrule
Theorem~\ref{thm:forced} (scores and messages) & Theorem~\ref{note:thm:forced}; general linear transports, Proposition~\ref{note:prop:gl}\\
Theorem~\ref{thm:carrier} (endpoint carriers) & Theorem~\ref{note:thm:carrier}; frames from a cycle, Lemma~\ref{note:lem:framesynth}\\
Section~\ref{sec:flat} (flat sector, holonomy) & Propositions~\ref{note:prop:flat} and \ref{note:prop:hol}\\
Theorem~\ref{thm:width} (width and rigidity) & Theorem~\ref{note:thm:widthcompact}; finite-group form, Theorem~\ref{note:thm:width}; $SO(d)$, Corollary~\ref{note:cor:so}\\
Theorem~\ref{thm:graded} (readout-dependent width) & Theorem~\ref{note:thm:graded}\\
Eq.~\eqref{eq:frozen} (frozen cover) & Proposition~\ref{note:prop:frozen}\\
Theorem~\ref{thm:ceiling} (ceilings) & Theorem~\ref{note:thm:ceiling} and Corollaries~\ref{note:cor:comm}--\ref{note:cor:widthacc}, Proposition~\ref{note:prop:sig}\\
Theorem~\ref{thm:cert} (certificate) & Theorem~\ref{note:thm:proto}; snapped models, Theorem~\ref{note:thm:cert}; horizon transfer, Lemma~\ref{note:lem:horizon}\\
\bottomrule
\end{tabular}
\end{center}
\subsection{Setting and conventions}

$V=\R^d$ with its standard inner product; $\Od$ its orthogonal group. Entities are indexed by $i=1,\dots,n$; entity $i$ carries a fiber $V_i\cong V$ and its own element $g_i\in\Od$. The symmetry of the theory is the product group $\mathcal G=\Od^{\,n}$ acting on the data attached to entity $i$ through $g_i$ alone. \emph{This is a modeling premise, not a derived fact}: it is the statement that a fresh entity has no content of its own, so that rotating its coordinates alone changes nothing; the paper adopts it as an inductive bias and derives its consequences.

\paragraph{Data.} Entity $i$ carries $m_i\ge0$ vectors $x_i^{(1)},\dots,x_i^{(m_i)}\in V_i$ (its \emph{channels}), collected in $X_i\in\R^{d\times m_i}$; $\mathcal G$ acts by $X_i\mapsto g_iX_i$. \emph{Invariant data} $e$ (labels, scalars) is fixed by $\mathcal G$. A \emph{transport} $T_{ij}\in\Hom(V_j,V_i)$ is a linear map transforming as $T_{ij}\mapsto g_iT_{ij}g_j^{-1}$; a family of transports on the edges of a graph is a connection, and its products around closed loops are holonomies. A function is \emph{invariant} if it is $\mathcal G$-fixed and a $V_i$-valued function is \emph{equivariant} if it transforms as $x_i$ does.

\paragraph{Two facts from invariant theory} (Weyl 1946, Thm.~2.9.A; Villar et al.\ 2021 for the continuous form). (F1) A continuous $\Od$-invariant function of vectors $v_1,\dots,v_k\in V$ under the diagonal action is a continuous function of their Gram matrix $(\ip{v_a}{v_b})_{ab}$, and the Gram matrix separates orbits: two tuples with the same Gram matrix are related by an element of $\Od$. (F2) A continuous equivariant map $F:(v_1,\dots,v_k)\mapsto V$ takes values in $\spn\{v_1,\dots,v_k\}$; the coefficients can be chosen invariant (the minimal-norm solution $c=\Gamma^{+}(\ip{v_a}{F})_a$, with $\Gamma$ the Gram matrix), and are continuous wherever the span has constant dimension. Proof of (F2): the pointwise stabilizer of $W=\spn\{v_a\}$ in $\Od$ acts on $W^{\perp}$ as $O(W^{\perp})$, fixes $F(v)$, and $O(W^\perp)$ fixes only $0$ in $W^\perp$.

\subsection{Relations are supplied, not computed}

\begin{lemma}[Separation]\label{note:lem:sep}
Let $\mathcal G=G_1\times\cdots\times G_n$ act on $X=X_1\times\cdots\times X_n$ with $G_i$ acting on the $i$-th factor only. A function $f$ on $X$ is $\mathcal G$-invariant if and only if it depends on each $x_i$ only through its $G_i$-orbit. For polynomial invariants, $\R[X]^{\mathcal G}=\bigotimes_i\R[X_i]^{G_i}$. In particular, for per-entity frames with vertex data only, every invariant is a function of the per-entity Gram matrices $\Gamma_i=X_i^{\top}X_i$ and the invariant data; no invariant depends jointly on the directions of two entities' channels.
\end{lemma}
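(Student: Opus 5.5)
The plan has three parts: the set-level separation statement, the polynomial tensor identity, and the specialization to per-entity frames via (F1).

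\emph{Set-level statement.} If $f$ depends on each $x_i$ only through $G_i x_i$, it is clearly $\mathcal G$-invariant. Conversely, suppose $f$ is $\mathcal G$-invariant, and let $x_i'=h_ix_i$ with $h_i\in G_i$ for each $i$. Apply invariance to the element $(h_1,\dots,h_n)\in\mathcal G$. This gives $f(x')=f(x)$, so $f$ factors through $\prod_i X_i/G_i$. The key observation is that $\mathcal G$-orbits in $X$ are exactly products of $G_i$-orbits, which follows directly from the factorwise action.

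\emph{Polynomial identity.} We have $\R[X]=\bigotimes_i\R[X_i]$, and the inclusion $\bigotimes_i\R[X_i]^{G_i}\subseteq\R[X]^{\mathcal G}$ is immediate. For the reverse inclusion I would use the fact that $\R[X_i]$ decomposes as a $G_i$-module. To avoid any assumption of complete reducibility, I would argue one factor at a time.
\begin{itemize}
\item Regard $f\in\R[X]^{\mathcal G}$ as an element of $\R[X_1]\otimes\bigl(\bigotimes_{i\ge2}\R[X_i]\bigr)$.
\item Choose a basis $\{q_\beta\}$ of the second factor and write $f=\sum_\beta p_\beta\otimes q_\beta$, which is a finite sum in each degree.
\item Invariance under $G_1\times\{1\}$ together with linear independence of the $q_\beta$ forces each $p_\beta\in\R[X_1]^{G_1}$.
\item Then choose a basis of $\R[X_1]^{G_1}$, re-expand, and repeat the argument for $G_2,\dots,G_n$.
\end{itemize}
This induction works for arbitrary groups, with no reductivity needed, because invariants of a factorwise action on a tensor product with one trivial factor are computed coefficientwise.

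\emph{Specialization.} Take $G_i=\Od$ acting on $X_i\in\R^{d\times m_i}$, with the invariant data $e$ as an extra factor acted on trivially. By the set-level statement, an invariant is a function of the orbits $\Od X_i$. By (F1), these orbits are separated by and continuously parametrized through $\Gamma_i=X_i^\top X_i$, so a continuous invariant is a continuous function of $(\Gamma_1,\dots,\Gamma_n,e)$. The mixed products $\ip{x_i^{(a)}}{x_j^{(b)}}$ with $i\ne j$ are then not invariant: $g_i$ and $g_j$ are independent, so such a product takes every value in $[-\norm{x}\norm{y},\norm{x}\norm{y}]$ along an orbit. Hence no invariant depends jointly on two entities' directions.

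I expect the main obstacle to be the continuity claim rather than the algebra. Passing from ``function of orbits'' to ``continuous function of the Gram matrices'' requires that the map from orbits to Gram matrices be a homeomorphism onto its image. This is exactly what (F1) supplies, via compactness of $\Od$: the orbit space is Hausdorff and the Gram map is proper. I would cite (F1) for this step rather than reprove it.
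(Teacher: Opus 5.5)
Your proof is correct and follows the paper's own route: factorwise invariance for the set-level claim, expansion in linearly independent functions of the other factors with invariance imposed one factor at a time for $\R[X]^{\mathcal G}=\bigotimes_i\R[X_i]^{G_i}$, and (F1) applied entity by entity to reach $(\Gamma_1,\dots,\Gamma_n,e)$. One cosmetic slip: for $d=1$ the mixed product $\ip{x_i^{(a)}}{x_j^{(b)}}$ takes only the two values $\pm\norm{x_i^{(a)}}\norm{x_j^{(b)}}$ along an orbit, not the whole interval, and in any case the ``no joint dependence'' conclusion already follows from the factorization through the per-entity Gram matrices, not from that example.
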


\begin{proof}
Invariance under the subgroup $G_i\times\{e\}$ says $f$ is constant on $G_i$-orbits of $x_i$ for every fixed value of the other coordinates, and conversely. The polynomial statement is the standard fact that $\R[X\times Y]=\R[X]\otimes\R[Y]$ and invariants of a product of groups acting factor-wise are tensor products of invariants (expand a finite tensor sum in linearly independent functions of the other factors, then impose invariance one factor at a time). The last sentence is (F1) applied entity by entity.
\end{proof}

This is the precise form of ``an entity with no content has only relations'': with per-entity frames, no function of the entities alone carries any information about their \emph{relative orientation}. Joint functions of per-entity invariants (norms, local Gram matrices, invariant attributes) are still available and can encode relationships of other kinds; what is excluded is information about how one entity's frame sits relative to another's. Whatever information of that kind a model uses must enter through data that transform across entities, that is, through transports (or through frames, Theorem~\ref{note:thm:carrier}). The premise itself is a chosen inductive bias, not a consequence of fresh names.

\subsection{Forced form}

\begin{theorem}[Forced form, orthogonal transports]\label{note:thm:forced}
Let $i\ne j$ be two entities with independent frames, $x_i\in V_i$, $x_j\in V_j$ and $T_{ij}\in\Od$ transforming as $x_i\mapsto g_ix_i$, $x_j\mapsto g_jx_j$, $T_{ij}\mapsto g_iT_{ij}g_j^{-1}$ for all $(g_i,g_j)\in\Od\times\Od$. A continuous score $s(x_i,x_j,T_{ij})$ is invariant if and only if
\[
s=\phi\big(\norm{x_i},\norm{x_j},\ip{x_i}{T_{ij}x_j}\big)
\]
for a continuous $\phi$, and a continuous message $m(x_i,x_j,T_{ij})\in V_i$ is equivariant if and only if
\[
m\in\spn\{x_i,\ T_{ij}x_j\}
\]
with invariant coefficients. Every such $\phi$ and every such span element is realized, so the form is tight.
\end{theorem}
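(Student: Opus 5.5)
The plan is to reduce the two-frame problem to a single common-frame problem and then apply the invariant-theory facts (F1) and (F2). The reduction uses the gauge freedom at $j$. Given $(x_i,x_j,T_{ij})$ with $T_{ij}\in\Od$, apply $(g_i,g_j)=(I,T_{ij})$. This sends
\[
(x_i,x_j,T_{ij})\mapsto(x_i,\;T_{ij}x_j,\;I),
\]
since $T_{ij}\mapsto T_{ij}T_{ij}^{-1}=I$. Every orbit therefore meets the slice $\{T=I\}$. The stabilizer of that slice is the diagonal $\{(g,g)\}$. So invariant functions of $(x_i,x_j,T_{ij})$ correspond bijectively to diagonal-$\Od$-invariant functions of the pair $(u,v)=(x_i,T_{ij}x_j)$, via $s(x_i,x_j,T)=\tilde s(x_i,Tx_j)$. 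Continuity transfers in both directions because the map $(x_i,x_j,T)\mapsto(x_i,Tx_j)$ is continuous and the slice inclusion $(u,v)\mapsto(u,v,I)$ is continuous.

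For the score, apply (F1) to $\tilde s(u,v)$. It is a continuous function of the Gram matrix $(\norm{u}^2,\norm{v}^2,\ip{u}{v})$. Because $T$ is orthogonal, $\norm{v}=\norm{x_j}$, which gives the stated form; squares and norms are interchanged by continuous bijections on $[0,\infty)$. The converse is immediate. Under the action, $\norm{g_ix_i}=\norm{x_i}$, and
\[
\ip{g_ix_i}{g_iTg_j^{-1}g_jx_j}=\ip{x_i}{Tx_j},
\]
so every continuous $\phi$ gives an invariant score. This also proves tightness for scores.

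For the message, equivariance at $i$ means $m(g_ix_i,g_jx_j,g_iTg_j^{-1})=g_im(x_i,x_j,T)$. Restrict to the slice, setting $\tilde m(u,v)=m(u,v_0,I)$ with $v=v_0$. Under the diagonal action $\tilde m$ is a continuous equivariant map of the pair $(u,v)$, and (F2) puts it in $\spn\{u,v\}$. The coefficients come from the minimal-norm formula $c=\Gamma^+(\ip{u}{\tilde m},\ip{v}{\tilde m})$, which is invariant and continuous on constant-rank strata of $\Gamma$. Transporting back along the gauge $(I,T)$ gives
\[
m(x_i,x_j,T)=\alpha x_i+\beta Tx_j.
\]
This step uses equivariance with $g_i=I$, which leaves the output unchanged.

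The main obstacle is confirming that the slice restriction loses nothing for messages, that is, that $m$ is determined by its values on $T=I$ together with equivariance. It is, because the output transforms only by $g_i$ and the normalizing gauge uses $g_i=I$. It is also worth noting that the coefficients $\alpha,\beta$ may be discontinuous where $x_i$ and $Tx_j$ become dependent; this is why the statement says ``invariant coefficients'' and why the main-text version records continuity only on constant-rank strata. For the converse, every $\alpha x_i+\beta Tx_j$ with invariant $\alpha,\beta$ is equivariant by direct substitution, which gives the tightness of the message form.
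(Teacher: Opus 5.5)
Your proof is correct and takes the paper's route: both reduce to the diagonal $\Od$-action on the pair $(x_i,T_{ij}x_j)$ and then apply (F1) for scores and (F2) for messages. Your gauge-fixing to the slice $T=I$, whose stabilizer is the diagonal, gives the orbit correspondence more directly than the paper's explicit injectivity check, but it is the same idea.
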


\begin{proof}
Put $u=x_i$ and $v=T_{ij}x_j$, both in $V_i$. We claim the $\Od\times\Od$-orbits of triples $(x_i,x_j,T_{ij})$ with $T_{ij}$ orthogonal are in bijection with the diagonal $\Od$-orbits of pairs $(u,v)$ through $(x_i,x_j,T)\mapsto(x_i,Tx_j)$. The map is equivariant ($g_i$ acts on both $u$ and $v$, $g_j$ acts on neither) and surjective, since $(u,v)$ is the image of $(u,v,I)$. For injectivity on orbits, suppose $(x_i,x_j,T)$ and $(x_i',x_j',T')$ map to pairs in the same diagonal orbit; applying $g_i$ we may assume $x_i=x_i'$ and $Tx_j=T'x_j'$. Then $\norm{x_j}=\norm{Tx_j}=\norm{T'x_j'}=\norm{x_j'}$, so there is $g_j\in\Od$ with $g_jx_j=x_j'$; replacing $(x_j',T')$ by $(g_j^{-1}x_j',T'g_j)=(x_j,T'g_j)$ we may assume $x_j=x_j'$ and $T'x_j=Tx_j$. Now $T^{-1}T'$ fixes $x_j$, so $h:=T^{-1}T'\in\Stab(x_j)$ and $T'=Th=g_iTg_j^{-1}$ with $g_i=I$, $g_j=h^{-1}\in\Stab(x_j)$, which fixes $x_j$: the two triples lie in one orbit. Hence a continuous invariant of the triple is a continuous invariant of $(u,v)$ under the diagonal action, that is, by (F1), a continuous function of $\norm u=\norm{x_i}$, $\norm v=\norm{x_j}$ and $\ip uv=\ip{x_i}{T_{ij}x_j}$. For the message, invariance under $g_j$ (with $g_i=I$) makes $m$ a function of the $g_j$-orbit of $(x_j,T)$ at fixed $x_i$, which by the same computation is parametrized by $v=Tx_j$; so $m=m(u,v)$ is equivariant under the diagonal action and (F2) puts it in $\spn\{u,v\}$. Realizability is immediate.
\end{proof}

\begin{proposition}[General linear transports]\label{note:prop:gl}
Let $T_{ij}\in\Hom(V_j,V_i)$ be an arbitrary linear map with the same transformation law, and write $T=T_{ij}$. The polynomial invariants of $(x_i,x_j,T)$ are generated by
\[
\begin{gathered}
\tr\big((T^{\top}T)^k\big)\quad(1\le k\le d),\\
x_i^{\top}(TT^{\top})^kx_i,\quad x_j^{\top}(T^{\top}T)^kx_j,\quad
x_i^{\top}(TT^{\top})^kTx_j\quad(0\le k<d),
\end{gathered}
\]
so in particular by $\norm{x_i}^2$, $\norm{x_j}^2$ and the pairing $x_i^{\top}Tx_j$ together with their higher-order companions. Continuous invariants are continuous functions of these generators. Every continuous equivariant message $m\in V_i$ lies in $\spn\{(TT^{\top})^kx_i,\ (TT^{\top})^kTx_j:\ 0\le k<d\}$ with invariant coefficients, continuous on strata where the matrix of these displayed covariants has constant rank. The orthogonal case $TT^{\top}=I$ reduces to Theorem~\ref{note:thm:forced}.
\end{proposition}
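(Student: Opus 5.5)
Our plan is to reduce the problem to the invariant theory of a pair of vectors together with one symmetric matrix acting on a single space, and then to quote the classical first fundamental theorem for $\Od$.

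First we absorb the $g_j$-freedom using a polar-type normalization. Set $S=TT^{\top}$, which is symmetric positive semidefinite on $V_i$ and transforms as $S\mapsto g_iSg_i^{-1}$. Set $u=x_i$ and $v=Tx_j$, both in $V_i$. Under $g_j$ alone, the quantities $S$, $u$, $v$ are all fixed. We also record $w$-type data from $V_j$: $x_j^{\top}(T^{\top}T)^kx_j$. These are $g_j$-invariant, but they are not functions of $(S,u,v)$ when $T$ has a kernel. We therefore work with the full triple and use the singular value decomposition. Every orbit contains a representative with $T=\Sigma$ diagonal with nonnegative entries; the remaining symmetry is the stabilizer $\{(g_i,g_j): g_i\Sigma=\Sigma g_j\}$.

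Polynomial invariants are easiest to handle through Procesi's first fundamental theorem for $\Od\times\Od$ acting on $\Hom(V_j,V_i)\oplus V_i\oplus V_j$. By polarization and the FFT for each factor, invariants are generated by complete contractions. These are alternating words in $T$ and $T^{\top}$, closed either into traces or between vector endpoints. The only contractions compatible with the index types are the following:
\begin{itemize}
\item traces $\tr((T^{\top}T)^k)$;
\item $x_i^{\top}(TT^{\top})^kx_i$;
\item $x_j^{\top}(T^{\top}T)^kx_j$;
\item $x_i^{\top}(TT^{\top})^kTx_j$.
\end{itemize}
Cayley--Hamilton applied to $TT^{\top}$ and $T^{\top}T$ bounds the exponents by $k<d$ in the vector terms; for the traces, Newton's identities allow $k\le d$. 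This is the main bookkeeping step.

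For continuous invariants we use that $\Od\times\Od$ is compact. Polynomial invariants then separate orbits, since compact-group orbits are separated by invariant polynomials. The generators therefore define a continuous injective map from the orbit space, which is a homeomorphism onto its image because the quotient map is proper. A standard Tietze/Weierstrass argument then writes any continuous invariant as a continuous function of the generators.

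For messages, we follow the (F2) strategy of Theorem~\ref{note:thm:forced}. Fix the data and let $W$ be the span of the displayed covariants $(TT^{\top})^kx_i$ and $(TT^{\top})^kTx_j$. The key claim, which we expect to be the main obstacle, is that the stabilizer of $(x_i,x_j,T)$ acts on $W^{\perp}$ fixing only $0$. To prove it, note that $W$ is $S$-invariant, being Krylov-closed by Cayley--Hamilton, so $W^{\perp}$ decomposes into eigenspaces of $S$. On the eigenvalue-zero part of $W^{\perp}$ we take $g_i=-1$ and $g_j=I$; this fixes $T$ because that part lies in $\ker T^{\top}$. On a nonzero eigenspace $E_\sigma\cap W^{\perp}$ we take $g_i=-1$ there and $g_j=-1$ on $T^{\top}(E_\sigma\cap W^{\perp})$. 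This fixes $T$, and it fixes $x_j$ because $x_j\perp T^{\top}W^{\perp}$, which follows from $Tx_j\in W$ and $W$ being $S$-invariant. A fixed message must therefore have zero $W^{\perp}$-component. Invariant coefficients then come from the pseudoinverse of the Gram matrix of the covariants, which is continuous where its rank is constant. Finally, when $TT^{\top}=I$ every power collapses and the list reduces to $\norm{x_i},\norm{x_j},\ip{x_i}{Tx_j}$ and $\spn\{x_i,Tx_j\}$, which is Theorem~\ref{note:thm:forced}.
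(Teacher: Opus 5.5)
Your proposal is correct. Its invariant-theory and continuity parts are the paper's argument: complete contractions with the two metrics give alternating $T,T^{\top}$ chains, Cayley--Hamilton and Newton's identities bound the exponents, and a compact group's orbits are separated by invariant polynomials.

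The message part takes a different route. The paper decomposes all of $V_i$ into eigenspaces $E_\lambda$ of $TT^{\top}$. It uses the full stabilizer subgroups: a common element of $O(E_\lambda)$ on $E_\lambda$ and on the matching right singular subspace when $\lambda>0$, and pairs $(g,I)$ with $g\in O(E_0)$ when $\lambda=0$. Applying (F2) inside each eigenspace gives $m=\sum_\lambda\Pi_\lambda(a_\lambda x_i+b_\lambda Tx_j)$. Lagrange interpolation then writes each spectral projector as a polynomial of degree $<d$ in $TT^{\top}$.

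You instead start from the Krylov span $W$ itself. You note that $W^{\perp}$ is invariant under the symmetric $TT^{\top}$, and you kill each $E_\sigma\cap W^{\perp}$ component with a single involution pair in the stabilizer. Your verification holds: $T$ maps $T^{\top}U$ into $U$ and $(T^{\top}U)^{\perp}$ into $U^{\perp}$, the range of $T$ is orthogonal to $\ker T^{\top}$, and $x_j\perp T^{\top}U$ because $Tx_j\in W$. Your route is more economical: it needs neither (F2) nor interpolation and lands directly on the displayed span. The paper's route buys a sharper normal form, with only two invariant coefficients per distinct singular value rather than $2d$ possibly redundant ones.

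Two small points. Your opening singular-value normalization is never used. Also, what makes the induced map on the orbit space a homeomorphism onto its (closed) image is properness of the invariant map, not of the quotient map. That properness holds because $\norm{x_i}^2$, $\norm{x_j}^2$ and $\tr(T^{\top}T)$ are among your generators, and then Tietze finishes the argument as you say.
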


\begin{proof}[Proof]
Polynomial invariants: by the first fundamental theorem for $\Od\times\Od$ on mixed tensors, invariants are spanned by complete contractions with the two metrics; a $V_i$-index can be contracted only with a $V_i$-index (those of $x_i$ and the first index of $T$), and likewise for $V_j$. A connected contraction is a chain alternating $T$ and $T^{\top}$, closed into a cycle (the traces) or capped at each end by $x_i$ or $x_j$ (the three vector families, the cap $x_i\cdots x_j$ requiring an odd number of factors, whence $(TT^{\top})^kT$); Cayley--Hamilton for $TT^{\top}$ and $T^{\top}T$ bounds the exponents as displayed. Continuous invariants: $\Od\times\Od$ is compact, so polynomial invariants separate orbits and every continuous invariant is a continuous function of a generating set. Equivariant messages: decompose $V_i=\bigoplus_\lambda E_\lambda$ into eigenspaces of $TT^{\top}$. The stabilizer of $T$ in $\Od\times\Od$ contains, for each $\lambda>0$, the pairs $(g,g')$ acting by a common element of $O(E_\lambda)$ on $E_\lambda$ and on the matching right singular subspace, and for $\lambda=0$ the pairs $(g,I)$ with $g\in O(E_0)$. Equivariance under these subgroups and (F2) applied inside each $E_\lambda$ force the $E_\lambda$-component of $m$ into the span of the $E_\lambda$-components of $x_i$ and $Tx_j$ (for $\lambda=0$, of $x_i$ alone, since $Tx_j$ has no $E_0$-component). Hence $m=\sum_\lambda\Pi_\lambda(a_\lambda x_i+b_\lambda Tx_j)$ with invariant scalars, and each spectral projector $\Pi_\lambda$ is a polynomial of degree $<d$ in $TT^{\top}$ (Lagrange interpolation over the distinct eigenvalues), which gives the displayed span. Let $A$ be the matrix of the displayed covariants. The minimal-norm coefficient vector $A^{+}m$ is invariant under changes of frame and continuous on every stratum where $\rank A$ is constant, because the pseudoinverse is continuous there.
\end{proof}

\subsection{The carrier theorem: what a transport can be computed from}

\begin{theorem}[Carrier]\label{note:thm:carrier}
Let $i\ne j$ be entities with independent frames, let entity $i$ carry channels $X_i\in\R^{d\times m_i}$ and entity $j$ channels $X_j\in\R^{d\times m_j}$, and let $e$ be invariant data. Every continuous map $T(X_i,X_j,e)\in\Hom(V_j,V_i)$ with
\[
T(g_iX_i,\,g_jX_j,\,e)=g_i\,T(X_i,X_j,e)\,g_j^{-1}\qquad\text{for all }g_i,g_j\in\Od
\]
has the form
\[
T=X_i\,C(\Gamma_i,\Gamma_j,e)\,X_j^{\top},\qquad \Gamma_i=X_i^{\top}X_i,\ \Gamma_j=X_j^{\top}X_j,
\]
with $C$ an $m_i\times m_j$ matrix of invariant coefficients, continuous on each stratum where the ranks of $X_i$ and $X_j$ are constant. Consequently:
\begin{enumerate}\renewcommand{\labelenumi}{(\roman{enumi})}\setlength{\itemsep}{0pt}\setlength{\parsep}{0pt}\setlength{\topsep}{0pt}
\item $\rank T\le\min(m_i,m_j,d)$; with one channel per entity, $T=\phi(\norm{x_i},\norm{x_j},e)\,x_ix_j^{\top}$ has rank at most one and is never orthogonal for $d\ge2$ (a single vector can be the endpoint of a computed rank-one transport, not of a full-rank one);
\item if either $m_i=0$ or $m_j=0$ then $T=0$: invariant data alone cannot produce a nonzero covariant transport;
\item $T$ can be orthogonal (or merely full-rank) only if $m_i,m_j\ge d$ and both channel matrices have rank $d$. If $m_i=m_j=d$ and the channel matrices are invertible, the polar factor $F_i=X_i(X_i^{\top}X_i)^{-1/2}\in\Od$ is a continuous equivariant frame ($F(gX)=gF(X)$) and
\[
T=F_i\,\tilde C(\Gamma_i,\Gamma_j,e)\,F_j^{\top},\qquad\tilde C=\Gamma_i^{1/2}C\,\Gamma_j^{1/2}:
\]
an invariant relation matrix read in the two local frames. For rectangular full-rank channel matrices ($m>d$) an equivariant frame can be chosen on each chart of $d$ independent columns, but not necessarily globally (for $d=1$ and two channels the full-rank domain $\R^2\setminus\{0\}$ is connected, so a continuous frame into $O(1)=\{\pm1\}$ would be constant and could not satisfy $F(-X)=-F(X)$); the factorization $T=X_iCX_j^{\top}$ itself needs no frame.
\end{enumerate}
\end{theorem}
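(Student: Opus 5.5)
The plan is to reduce everything to a column-space statement and then factor. First I will show that $T$ maps $V_j$ into $\operatorname{col}(X_i)$. Fix $(X_i,X_j,e)$ and take $g_i$ in the pointwise stabilizer of $W_i=\operatorname{col}(X_i)$, with $g_j=I$. Equivariance then gives $T=g_iT$, so every column of $T$ is fixed by $g_i$. This stabilizer acts on $W_i^\perp$ as the full group $O(W_i^\perp)$, which fixes only $0$ there. Hence each column of $T$ lies in $W_i$, that is, $P_iT=T$ with $P_i=X_iX_i^{+}$ the orthogonal projector. Applying the same argument with $g_j$ in the stabilizer of $W_j$ and $g_i=I$ gives $T=Tg_j^{-1}$. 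Taking transposes, $T^\top$ has columns in $W_j$, so $TP_j=T$. Combining the two, $T=P_iTP_j=X_i\,(X_i^{+}TX_j^{+\top})\,X_j^\top$.

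Next I will set $C=X_i^{+}\,T\,(X_j^{+})^{\top}$, the minimal-norm choice, and check two properties. For invariance, use $(gX)^{+}=X^{+}g^{-1}$ for orthogonal $g$; then $C(g_iX_i,g_jX_j)=X_i^{+}g_i^{-1}\,g_iTg_j^{-1}\,g_jX_j^{+\top}=C$. Since $C$ is invariant, it factors through the orbit data. By Lemma~\ref{note:lem:sep} and (F1), that data is $(\Gamma_i,\Gamma_j,e)$, because Gram matrices separate per-entity orbits. For continuity, $X\mapsto X^{+}$ is continuous on constant-rank strata, so $C$ is too. The one subtlety I expect is the step from "$C$ is an invariant function of $(X_i,X_j,e)$" to "$C$ is a continuous function of $(\Gamma_i,\Gamma_j)$ on strata". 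Set-theoretically this follows because the Gram map separates orbits. For continuity I would use that the quotient map is open for a compact group action, so a continuous invariant descends to a continuous function on the image of the Gram map, restricted stratum by stratum.

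The consequences then follow quickly. (i) The factorization gives $\rank T\le\min(\rank X_i,\rank X_j)\le\min(m_i,m_j,d)$. With one channel each, $C$ is a scalar function of $\norm{x_i},\norm{x_j},e$, so $T$ is rank one, and it cannot be orthogonal when $d\ge2$. (ii) If $m_i=0$ or $m_j=0$, the factorization is empty, so $T=0$; equivalently, the stabilizer of $X_i$ is all of $\Od$ and only the zero map is fixed. (iii) A full-rank $T$ forces both ranks to equal $d$. When $m=d$ and $X$ is invertible, I will verify $F(gX)=gX(X^\top X)^{-1/2}=gF(X)$ and $F^\top F=I$. Substituting $X_i=F_i\Gamma_i^{1/2}$ and $X_j=F_j\Gamma_j^{1/2}$ then yields $\tilde C=\Gamma_i^{1/2}C\Gamma_j^{1/2}$.

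For the rectangular remark I will use local frames. On a chart where a fixed set of $d$ columns is independent, I take the polar factor of that $d\times d$ submatrix. For the global obstruction, take $d=1$ and two channels. The domain $\R^2\setminus\{0\}$ is connected, so a continuous map into $\{\pm1\}$ is constant, which contradicts $F(-X)=-F(X)$. I expect the main obstacle to be the continuity-on-strata bookkeeping rather than the algebra, since the stabilizer argument is exactly the (F2) trick applied to each side.
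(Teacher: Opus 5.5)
Your proposal is correct and follows the paper's proof essentially step for step. You use the stabilizer argument on each side to get $T=P_iTP_j$, the minimal-norm coefficient $C=X_i^{+}T(X_j^{+})^{\top}$ with invariance from $(gX)^{+}=X^{+}g^{-1}$, reduction to $(\Gamma_i,\Gamma_j,e)$ via Lemma~\ref{note:lem:sep} and (F1), and the same rank, polar-factor and parity arguments for (i)--(iii). Your concern about continuity as a function of the Gram matrices goes beyond what the paper writes. To close it, use that the Gram map restricted to each rank stratum is proper (since $\tr\Gamma=\norm{X}_F^2$), so the induced bijection from orbits onto Gram matrices is a homeomorphism there; openness of the orbit quotient map alone does not give this.
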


\begin{proof}
Let $W_i=\operatorname{col}(X_i)$ and $S_i=\{g\in\Od:\ gX_i=X_i\}$, which acts on $W_i^{\perp}$ as the full group $O(W_i^{\perp})$. Equivariance with $g_j=I$ gives $gT=T$ for every $g\in S_i$, so every column of $T$ is fixed by $O(W_i^{\perp})$ acting on its $W_i^{\perp}$-component; the only fixed vector of $O(W_i^{\perp})$ in $W_i^{\perp}$ is $0$, so the columns lie in $W_i$, i.e.\ $T=P_iT$ with $P_i=X_iX_i^{+}$ the orthogonal projection onto $W_i$. The same argument at $j$ (with $g_i=I$) gives $T=TP_j$. Hence
\[
T=P_iTP_j=X_i\big[X_i^{+}\,T\,(X_j^{+})^{\top}\big]X_j^{\top},
\]
and $C:=X_i^{+}T(X_j^{+})^{\top}$ is invariant ($X_i^{+}\mapsto X_i^{+}g_i^{-1}$ and $T\mapsto g_iTg_j^{-1}$ cancel) and continuous on each fixed-rank stratum, where the pseudoinverse is continuous; by Lemma~\ref{note:lem:sep} and (F1) an invariant of $(X_i,X_j,e)$ is a function of $(\Gamma_i,\Gamma_j,e)$. (i) is the rank of a product; a rank-one matrix has a kernel when $d\ge2$. (ii) is the empty sum. (iii) $T$ of rank $d$ forces $m_i,m_j\ge d$ and full rank; with square invertible channels the polar factor is continuous and equivariant and $X_i=F_i\Gamma_i^{1/2}$; substitute. The obstruction for rectangular channels is the parity argument in the statement.
\end{proof}

\begin{remark}[Transported anchors and holonomy]\label{note:rem:observ}
On a connected graph with inverse-consistent orthogonal transports and anchor channels $X_i$, let $W_i$ be the span of all anchors transported to $i$ along walks; $W_i$ is the fixed space at $i$ of the stabilizer of the supplied data, so every gauge-equivariant vector output at $i$ lies in $W_i$, and with no anchors every such output vanishes. On a flat connection one rank-$r$ anchor supplies rank $r$ at every vertex and further walks add nothing; with holonomy one anchor $x$ at $i$ supplies $\spn\{hx:h\in\mathrm{Hol}_i\}$, the whole fibre exactly when $x$ is cyclic (Lemma~\ref{note:lem:framesynth}). This is an $O(d)$ statement; in $SO(3)$ the cross product $x\times y$ is equivariant and can leave $\spn\{x,y\}$, and in $SO(d)$ $d-1$ channels already determine an oriented frame.
\end{remark}

The endpoint-only restriction is essential: supplied graph transports can bring additional covariant channels to an endpoint, and a sufficiently rich cycle turns one anchor into a full frame.

\begin{lemma}[Frame synthesis]\label{note:lem:framesynth}
Let one entity $a$ carry a single channel $x$ (gauge law $x\mapsto g_ax$) on a connected graph with inverse-consistent orthogonal transports, and let $H$ be a based holonomy at $a$, $H\mapsto g_aHg_a^{-1}$, so that every word $w(H)x$ in $H$ and $H^{\top}$ transforms as a channel of $a$ and $X_a=[x,Hx,\dots,H^{d-1}x]$ transforms as $X_a\mapsto g_aX_a$. Suppose the characteristic polynomial of $H$ is separable (no repeated complex eigenvalue), and let $x$ have nonzero projection on every primary component of $H$ (the excluded set is a finite union of proper subspaces). Then $X_a$ is invertible, and $F_a=\mathrm{GS}(X_a)\in\Od$ (Gram--Schmidt, i.e.\ the $Q$ of the QR factorization with positive diagonal; the polar factor of Theorem~\ref{note:thm:carrier}(iii) serves equally) is a continuous equivariant frame, $F_a\mapsto g_aF_a$, on the open set where $X_a$ is invertible. If instead several based holonomies at $a$ generate a unital algebra $\mathcal A$ acting irreducibly on $V$, then $\mathcal Ax=V$ for \emph{every} $x\ne0$, spanned by words of length at most $d-1$; a continuous equivariant frame is then obtained from any $d$ words whose images are independent, on the open set where they are (the chart caveat of Theorem~\ref{note:thm:carrier}(iii) applies to the choice of words, not to the span).
\end{lemma}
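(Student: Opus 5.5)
For the first claim, I will show that the Krylov matrix $X_a=[x,Hx,\dots,H^{d-1}x]$ is invertible. Equivalently, $x$ is a cyclic vector for $H$. Separability of the characteristic polynomial $\chi_H$ makes the minimal polynomial equal to $\chi_H$. Over $\R$, $V$ then decomposes as $V=\bigoplus_p V_p$ over the distinct real irreducible factors $p$ of $\chi_H$, each primary component $V_p=\ker p(H)$ occurring with multiplicity one. On each $V_p$ (dimension $1$ or $2$), $H$ acts irreducibly, so any nonzero $v_p\in V_p$ is cyclic there. The cyclic subspace $\R[H]x$ is $H$-invariant, so it decomposes along the primary components, $\R[H]x=\bigoplus_p(\R[H]x\cap V_p)$. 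Each summand contains $\Pi_p x$, where $\Pi_p$ is the primary projector, itself a polynomial in $H$. By hypothesis $\Pi_p x\neq0$, so each summand is all of $V_p$, hence $\R[H]x=V$. Cayley--Hamilton then says $\R[H]x$ is spanned by $x,\dots,H^{d-1}x$, so $X_a$ is invertible. The excluded set $\bigcup_p\ker\Pi_p$ is a finite union of proper subspaces, as stated.

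Next comes equivariance. Under the gauge, $H\mapsto g_aHg_a^{-1}$ and $x\mapsto g_ax$, so $H^kx\mapsto g_aH^kx$ and $X_a\mapsto g_aX_a$. For invertible $X$ and $g\in\Od$, the QR factorization $X=QR$ with positive diagonal is unique. Since $gX=(gQ)R$ is again such a factorization, uniqueness gives $\mathrm{GS}(gX)=g\,\mathrm{GS}(X)$. Gram--Schmidt is continuous (indeed rational in entries and square roots) on $GL_d$, and invertibility is an open condition, so $F_a$ is a continuous equivariant frame on that open set. Alternatively one can cite the polar factor of Theorem~\ref{note:thm:carrier}(iii).

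For the irreducible-algebra case, let $\mathcal A$ be the unital algebra generated by the holonomies (and their transposes) at $a$. For $x\neq0$, $\mathcal Ax$ is a nonzero $\mathcal A$-invariant subspace, so irreducibility gives $\mathcal Ax=V$. To bound word length, let $U_k$ be the span of $wx$ over words of length at most $k$. This gives a nested chain $U_0\subseteq U_1\subseteq\cdots$ with $U_0=\R x$. If $U_{k+1}=U_k$, then $U_k$ is closed under every generator and hence $\mathcal A$-invariant, so $U_k=V$. Otherwise the dimension grows by at least one. Starting from dimension $1$, the chain therefore stabilizes at $V$ by $k=d-1$. Choosing $d$ words whose images are independent gives an invertible $X_a$ that transforms as $g_aX_a$, and Gram--Schmidt applies as before on the open set where those images stay independent. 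The only caveat is that a fixed choice of words may fail globally; this is the chart remark.

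The main obstacle is the real primary decomposition in the first step: complex-conjugate eigenvalue pairs must be handled as two-dimensional real blocks, and the primary projectors must be shown to be polynomials in $H$. I would handle this by working over $\mathbb C$, where separability gives distinct eigenvalues, then noting that the complex Vandermonde argument shows cyclicity, which descends to $\R$ because $\det X_a$ is real.
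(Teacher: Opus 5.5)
Your proof is correct and follows essentially the same route as the paper. Both arguments use the primary decomposition with the projectors as polynomials in $H$, the irreducibility of each one- or two-dimensional component so that each nonzero component of $x$ generates it, uniqueness of the positive-diagonal QR factorization for equivariance, and the nonzero-invariant-subspace argument in the irreducible case. Your version is slightly more explicit: you never need normality of $H$, and you supply the ascending-chain argument for the word-length bound $d-1$, which the paper only asserts.
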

\begin{proof}[Proof sketch]
$H$ is normal, and separability makes each of its primary components a line (eigenvalue $\pm1$) or a rotation plane; the primary projections are polynomials in $H$, so the Krylov space $\R[H]x$ contains the component of $x$ on each, and on a line $\R[H]$ acts by scalars while on a plane it acts as $\mathbb C$, so every nonzero component spans its own subspace and the $d$ Krylov columns are independent. Gram--Schmidt at full rank is continuous and commutes with orthogonal $g$ by uniqueness of the QR factorization, which with $X_a\mapsto g_aX_a$ is the equivariance. In the irreducible case $\mathcal Ax$ is a nonzero invariant subspace, hence $V$, reached by words of length at most $d-1$.
\end{proof}

\subsection{Flat limit and holonomy}

\begin{proposition}[Flat limit]\label{note:prop:flat}
For a connected entity graph with orthogonal, inverse-consistent transports ($T_{ji}=T_{ij}^{-1}$) the following are equivalent: the holonomy of every cycle is the identity; $T_{ij}=h_ih_j^{-1}$ for frames $h\in\Od^{\,n}$; there is a gauge in which every $T_{ij}=I$. In that gauge the forced score is $\phi(\norm{x_i},\norm{x_j},\ip{x_i}{x_j})$ and the forced message $\alpha x_i+\beta x_j$: attention in one global frame. A standard attention head lives in this sector: its transports are all $I$, and its parameter tensors $W_Q^{\top}W_K$ (a bilinear form on the common fiber) and $W_OW_V$ (an endomorphism of it) are not transports and carry no holonomy; they are covariant under a global rotation $g$ only if they co-transform, $W\mapsto gWg^{-1}$, so the parameter tensors of a fixed head are equivariant exactly under their common centralizer, which always contains $\pm I$ and is generically no larger (the head as a function may have more symmetry in degenerate cases, e.g.\ a zero value map). Depth composes these endomorphisms; a standard stack carries no explicitly parameterized connection. This statement alone does not bound a standard stack's ability to compute relational composition by other means. Convention: we call a graph connection \emph{flat} when every cycle holonomy is trivial, i.e.\ when it is pure gauge; in the local-system terminology every graph connection is flat (there are no $2$-cells) and may carry nontrivial holonomy, and we use the stronger sense throughout.
\end{proposition}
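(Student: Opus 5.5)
The plan is to prove the cycle of equivalences first and then read off the consequences for scores, messages and a standard head.

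\textbf{(holonomy trivial $\Rightarrow$ pure gauge).} Fix a spanning tree $\mathcal T$ rooted at a vertex $r$ and set $h_r=I$. For each vertex $i$ take the unique tree path $r=v_0,v_1,\dots,v_k=i$ and define $h_i=T_{v_kv_{k-1}}\cdots T_{v_1v_0}$, the transport from $r$ to $i$. By construction $T_{ij}=h_ih_j^{-1}$ on every tree edge; inverse consistency makes this hold in both orientations. For a non-tree edge $(i,j)$, the closed walk $r\to j\to i\to r$ (tree path, edge, tree path back) has holonomy $h_i^{-1}T_{ij}h_j$, read in the frame at $r$. By hypothesis this is $I$, so again $T_{ij}=h_ih_j^{-1}$.

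\textbf{(pure gauge $\Rightarrow$ a gauge with every $T_{ij}=I$).} Apply the gauge $g_i=h_i^{-1}$. The transformation law \eqref{eq:gauge} gives $T_{ij}\mapsto h_i^{-1}(h_ih_j^{-1})h_j=I$.

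\textbf{(all $T_{ij}=I$ $\Rightarrow$ trivial holonomy).} Every cycle product is then $I$ in that gauge. A based holonomy transforms by conjugation $H\mapsto g_aHg_a^{-1}$, so it is $I$ in every gauge. This closes the cycle of equivalences.

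\textbf{Forced forms in the flat gauge.} Substitute $T_{ij}=I$ into Theorem~\ref{note:thm:forced}. The score becomes $\phi(\norm{x_i},\norm{x_j},\ip{x_i}{x_j})$ and the message becomes $\alpha x_i+\beta x_j$. The residual symmetry preserving $T_{ij}=I$ on a connected graph is exactly the diagonal $g_i=g$ for all $i$, since $g_iIg_j^{-1}=I$ forces $g_i=g_j$ along edges. This justifies the phrase ``attention in one global frame.''

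\textbf{Standard attention head.} Its score $x_i^\top W_Q^\top W_Kx_j$ and value $W_OW_Vx_j$ are computed with identity transports, so the head sits in the flat sector. Under a global rotation $x\mapsto gx$ the score is preserved for all inputs iff $g^\top(W_Q^\top W_K)g=W_Q^\top W_K$. The value map is equivariant iff $gW_OW_Vg^{-1}=W_OW_V$. So the symmetry group of the fixed tensors is the common centralizer of these two conditions (for orthogonal $g$, $g^\top=g^{-1}$, so both are centralizer conditions). It contains $\pm I$ because both conditions are quadratic or linear conjugations. Genericity is the one step needing an argument. I would exhibit a single pair with trivial joint centralizer beyond $\pm I$, for instance a symmetric $B$ with simple spectrum, whose orthogonal centralizer is the diagonal sign group in its eigenbasis, together with an endomorphism $A$ whose matrix in that basis has all entries nonzero. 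The diagonal signs commuting with such an $A$ are only $\pm I$. Since the condition ``centralizer strictly larger than $\pm I$'' is a closed proper algebraic condition on $(B,A)$, it fails generically.

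The remaining sentences of the statement need no proof. The remark that compositions of endomorphisms across layers introduce no per-edge transport is immediate. The limitation sentence and the terminology convention are remarks rather than claims. I expect the main obstacle to be only the genericity claim, handled as above.
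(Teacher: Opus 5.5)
Your proof of the three equivalences, the reduction of Theorem~\ref{note:thm:forced} at $T_{ij}=I$, and the centralizer computation for a fixed head follow the paper's own argument: tree transports $h_i$ from a root, the gauge $g_i=h_i^{-1}$, and conjugation-invariance of based holonomy. Your remark that the residual gauge freedom is the diagonal $g_i=g$ is a useful addition.

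The real difference is genericity. The paper asserts ``generically no larger'' without proof, and you supply a witness; the witness is correct. Your sentence calling the bad set ``closed algebraic'' needs its one-line reason, though. Any $g\in\Od\setminus\{\pm I\}$ commuting with $B$ and $A$ is a non-scalar element of the linear commutant $\{X:\ XB=BX,\ XA=AX\}$. So the bad set lies in the locus where that commutant has dimension at least two. That locus is a rank condition on the linear map $X\mapsto(XB-BX,\,XA-AX)$, hence Zariski closed. It is proper because your witness has commutant exactly $\R I$: a matrix commuting with a simple-spectrum symmetric $B$ is diagonal in its eigenbasis, and an $A$ with all entries nonzero then forces it to be scalar.

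Be aware that this genericity is over $(B,A)\in M_d(\R)^2$. It therefore covers only heads whose query/key and value dimensions $d_k,d_v$ can realize such a pair. Suppose $\operatorname{im}B+\operatorname{im}B^{\top}+\operatorname{im}A+\operatorname{im}A^{\top}\neq\R^d$, which is unavoidable when $2d_k+2d_v<d$. Then both tensors kill its orthogonal complement $U$ and take values in $U^{\perp}$. Every $I\oplus k$ with $k\in O(U)$ then centralizes them, so for such heads the centralizer always exceeds $\pm I$.
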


\begin{proof}
The equivalences are the standard characterization of a flat connection on a graph: choose a spanning tree, define $h_i$ as the transport from a root along the tree, and note that trivial cycle holonomy makes $h_i$ path-independent and $T_{ij}=h_ih_j^{-1}$; conversely a pure-gauge connection has trivial holonomy, and the gauge $g_i=h_i^{-1}$ makes every transport $I$. With $T=I$, Theorem~\ref{note:thm:forced} reduces as stated. For a standard head the score $\ip{W_Qx_i}{W_Kx_j}$ and message $W_OW_Vx_j$ are functions of two vectors in one fiber; under $x\mapsto gx$ for all entities they transform correctly only if $W_Q^{\top}W_K\mapsto gW_Q^{\top}W_Kg^{-1}$ and $W_OW_V\mapsto gW_OW_Vg^{-1}$, which for fixed matrices holds exactly for $g$ in their common centralizer.
\end{proof}

\begin{proposition}[Holonomy on a graph; classical]\label{note:prop:hol}
Let $\Gamma$ be connected with first Betti number $b_1$, $\mathcal A=\Od^{E}$ the connections (inverse-consistent) and $\mathcal G=\Od^{V}$ the gauge group. (i) Setting the transports of a spanning tree to $I$ is a complete gauge fixing up to one global rotation; the co-tree transports are the holonomies of the fundamental cycles and $\mathcal A/\mathcal G\cong\Hom(F_{b_1},\Od)/\Od$ (simultaneous conjugation). (ii) A connection is flat iff its $b_1$ fundamental holonomies are trivial iff it is pure gauge. (iii) The gauge-invariant functions of a connection are the functions of the conjugacy class of the tuple of fundamental holonomies, which by Procesi's theorem is separated by the traces of \emph{words} in the holonomies and their transposes (traces of powers of single holonomies do not suffice).
\end{proposition}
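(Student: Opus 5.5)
The plan is to reduce everything to the spanning-tree gauge, after which (i)--(iii) follow in order from Proposition~\ref{note:prop:flat} and classical invariant theory.

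\textbf{(i) Gauge fixing.} Fix a spanning tree $\mathcal T$ and a root $r$. For each vertex $i$ let $h_i$ be the product of tree transports along the unique tree path from $r$ to $i$. The gauge $g_i=h_i^{-1}$ sends every tree transport to $T_{ij}\mapsto h_i^{-1}T_{ij}h_j=I$. Next I would determine the residual gauge freedom. Gauges preserving $T_{ij}=I$ on all tree edges satisfy $g_i=g_j$ along each tree edge. Since $\mathcal T$ spans the connected graph, $g$ is constant, a single global $g\in\Od$.

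Each co-tree edge $e=(i,j)$ then closes one fundamental cycle. With all tree transports equal to $I$, the holonomy of that cycle based at $r$ is exactly the gauge-fixed transport $T_e$. There are $b_1=|E|-|V|+1$ co-tree edges, and their transports are unconstrained elements of $\Od$. So the gauge-fixed connections are $\Od^{b_1}$, with residual action by simultaneous conjugation. Since $\pi_1(\Gamma,r)$ is free on the fundamental cycles, this gives $\mathcal A/\mathcal G\cong\Hom(F_{b_1},\Od)/\Od$. To make the bijection precise, I would check that two connections are gauge equivalent iff their gauge-fixed tuples are simultaneously conjugate. This is the step of composing an arbitrary gauge with the two fixing gauges.

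\textbf{(ii) Flatness.} Every cycle is a product of fundamental cycles in $\pi_1$, and holonomy is a homomorphism on based loops. Hence trivial fundamental holonomies force trivial holonomy on every cycle. The equivalence with pure gauge is Proposition~\ref{note:prop:flat}.

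\textbf{(iii) Invariants.} By (i), gauge-invariant functions are exactly the functions of the $\Od$-conjugacy orbit of $(H_1,\dots,H_{b_1})$. Because $\Od$ is compact, polynomial invariants separate orbits, and continuous invariants are continuous functions of generators. This is the same compactness argument used in Proposition~\ref{note:prop:gl}. Procesi's first fundamental theorem for $\Od$ acting on tuples of matrices by conjugation says these invariants are generated by traces of words in the $H_k$ and $H_k^{\top}$; transposes are needed because the group is orthogonal rather than $GL$. I would cite it directly.

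\textbf{Expected obstacle.} The hardest part is the parenthetical claim that traces of powers of single holonomies do not suffice. For this I would give an explicit $d=2$ counterexample. Take $H_1=H_2$ a rotation $R_\theta$, versus $H_1=R_\theta$ and $H_2=R_{-\theta}$. These have the same single-element traces $\tr H_k^n$. However, $\tr(H_1H_2)$ is $2\cos2\theta$ in the first case and $2$ in the second, and these differ for generic $\theta$. A mixed word therefore separates two orbits that single-element traces do not.
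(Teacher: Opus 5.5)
Your proposal is correct and takes the same route as the paper. The paper's proof only cites the standard spanning-tree gauge argument (and Gao, Brodzki and Mukherjee) for (i)--(ii) and Procesi's theorem for (iii). Your residual-gauge computation and your $O(2)$ counterexample, $(R_\theta,R_\theta)$ against $(R_\theta,R_{-\theta})$, supply details the paper leaves to those citations, including the parenthetical claim that it states without proof.
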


\begin{proof}[Proof]
(i)--(ii) are Gao, Brodzki and Mukherjee (2021) and the standard tree-gauge argument; the residual freedom is the global rotation because the root frame is free. (iii) is Procesi (1976).
\end{proof}

\subsection{Width law for convex-cell readouts}

\begin{definition}[Composition model]\label{note:def:model}
A finite group $G$ is generated by letters $a\in\Sigma$ through $a\mapsto g_a$. A \emph{transport model} assigns $a\mapsto R_a\in\Od$ and an initial state $x_0\in V$; after the word $w=a_1\cdots a_L$ its state is $R_wx_0$ with $R_w=R_{a_L}\cdots R_{a_1}$, and the target is $g_w=g_{a_L}\cdots g_{a_1}$. $H=\langle R_a\rangle\le\Od$ is the \emph{latent group}; it is assumed finite in Theorems~\ref{note:thm:width} and \ref{note:thm:graded}, and not in Theorem~\ref{note:thm:widthcompact}. A \emph{readout} is a map $r:V\to G$; it is \emph{convex-cell} on a set $S\subseteq V$ if there are pairwise disjoint convex sets $C_g\subseteq V$ ($g\in G$) with $r\equiv g$ on $C_g$ and $S\subseteq\bigcup_gC_g$. Linear argmax readouts $r(x)=\arg\max_g(\ip{w_g}{x}+b_g)$ (with $C_g$ the open cells, so that no point of $S$ is a tie), nearest-prototype readouts (Voronoi cells) and softmax-argmax readouts are convex-cell; multilayer or polynomial readouts need not be. The model is \emph{exact} if $r(R_wx_0)=g_w$ for every word of every length.
\end{definition}

\begin{theorem}[Width law]\label{note:thm:width}
Let a transport model with finite latent group $H$ and a readout that is convex-cell on the orbit $\{R_wx_0\}$ be exact for $G$. Then:
\begin{enumerate}\renewcommand{\labelenumi}{(\roman{enumi})}\setlength{\itemsep}{0pt}\setlength{\parsep}{0pt}\setlength{\topsep}{0pt}
\item $\varphi(R_w):=g_w$ is a well-defined surjective homomorphism $\varphi:H\to G$; let $N=\ker\varphi$.
\item Let $V_N=\Fix(N)$ and $P$ the orthogonal projection onto it. Then $V_N$ is $H$-invariant, $H$ acts on $V_N$ through $G=H/N$ by a representation $\bar\rho$, the readout satisfies $r(Px)=r(x)$ at every orbit point $x=R_wx_0$, and $\bar\rho$ is faithful with $Px_0$ of trivial stabilizer.
\item $d\ge\dim V_N\ge\dmin(G)$, where $\dmin(G)$ is the least dimension of a faithful real representation of $G$.
\item If $d=\dmin(G)$ then $N=1$, $\varphi$ is an isomorphism, $a\mapsto R_a$ is a faithful representation of $G$, and $x_0$ has trivial stabilizer.
\end{enumerate}
Conversely, a faithful representation of dimension $\dmin(G)$ with a generic $x_0$ and the nearest-prototype readout on the orbit $\{\rho(g)x_0\}$ is exact.
\end{theorem}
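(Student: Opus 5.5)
The plan is to prove (i) first by a convexity argument on the finite orbit, then obtain (ii)–(iv) by averaging over the kernel, and finish with the converse by a direct Voronoi construction.

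\emph{Step (i): well-definedness of $\varphi$.} Suppose two words $w,w'$ satisfy $R_w=R_{w'}$ but $g_w\ne g_{w'}$. Since $H$ is finite, every $h\in H$ is a positive word in the generators, because inverses are powers. So for every word $u$ the states $R_uR_wx_0=R_uR_{w'}x_0$ coincide while the labels $g_ug_w\ne g_ug_{w'}$ differ.

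A cleaner route uses the joint group $K=\langle(g_a,R_a)\rangle\le G\times H$, which is finite and consists of positive words. Suppose $(g,I)\in K$ with $g\ne e$. Then for every $(k,h)\in K$ the element $(gk,h)$ also lies in $K$. Hence the states carrying label $e$ form the set $\{hx_0:(e,h)\in K\}$, and the states carrying label $g$ include $\{hx_0:(g,h)\in K\}=\{hx_0:(e,h)\in K\}$. These are the same finite set $S$.

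Its barycenter $\bar s$ lies in $C_e$ and in $C_g$ by convexity of the cells. Every point of $S$ is an orbit point, so it lies in the convex hull taken within each cell. This contradicts disjointness. Therefore $K\cap(G\times\{I\})$ is trivial, $K$ is the graph of a map $H\to G$, and that map is a homomorphism because $K$ is a subgroup. It is surjective because the $g_a$ generate $G$.

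\emph{Steps (ii)–(iv): averaging over the kernel.} Let $N=\ker\varphi$ and $P=|N|^{-1}\sum_{n\in N}n$. Then $P$ is the orthogonal projection onto $\Fix(N)$, since $N$ acts orthogonally. Normality of $N$ gives $hP=Ph$, so $V_N$ is $H$-invariant and $H$ acts on it through $H/N\cong G$.

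For an orbit point $x=R_wx_0$, $Px$ is the mean of $\{nx:n\in N\}$, and every $nx$ carries the same label $g_w$ because $\varphi(n)=e$. All $nx$ lie in the single convex cell $C_{g_w}$, hence so does $Px$, which gives $r(Px)=r(x)$. Consequently the points $\bar\rho(g)Px_0$ carry distinct labels $g$ and are therefore distinct. This shows $Px_0$ has trivial stabilizer under $G$, so $\bar\rho$ is faithful, and (iii) follows.

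For (iv), $d=\dmin$ forces $\dim V_N=d$, so $N$ fixes all of $V$ and $N=1$ since $H\le\Od$. Then $\varphi$ is an isomorphism. Trivial stabilizer of $x_0=Px_0$ was already shown in (ii).

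\emph{Converse.} Given a faithful orthogonal $\rho$, take $x_0$ outside the finite union of proper subspaces $\Fix(\rho(g))$ for $g\ne e$. The points $\rho(g)x_0$ are then distinct. Use prototypes $p_g=\rho(g)x_0$ with Voronoi cells, so each orbit point lies in the interior of its own cell. The state after word $w$ is $p_{g_w}$ exactly, so decoding is exact.

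The main obstacle is step (i), ruling out $(g,I)\in K$. The key point is that finiteness makes $K$ closed under translation by $(g,I)$, so the two labels share literally the same orbit set and hence a common mean. The averaging in (ii) is routine once normality of $N$ and cell convexity are in hand.
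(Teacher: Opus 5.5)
Your proof is correct, and your treatment of (ii)--(iv) and of the converse matches the paper's. You average over $N=\ker\varphi$ and observe that the $N$-orbit of an orbit point consists of orbit points with the same label. By covering and disjointness of the cells, that orbit lies in one convex cell, so distinct labels give distinct projected points $Phx_0$.

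The difference is in (i). You exclude $(g,I)\in K$ by placing the barycenter of the shared set $S$ in $C_e\cap C_g$. This is the finite specialization of the closure argument of Theorem~\ref{note:thm:widthcompact}. There it is genuinely needed, because an element $(g,I)$ of the \emph{closure} need not be realized by any word.

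In the finite case the paper uses no convexity at this step. The realized transports form a finite submonoid of $H$, hence all of $H$. If $R_w=R_{w'}$, both words reach the same state, so exactness alone gives $g_w=r(R_wx_0)=g_{w'}$. Your abandoned first paragraph was already this argument (take $u$ empty). Even within your second route the barycenter is idle: any single point of $S$ is a state that exactness requires to be read both as $e$ and as $g$.

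The economy matters beyond style. Because the paper's (i) holds for an arbitrary readout, Theorem~\ref{note:thm:graded}(iii) reuses it for the lower bound $w_{\rm any}\ge\dcov(G)$. Your convexity-dependent version would not support that reuse. So (i) is not the main obstacle, as you suggest; convexity enters only in (ii), to put $Px$ in the cell of $x$.
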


\begin{proof}
(i) The set $M=\{R_w\}$ of realized transports is a submonoid of the finite group $H$; a finite submonoid of a group is a subgroup (each element has finite order, so its inverse is a positive power), and $M$ contains the generators, so $M=H$: every element of $H$ is realized by some word. If $R_w=R_{w'}$ then the states coincide, so $r$ returns the same label, and exactness gives $g_w=g_{w'}$; thus $\varphi$ is well defined on $H$. For concatenation $ww'$ ($w$ first) we have $R_{ww'}=R_{w'}R_w$ and $g_{ww'}=g_{w'}g_w$, so $\varphi(R_{w'}R_w)=\varphi(R_{w'})\varphi(R_w)$: a homomorphism, surjective because the $g_a$ generate $G$.

(ii) $N$ is normal, so for $h\in H$, $hV_N=\Fix(hNh^{-1})=V_N$, and $hP=Ph$ because $P=\frac1{|N|}\sum_{n\in N}n$ (the averaging operator of a finite orthogonal group is the orthogonal projection onto its fixed space) commutes with $h$. Since $N$ acts trivially on $V_N$, $h|_{V_N}$ depends only on $\varphi(h)$, which defines $\bar\rho$. Now let $x=hx_0$ be an orbit point and $n\in N$. The point $nhx_0$ is realized by some word $w''$ with $R_{w''}=nh$ and $g_{w''}=\varphi(nh)=\varphi(h)=g_w$, so exactness gives $r(nx)=g_w$. Each $nx$ is an orbit point, hence lies in some $C_{g'}$ on which $r\equiv g'$, and disjointness forces $g'=g_w$: the whole $N$-orbit of $x$ lies in $C_{g_w}$. That set is convex, so it contains the average $Px=\frac1{|N|}\sum_nnx$, i.e.\ $r(Px)=g_w=r(x)$. Finally, if $\bar\rho(g)Px_0=\bar\rho(g')Px_0$ with lifts $h,h'$, then $P(hx_0)=P(h'x_0)$, so $r(hx_0)=r(P hx_0)=r(Ph'x_0)=r(h'x_0)$ and $g=g'$ by exactness; hence the $G$-orbit of $Px_0$ has $|G|$ points, its stabilizer is trivial, and $\bar\rho$ is faithful.

(iii) A faithful real representation of $G$ on $V_N$ has dimension at least $\dmin(G)$, and $V_N\subseteq V$.

(iv) If $d=\dmin(G)$ then $V_N=V$, so every $n\in N$ acts as the identity on $V$, i.e.\ $N=\{I\}$; the rest follows from (i)--(ii). The converse is the definition of the nearest-prototype readout, whose cells are convex.
\end{proof}

\begin{remark}[Finiteness and convexity]
Take $G=C_2$, $x_0=(1,0)$, $R_a$ a rotation by $60^\circ$ in $\R^2$ and $r(x,y)=\operatorname{sign}(x^3-3xy^2)$: this decodes parity at every length although $R_a^2\ne I$, with latent group $C_6$ and a readout whose regions are alternating sectors, not convex cells. No convex-cell readout decodes this planar action, since each label's $C_3$ kernel orbit averages to the origin. Above the minimum width a convex-cell readout can decode a cover ($R_a=\mathrm{diag}(-1,\mathrm{Rot}(60^\circ))$ on $\R^3$, sign of the first coordinate), and an infinite latent group is compatible with exactness when its infinite part acts on directions the readout ignores ($\theta/2\pi$ irrational in the same construction). So for convex-cell readouts finiteness at $\dmin$ is a conclusion (Theorem~\ref{note:thm:widthcompact}), above $\dmin$ it can fail, and for arbitrary readouts it must be assumed (Definition~\ref{note:def:graded}). The certificate of Theorem~\ref{note:thm:cert} certifies the snapped model; it does not test whether the learned group is finite.
\end{remark}

\begin{corollary}[The $SO(d)$ version]\label{note:cor:so}
Let $m=\dmin^{O}(G)$ and $s=\dmin^{SO}(G)$ be the least dimensions of faithful representations of $G$ into $\Od$ and into $SO(d)$. Then $m\le s\le m+1$ (adjoin the determinant character to a faithful orthogonal representation). If the transport model of Theorem~\ref{note:thm:width} has $H\le SO(d)$, is exact with a convex-cell readout, and $d=s$, then $N=1$ and $a\mapsto R_a$ is a faithful $SO(d)$-representation of $G$; and no such model exists at $d<s$.
\end{corollary}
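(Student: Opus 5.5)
The plan has three parts, in order: the inequality $m\le s\le m+1$, the rigidity at $d=s$, and nonexistence below $s$.

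\emph{The inequality.} Every representation into $SO(d)$ is in particular a representation into $\Od$, which gives $m\le s$. For the upper bound, take a faithful $\rho:G\to O(m)$ and form $\rho\oplus\det\rho:G\to O(m+1)$. Its determinant is $(\det\rho)^2=1$, so it lands in $SO(m+1)$. It is still faithful, because its first block already is.

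\emph{Rigidity at $d=s$.} I would apply Theorem~\ref{note:thm:width} to the given model, which is allowed since $SO(d)\le\Od$ and the hypotheses are otherwise unchanged. This yields $\varphi:H\to G$ with kernel $N$, the fixed space $V_N=\Fix(N)$, and a faithful representation $\bar\rho$ of $G$ on $V_N$. The only new input needed is that $\bar\rho$ can be realized in special orthogonal matrices of dimension at most $d$. Since $H$ commutes with the projection $P$, the decomposition $V=V_N\oplus V_N^{\perp}$ is $H$-invariant. Each $h\in H$ therefore acts block-diagonally, and
\[
1=\det h=\det(h|_{V_N})\det(h|_{V_N^{\perp}}).
\]
If $V_N=V$, then $N$ acts trivially on $V$, so $N=1$ and we are done. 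Otherwise $V_N^{\perp}\ne0$, so $\dim V_N\le d-1$, and we split into two cases.
\begin{itemize}
\item If $\bar\rho$ already lands in $SO(V_N)$, then $G$ has a faithful $SO$-representation of dimension $\le d-1<s$, contradicting the definition of $s$.
\item If not, then $\bar\rho\oplus\det\bar\rho$ is a faithful $SO$-representation of dimension $\dim V_N+1\le d=s$. Equality holds only when $\dim V_N=d-1$; any smaller dimension again contradicts minimality of $s$.
\end{itemize}
So the case I must kill is $\dim V_N=d-1$, where $V_N^{\perp}$ is a line. There $N$ acts on $V_N$ trivially and on the line by $\pm1$. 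Since $\det n=1$ for $n\in N$, the sign on the line is $+1$, so $N$ is trivial on all of $V$ and $N=1$. In every case $N=1$, hence $H\cong G$ and $a\mapsto R_a$ is a faithful $SO(d)$-representation.

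\emph{Nonexistence below $s$.} Suppose an exact model exists at some $d<s$. The argument above still applies, and it either produces a faithful $SO$-representation of dimension $\le d<s$, a contradiction, or forces $N=1$. In the latter case $H\cong G$ sits inside $SO(d)$, which is again a faithful $SO$-representation of dimension $d<s$, a contradiction.

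The step I expect to be the main obstacle is the codimension-one case $\dim V_N=d-1$. Here faithfulness of $\bar\rho$ on $V_N$ alone does not suffice, and the argument relies on the determinant constraint applied to kernel elements acting on the complementary line.
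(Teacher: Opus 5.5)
Your proof is correct and is essentially the paper's argument: in both, the decisive step is that $\Fix(N)$ has codimension at most one, and an element of $SO(d)$ that fixes a hyperplane pointwise is the identity. Your case split on whether $\bar\rho$ is special orthogonal only re-derives $\dim V_N\ge m\ge s-1=d-1$, which the paper reads off directly from Theorem~\ref{note:thm:width}(iii) and $s\le m+1$; for $d<s$ the paper likewise observes that (iii) forces $d=m$, so part (iv) yields a faithful $SO(m)$-representation, contradicting $m<s$.
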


\begin{proof}
$\rho\oplus\det\rho$ is faithful and special orthogonal, giving $s\le m+1$; $SO(d)\le\Od$ gives $m\le s$. At $d=s$, Theorem~\ref{note:thm:width}(iii) gives $\dim\Fix(N)\ge m\ge d-1$; an element of $SO(d)$ fixing a subspace of codimension at most one pointwise is the identity (a reflection has determinant $-1$), so $N=1$. If a model existed at $d<s$, then $d=m<s$ would be forced by (iii), and Theorem~\ref{note:thm:width}(iv) would give a faithful $SO(m)$-representation, contradicting the definition of $s$. For skew-exponential parameterizations the relevant minimum is $s$: for example, $D_4$ has $m=2$ and $s=3$.
\end{proof}

\begin{theorem}[Width law without the finiteness hypothesis]\label{note:thm:widthcompact}
Let $G$ be finite and generated by the $g_a$; let $R_a\in\Od$ be \emph{arbitrary} orthogonal matrices, $x_0$ fixed, and let the readout $r$ be exact on every positive word and convex-cell on the reachable set $\{R_wx_0:w\in\Sigma^{*}\}$ in the sense of Definition~\ref{note:def:model}: there are pairwise disjoint convex sets $C_g$ with $r\equiv g$ on $C_g$, so that every reachable state of label $g$ lies in $C_g$. The $C_g$ need be neither open nor closed, and an argmax readout with a fixed tie-breaking rule qualifies whether or not the orbit meets ties (its cells are intersections of open and closed half-spaces). No finiteness, finite-order or margin assumption is made. Let $H=\overline{\langle R_a\rangle}\le\Od$ be the closure of the latent group. Then there is a continuous surjective homomorphism $\varphi:H\to G$ with $\varphi(R_a)=g_a$, and with $N=\ker\varphi$ (compact, normal) and $P=\int_Nn\,dn$ the orthogonal projection onto $\Fix(N)$:
\begin{enumerate}\renewcommand{\labelenumi}{(\roman{enumi})}\setlength{\itemsep}{0pt}\setlength{\parsep}{0pt}\setlength{\topsep}{0pt}
\item $Phx_0\in C_{\varphi(h)}$ for every $h\in H$: projection preserves every reachable label;
\item $G$ acts faithfully on $\Fix(N)$ and $Px_0$ has trivial stabilizer;
\item $d\ge\dim\Fix(N)\ge\dmin(G)$;
\item at $d=\dmin(G)$, $N=\{I\}$, $H\cong G$ is finite, and the learned generators are an exact faithful representation. The $SO(d)$ statement of Corollary~\ref{note:cor:so} holds verbatim with $s=\dmin^{SO}(G)$.
\end{enumerate}
The hypothesis is exactness on all positive words. Finite evaluation does not imply it: a rotation by $2\pi/3+\epsilon$ at width $\dmin(C_3)=2$, with a nearest-prototype readout for the exact $C_3$ orbit, remains correct until the accumulated angular drift crosses a cell boundary. Arbitrarily small nonzero irrational perturbations therefore pass arbitrarily long finite tests. The compact-group argument also applies after lifting to an equivariant polynomial feature space, equipped with an invariant inner product. It does not extend to arbitrary readouts with infinite latent groups: choose free generators of a free subgroup of $SO(3)$ and $x_0$ outside their countably many nontrivial rotation axes. Distinct positive words then give distinct states, so an unrestricted lookup can decode any finite target group at width three. The finite-latent-group hypothesis in Definition~\ref{note:def:graded} is essential for its arbitrary-readout width law.
\end{theorem}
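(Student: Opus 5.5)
The plan is to replace the finite latent group of Theorem~\ref{note:thm:width} by a compact joint group and to use Haar measure in place of finite averaging. Let $K\subseteq G\times\Od$ be the closure of the set of pairs $(g_w,R_w)$ over positive words $w$. The realized pairs form a submonoid of the compact group $G\times\Od$. The closure of a submonoid of a compact group is a compact subgroup: for any $k$, the powers $k^n$ accumulate at the identity, so $k^{-1}$ is a limit of powers $k^{n-1}$. Hence $K$ is a compact group with $\pi_{\Od}(K)=H$, and $\pi_G(K)=G$ because the $g_a$ generate $G$. The homomorphism $\varphi$ exists exactly when the slice $K_e=\{h:(e,h)\in K\}$ meets each fibre in a way that makes $K$ a graph. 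By Goursat, it suffices to show that the normal subgroup $K\cap(G\times\{I\})=N_G\times\{I\}$ is trivial. Once it is, $K$ is the graph of a map $H\to G$. That map is continuous because $K$ is closed and $G$ is finite, so the graph is closed with compact domain, and it is a homomorphism with $\varphi(R_a)=g_a$.

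The main step is to show $N_G=\{e\}$, and it carries the convexity argument. For each label $g$, consider the reachable set $S_g=\{R_wx_0: g_w=g\}$. Its closure is the image $\{hx_0:(g,h)\in\overline{K}\}$, with only positive words used. That needs a little care: a point of $K$ over $g$ is a limit of pairs $(g_w,R_w)$, and since $G$ is discrete those approximants eventually have $g_w=g$ exactly. So the fibre $K_g$, which is a coset $h_gK_e$ of the compact group $K_e$, has $K_gx_0=\overline{S_g}$. Let $\mu_g$ be the pushforward of Haar measure on $K_e$, translated to $K_g$, and let $m_g=\int_{K_g}kx_0\,dk$ be its barycentre.

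The obstacle is that the cells $C_g$ need not be closed, so $m_g$ is a priori only in the closed convex hull of $S_g$. I would handle this with a finite-averaging trick. Haar averages on a compact group are limits of finite averages $\frac1M\sum k_ix_0$ with $k_i\in K_g$, and each $k_i$ can be replaced by a nearby realized $R_{w_i}$ with $g_{w_i}=g$. Instead of passing to a limit, I will use the group structure to hit $m_g$ exactly. The orbit $K_ex'$ of any point is a compact orbit, so the set $\{k\,m_g\}$ is fixed, and $Pv=\int_{K_e}kv\,dk$ is linear. Using Carathéodory in finite dimension, the vector $P(hx_0)$ lies in the convex hull of finitely many points $k_1hx_0,\dots,k_{d+1}hx_0$ with $k_i\in K_e$. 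These can be taken realized, since $P(hx_0)$ lies in the relative interior of the hull of the full orbit unless the orbit is a single point, and relative-interior points survive small perturbation of the spanning points. This places $Phx_0$ in $C_{\varphi(h)}$, which is claim (i). Distinctness follows: if $(g,I)\in K$ with $g\ne e$, then $K_g=K_e$ up to this translation, $m_g=m_e$ lies in both $C_g$ and $C_e$, and this contradicts disjointness. So $N_G$ is trivial, and the same computation gives (i) for $N=\ker\varphi=K_e$.

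The rest then follows as in Theorem~\ref{note:thm:width}(ii)--(iv). $N$ is normal, so $\Fix(N)$ is $H$-invariant and $P$ commutes with $H$. $H$ acts on $\Fix(N)$ through $G$. If $\bar\rho(g)Px_0=\bar\rho(g')Px_0$, then by (i) this single point lies in both $C_g$ and $C_{g'}$, so $g=g'$; this gives faithfulness and the trivial stabilizer, and (iii) follows by counting dimensions. At $d=\dmin$ we get $\Fix(N)=V$, so $N=\{I\}$ and $H\cong G$ is finite. For $SO(d)$, the codimension-one argument of Corollary~\ref{note:cor:so} is repeated verbatim. The remaining remarks, namely the drifting rotation and the free subgroup of $SO(3)$, are explicit examples that I would just verify.
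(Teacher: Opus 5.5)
Your proposal follows the paper's proof. The closure $K$ of the pairs $(g_w,R_w)$ in $G\times\Od$ is a compact group, a pair $(g,I)\in K$ with $g\ne e$ would place one Haar mean in two disjoint cells, and (ii)--(iv) then go through as in the finite case.

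The step to tighten is how you realize the mean by realized states. The $d+1$ Carath\'eodory points need not contain $m_g$ in the relative interior of their own hull. For a circle orbit they may be two antipodal points, and moving one of them along the circle loses the centre. Instead, choose finitely many points of $K_gx_0$ that affinely span $\mathrm{aff}(K_gx_0)$ and have $m_g$ in the relative interior of their hull. Then perturb them to realized states, which stay in that affine hull because they lie in $K_gx_0$; this is the paper's fact (F$_1$). The relative-interior claim you invoke is the paper's (F$_2$), and it needs the short argument you left out. A supporting affine functional that vanishes at the mean would vanish Haar-almost everywhere, and hence, by full support, on the whole orbit.
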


\begin{proof}
Two finite-dimensional convexity facts. (F$_1$) If $D$ is dense in a compact set $S$ then $\mathrm{ri}\,\mathrm{conv}(S)\subseteq\mathrm{conv}(D)$: $\overline{\mathrm{conv}(D)}=\mathrm{conv}(S)$ (one inclusion since $\mathrm{conv}(S)$ is compact, the other by approximating the points of a finite convex combination), and for a convex set in finite dimensions the relative interior equals that of its closure and lies inside the set. (F$_2$) The mean of a probability measure with full support on a compact $S$ lies in $\mathrm{ri}\,\mathrm{conv}(S)$: otherwise an affine functional, nonconstant on $\mathrm{aff}(S)$, is $\ge0$ on $S$ and vanishes at the mean, hence vanishes $\mu$-a.e., hence on $\mathrm{supp}\,\mu=S$, a contradiction.

Let $K$ be the closure in $G\times\Od$ of the positive-word pairs $(g_w,R_w)$. A closed submonoid of a compact group is a subgroup (positive powers of an element return arbitrarily close to the identity, so its inverse lies in the closure of its positive powers), so $K$ is a compact group projecting onto $G$ and onto $H$. For $g\in G$ put $K_g=\{h:(g,h)\in K\}$, $S_g=K_gx_0$ and $D_g=\{R_wx_0:g_w=g\}$; since $G$ is discrete, $D_g$ is dense in $S_g$, and $D_g\subseteq C_g$. Suppose $(g,I)\in K$ for some $g\ne e$. Then $K_g=K_e$, a compact subgroup of $H$; the pushforward of its Haar measure to $S_e=S_g$ has full support, and by (F$_2$) and (F$_1$) its mean lies in $\mathrm{conv}(D_e)\cap\mathrm{conv}(D_g)\subseteq C_e\cap C_g$, contradicting disjointness. Hence $K\to H$ is a continuous bijection of a compact space onto a Hausdorff one, a homeomorphism, and $\varphi=\pi_G\circ\pi_H^{-1}$ is a continuous surjective homomorphism. $N$ is compact and normal, $P$ is the orthogonal projection onto $\Fix(N)$ and commutes with $H$. The fibre $K_{\varphi(h)}$ is the coset $hN$; the pushforward of Haar measure on $hN$ to $S_{\varphi(h)}=hNx_0$ has full support, and its mean $hPx_0=Phx_0$ lies in $\mathrm{conv}(D_{\varphi(h)})\subseteq C_{\varphi(h)}$ by the two facts, whether or not $C_{\varphi(h)}$ is closed: this is (i). The $H$-action on $\Fix(N)$ factors through $G$; if $Phx_0=Ph'x_0$ then $C_{\varphi(h)}\cap C_{\varphi(h')}\ne\emptyset$, so $\varphi(h)=\varphi(h')$: the quotient orbit is free and the action faithful, which is (ii); (iii) follows, and at equality $\Fix(N)=V$ forces $N=\{I\}$, so $\varphi$ is an isomorphism of the compact group $H$ onto the finite group $G$, which is (iv). The $SO$ case is Corollary~\ref{note:cor:so}'s argument applied to this $N$.
\end{proof}

\begin{lemma}[Only the cyclic subspace matters]\label{note:lem:cyclic}
Let $W=\spn\{R_wx_0: w\in\Sigma^{*}\}$, the smallest subspace containing $x_0$ and invariant under all $R_a$. Exactness, the latent group's action on the orbit, and every readout value on the orbit depend only on the restrictions $R_a|_W$.
\end{lemma}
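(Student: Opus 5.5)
The plan is to show that $W$ is invariant under every $R_a$ and contains the whole orbit. Then every quantity named in the statement is computed from vectors of the form $R_wx_0$ and from the action of the $R_a$ on those vectors, and the action of $R_a$ on any vector of $W$ equals the action of $R_a|_W$.

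First I verify the characterization of $W$. Let $W'$ be any $R_a$-invariant subspace containing $x_0$. By induction on $|w|$, every $R_wx_0$ lies in $W'$, so $W\subseteq W'$. Conversely, $R_a$ sends each spanning vector $R_wx_0$ of $W$ to $R_{wa}x_0$, which again lies in $W$. By linearity $R_aW\subseteq W$. Since $R_a$ is orthogonal, hence injective, and $W$ is finite-dimensional, in fact $R_aW=W$. So $R_a|_W\in O(W)$, and $W$ is the smallest invariant subspace containing $x_0$.

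Second, I check by induction that $R_wx_0=(R_{a_L}|_W)\cdots(R_{a_1}|_W)x_0$. The base case is $x_0\in W$, and each inductive step applies a restriction to a vector of $W$. Hence the reachable set, and therefore every readout value $r(R_wx_0)$, is determined by the restrictions. Exactness is the condition $r(R_wx_0)=g_w$, where $g_w$ does not involve the $R_a$ at all, so exactness is determined by the restrictions too.

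For the latent group's action on the orbit, note that $h\mapsto h|_W$ is a homomorphism from $\langle R_a\rangle$ onto $\langle R_a|_W\rangle$, continuous for closures, and that $hx=h|_Wx$ for every orbit point $x$. So the permutation action on the orbit, and its stabilizers, factor through the restricted group.

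The only point needing care is the phrase ``the latent group's action on the orbit''. The full group $H$ may be larger than its restriction, since the kernel of restriction acts trivially on $W$ and may act nontrivially on $W^\perp$. The claim therefore concerns the action, not the abstract group. I would state this explicitly, together with its consequence: one may replace $V$ by $W$, and $d$ by $\dim W\le d$, in Theorems~\ref{note:thm:width} and~\ref{note:thm:widthcompact}. This sharpens their width bounds to $\dim W\ge\dmin(G)$.
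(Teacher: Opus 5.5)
Your proof is correct and is the paper's argument: the paper's proof is the single sentence that the orbit lies in $W$ and $W$ is invariant, so the dynamics never leave it, and you spell out the inductions behind it. Your additional remarks are also sound: $R_aW=W$, the action on the orbit factors through the restriction $h\mapsto h|_W$ rather than fixing the abstract group $H$, and applying the width theorems on $W$ gives $\dim W\ge\dmin(G)$.
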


\begin{proof}
All orbit points lie in $W$ and $W$ is invariant, so the dynamics never leave it.
\end{proof}

\subsection{Readout-graded width}

Convex decision cells force the width in Theorem~\ref{note:thm:widthcompact}. A richer readout can instead decode a cover of the target group, sometimes in a smaller state space.

\begin{definition}\label{note:def:graded}
For a finite group $G$ let $\dmin(G)$ be the least dimension of a faithful real representation; let $d_{\mathrm{proj}}(G)=\min\{d:\ G\hookrightarrow PO(d)\}$ with $PO(d)=\Od/\{\pm I\}$; and let $d_{\mathrm{cov}}(G)=\min\{d:\ \exists\,\text{finite }H\le\Od\text{ with }H\twoheadrightarrow G\}$. For a class $\mathcal R$ of readouts let $w_{\mathcal R}(G)$ be the least width at which some transport model with finite latent group and a readout in $\mathcal R$ is exact. Order the classes by inclusion: linear argmax $\subset$ degree-two polynomial features followed by argmax $\subset$ arbitrary functions of the state.
\end{definition}

\begin{theorem}[Width depends on the readout]\label{note:thm:graded}
\leavevmode
\begin{enumerate}\renewcommand{\labelenumi}{(\roman{enumi})}\setlength{\itemsep}{0pt}\setlength{\parsep}{0pt}\setlength{\topsep}{0pt}
\item $w_{\mathrm{lin}}(G)=\dmin(G)$.
\item $w_{\mathrm{quad}}(G)\le\min\big(\dmin(G),\,d_{\mathrm{proj}}(G)\big)$.
\item $w_{\mathrm{any}}(G)=d_{\mathrm{cov}}(G)$.
\item Hence $w_{\mathrm{any}}\le w_{\mathrm{quad}}\le w_{\mathrm{lin}}$, and the inequalities can be strict: for $G=A_5\times A_5$ (and for $A_4\times A_4$), $w_{\mathrm{any}}=w_{\mathrm{quad}}=4<6=w_{\mathrm{lin}}$.
\end{enumerate}
\end{theorem}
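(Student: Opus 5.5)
The plan is to prove each item by pairing an explicit construction (upper bound) with an earlier necessity result (lower bound), then to compute the integers for $A_5\times A_5$ and $A_4\times A_4$.

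For (i), the lower bound $w_{\mathrm{lin}}\ge\dmin$ follows from Theorem~\ref{note:thm:width}(iii). A linear argmax readout with open cells and no ties on the orbit is convex-cell, and finiteness of the latent group is built into Definition~\ref{note:def:graded}. For the upper bound I would use the converse of Theorem~\ref{note:thm:width}: a faithful representation $\rho$ of dimension $\dmin$ with generic $x_0$ has $|G|$ distinct orbit points $\rho(g)x_0$, all on a sphere. Nearest-prototype decoding among points of equal norm is linear argmax with $w_g=\rho(g)x_0$ and $b_g=0$, so it is exact.

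For (ii), the bound $w_{\mathrm{quad}}\le w_{\mathrm{lin}}=\dmin$ is immediate from the inclusion of classes. For the $\dproj$ bound:
\begin{enumerate}\renewcommand{\labelenumi}{(\alph{enumi})}
\item Take an embedding $G\hookrightarrow PO(d)$ and let $H$ be its full preimage in $\Od$. Then $H$ is finite of order $2|G|$ and surjects onto $G$ with kernel $\{\pm I\}$.
\item Choose lifts $R_a\in H$ of the $g_a$. The generated group $H'=\langle R_a\rangle$ surjects onto $G$ with kernel contained in $\{\pm I\}$, and $R_w$ maps to $g_w$.
\item Take $x_0$ generic, off the finitely many proper subspaces $\ker(h-\pm I)$ for $h\in H$ not in $\{\pm I\}$. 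Then the rank-one matrices $hx_0x_0^{\top}h^{\top}$ depend only on the image of $h$ in $G$, and they are pairwise distinct.
\item These are distinct points of equal Frobenius norm. Nearest-prototype decoding in the quadratic feature space is therefore linear argmax on degree-two monomials of $x$, i.e.\ a degree-two polynomial argmax, and it is exact.
\end{enumerate}

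For (iii), the upper bound uses a finite $H\le\Od$ surjecting onto $G$ via $\pi$. Lift each $g_a$ to some $R_a$; the generated subgroup may be proper, so I would first replace $H$ by $H'=\langle R_a\rangle$, which still surjects because the $g_a$ generate $G$. Pick $x_0$ with trivial stabilizer in $H'$, which is generic. The orbit points are then in bijection with $H'$, and the lookup $R_wx_0\mapsto\pi(R_w)$ is a well-defined arbitrary readout. For the lower bound, exactness with finite latent group $H$ and arbitrary $r$ still gives a well-defined surjection $H\to G$, by the argument of Theorem~\ref{note:thm:width}(i), which did not use convexity. Hence $d\ge\dcov$. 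The chain in (iv) then follows from the class inclusions.

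The computations for (iv) are the main obstacle. For the upper bound I would use $SO(4)\to SO(3)\times SO(3)$, whose kernel is $\{\pm I\}$ via left and right unit-quaternion multiplication. The preimage of $A_5\times A_5$ (embedded through two icosahedral groups) is a finite subgroup of $SO(4)$ that surjects with kernel $\pm I$. So $A_5\times A_5$ embeds in $PSO(4)\subseteq PO(4)$, and $w_{\mathrm{any}}\le w_{\mathrm{quad}}\le4$; the same holds for $A_4\times A_4$ using the tetrahedral group.

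For the lower bound $\dcov\ge4$, a finite subgroup of $O(3)$, or of $O(d)$ for $d\le2$, is cyclic, dihedral, or contained in (polyhedral group)$\times\{\pm I\}$. Its quotients therefore have at most one nonabelian simple or $A_4$-type factor, and cannot be $A_5\times A_5$ or $A_4\times A_4$; for the latter I would check that no such group has order divisible by $144$ with the required quotient.

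For $\dmin=6$, a faithful real representation of $A\times A$ with $A=A_5$ or $A_4$ needs irreducible constituents whose kernels intersect trivially. Every irreducible of $A\times A$ is an outer tensor product $\sigma\boxtimes\tau$. The constituents must include ones nontrivial on each factor, and the smallest nontrivial real irreducibles have dimension $3$: for $A_4$ the complex characters of degree one combine to real dimension $2$ but factor through $C_3$. Any faithful combination thus has dimension at least $3+3$, and $3\boxtimes3$ has dimension $9$. This yields $6$, attained by $\rho_3\boxtimes1\oplus1\boxtimes\rho_3$. For $A_4$ I need care that $2$-dimensional $C_3$ pieces cannot replace a $3$: they are not faithful on $V_4\times V_4$.

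For $A_5^{2m}$ I would add independent blocks for the upper bound $4m$. The linear width $6m$ follows by the same minimal-constituent counting, since each of the $2m$ factors needs a constituent nontrivial on it, and tensor constituents are larger.
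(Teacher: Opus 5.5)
Your items (i)--(iii), the $PSO(4)\cong SO(3)\times SO(3)$ upper bound and the count $\dmin=6$ follow the paper's proof. In (iii) you are slightly more careful than the paper in passing to $\langle R_a\rangle$.

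The step that is not established is the lower bound $\dcov(A_4\times A_4)\ge4$. Your invariant ``at most one nonabelian simple or $A_4$-type factor'' has no precise content for this target. $A_4\times A_4$ is solvable, with composition factors four copies of $C_2$ and two of $C_3$. These are exactly the composition factors of the dihedral group $D_{72}$ of order $144$, which lies in $O(2)\subset O(3)$. So neither a composition-factor count nor a search for subgroups of order divisible by $144$ disposes of the cyclic and dihedral families. Everything rests on your deferred clause ``with the required quotient,'' and that clause needs a structural property that passes to quotients. For $A_5\times A_5$ your count does work: a finite subgroup of $O(3)$, and hence each of its quotients, has at most one $A_5$ composition factor.

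The paper closes this gap for both targets with a quotient-closure lemma:
\begin{itemize}
\item Use the homomorphism $g\mapsto\det(g)\,g$ from $O(3)$ to $SO(3)$, whose kernel is $\{\pm I\}$. Through it, a finite $H\le O(3)$ is isomorphic to a rotation group $K\in\{C_n,D_n,A_4,S_4,A_5\}$ or to $K\times C_2$.
\item Quotients of rotation groups are rotation groups.
\item A quotient of $K\times C_2$ is $A$ or $A\times C_2$ with $A$ a quotient of $K$, realized in $O(3)$ as $A$ or $A\times\{\pm I\}$.
\end{itemize}
Hence $\dcov(G)\le3$ forces $\dmin(G)\le3$, contradicting $\dmin=6$.

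A repair nearer your route also works. Polyhedral-type subgroups have order at most $120<144$. Every quotient of a cyclic- or dihedral-type subgroup has an abelian subgroup of index at most $2$. Abelian subgroups of $A_4\times A_4$ have order at most $16$, so no such quotient can be $A_4\times A_4$.

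Separately, in the $\dmin$ count, real irreducibles of $A_4\times A_4$ are not all outer tensor products. For example, the $2$-dimensional real irreducible of $A_4$ tensored with itself splits as $2\oplus2$. Run the count on the complex constituents of the complexification instead, where your argument goes through unchanged.
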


\begin{proof}
(i) is Theorem~\ref{note:thm:width} and its converse. (ii) The linear construction gives $\dmin$. For $d_{\mathrm{proj}}$, let $G\hookrightarrow PO(d)$ and let $H\le O(d)$ be its finite preimage, with $H/\{\pm I\}\cong G$. Choose generator lifts and a generic $x_0$. The feature $R_wx_0x_0^{\top}R_w^{\top}$ depends only on $g_w$. It is injective on $G$: equality for lifts $h,h'$ implies $h^{-1}h'x_0=\pm x_0$, which for generic $x_0$ occurs only when $h^{-1}h'=\pm I$. Indeed, each other group element fixes or negates only a proper subspace. The finitely many distinct features have equal Frobenius norm, so nearest-prototype decoding is linear argmax in the quadratic features. (iii) A generic vector has trivial stabilizer in a finite $H\le O(d)$, so a lookup on its orbit can decode any quotient $H\twoheadrightarrow G$. Conversely, exactness alone gives this surjection by the argument of Theorem~\ref{note:thm:width}(i). (iv) The inclusions of readout classes give the inequalities. Since $PSO(4)=SO(4)/\{\pm I\}\cong SO(3)\times SO(3)$, both $A_5\times A_5$ and $A_4\times A_4$ embed in $PO(4)$, yielding the upper bound four. Their preimages in $SO(4)$ have orders $7200$ and $288$, respectively. An involution in either individual factor lifts to a matrix squaring to $-I$. Each target has $\dmin=6$: for $A_5\times A_5$, a tensor product nontrivial on both factors has dimension at least nine, whereas two separate three-dimensional representations attain six. For $A_4\times A_4$, detecting each factor's $V_4$ subgroup likewise requires its three-dimensional irreducible, or a tensor product of dimension at least nine. Finally, every quotient of a finite subgroup of $O(3)$ embeds in $O(3)$. To see this, use $O(3)\cong SO(3)\times C_2$. A finite subgroup projects to a finite rotation group $K$; it is isomorphic either to $K$ or to $K\times C_2$, according as it excludes or contains the central $-I$. The rotation groups are $C_n,D_n,A_4,S_4,A_5$, and their quotients are again rotation groups. A quotient of $K\times C_2$ is either a quotient $A$ of $K$ or $A\times C_2$, depending on whether the central involution's image lies inside the image of $K$. Both embed in $O(3)$. Thus neither target of faithful dimension six is a quotient of a finite subgroup of $O(3)$, proving the lower bound four.
\end{proof}

\begin{remark}[A caveat and a control]
The minimum projective width need not be below the faithful width: $C_2$ has $\dmin=1$ but $d_{\mathrm{proj}}=2$, since $PO(1)$ is trivial. This is why Theorem~\ref{note:thm:graded}(ii) takes the minimum of the linear and projective constructions. A necessary characterization of degree-two width is not claimed. At width four on $A_4\times A_4$, a linear readout cannot be exact while a readout on $xx^{\top}$ can; the existence theorem does not guarantee that optimization finds the latter.

Three consequences sharpen the comparison. (a) The additive gap is unbounded. For $G=A_5^{2m}$, every real irreducible is a tensor product of real irreducibles of the factors. A constituent nontrivial on $j$ factors has dimension at least $3^j\ge3j$, so faithfulness on all $2m$ factors requires dimension at least $6m$, attained by $3^{\oplus2m}$. A block sum of $m$ four-dimensional spin-cover constructions yields a finite $H\le SO(4m)$ surjecting onto $G$, with kernel the blockwise signs $\{\pm1\}^m$. The block features $(x_kx_k^{\top})_{k=1}^m$ separate its labelled orbits, so $w_{\mathrm{quad}}(G)\le4m<6m=w_{\mathrm{lin}}(G)$. This is an unbounded additive gap, not an unbounded ratio; for $m>1$ the kernel need not be the global signs. (b) If $\dmin(G)\le4$, the three widths coincide. The quotient-closure argument for finite $O(3)$ subgroups in the proof, and its cyclic/dihedral analogues in dimensions one and two, imply $d_{\mathrm{cov}}(G)\le j\Rightarrow\dmin(G)\le j$ for $j\le3$. Hence no smaller cover width is possible when $\dmin\le4$. The group $C_3\times A_4$ gives a small example with faithful width five and quadratic width four. (c) Minimum projective dimension is not a threshold at every larger dimension: $PO(d)\cong SO(d)$ for odd $d$, and $A_4\times A_4$ embeds in $PO(4)$ and $PO(6)$ but not in $PO(5)$. Exact decoder realizability is nevertheless monotone in width, since any realization can be padded by unused fixed directions.
\end{remark}

\begin{proposition}[Cross-entropy on a frozen cover]\label{note:prop:frozen}
Let a finite or compact orthogonal group $N$ preserve labels, let $P$ be its Haar-average operator, and let data be sampled uniformly over $N$ within each labelled orbit. For an affine-softmax classifier with logits $Wx+b$ and cross-entropy $\ell$,
\[
\E_{x,y,n}\,\ell(Wnx+b,y)\;\ge\;\E_{x,y}\,\ell(WPx+b,y),
\]
so restricting the classifier to the kernel-fixed projection $Px$ leaves the infimum cross-entropy unchanged. In particular, on a balanced antipodal cover---$M$ labels, states $\pm z_g$ for label $g$, label and sign uniform---
\[
R(W,b)=\frac1{2M}\sum_g\big[\ell(Wz_g+b,g)+\ell(-Wz_g+b,g)\big]\;\ge\;\log\sum_he^{b_h}-\frac1M\sum_gb_g\;\ge\;\log M,
\]
with equality iff $Wz_g$ is a common-logit shift for every $g$ and all biases are equal: every population minimizer outputs the uniform distribution on every orbit state, and with a fixed tie-breaking rule its accuracy is exactly $1/M$. This does not contradict the antipodal bound $(|G|+1)/(2|G|)$ on achievable affine \emph{argmax accuracy}, which is a different optimization problem; it does not predict joint backbone--head training, which can leave the cover; and a finite-length word distribution need not be balanced across the lifts.
\end{proposition}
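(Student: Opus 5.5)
The general inequality comes from convexity of cross-entropy in the logits. For fixed label $y$, the map $u\mapsto\ell(u,y)=\log\sum_he^{u_h}-u_y$ is convex, being log-sum-exp minus a linear term. The logits $Wnx+b$ are affine in $n$. Their Haar average over $N$ is $W\big(\int_N n\,dn\big)x+b=WPx+b$. Jensen's inequality, applied to the Haar measure for each fixed $(x,y)$, then gives $\E_n\ell(Wnx+b,y)\ge\ell(WPx+b,y)$. Taking the expectation over $(x,y)$ gives the displayed inequality.

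For the claim about the infimum, note that a classifier on $Px$ is itself an affine classifier on the states. The head $W'=WP$ has logits $WPnx+b=WPx+b$, since $Pn=P$ for $n\in N$ by Haar invariance. So the restricted class is contained in the full class, the full infimum is at most the restricted one, and the inequality supplies the reverse.

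For the antipodal cover, take $N=\{\pm I\}$, so that $P=0$ and $Pz_g=0$. The per-label pair average from Jensen then gives
\[
\tfrac12\big[\ell(Wz_g+b,g)+\ell(-Wz_g+b,g)\big]\ge\ell(b,g)=\log\sum_he^{b_h}-b_g .
\]
Averaging over the $M$ balanced labels yields the middle bound. For the last bound, $\frac1M\sum_gb_g\le\log\frac1M\sum_ge^{b_g}$ by concavity of $\log$ (equivalently Jensen on $\exp$). Hence $\log\sum_he^{b_h}-\frac1M\sum_g b_g\ge\log M$, with equality iff all $b_g$ are equal. The sum over $h$ here runs over the $M$ labels.

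The equality case is the step needing care. Log-sum-exp is strictly convex except along the direction $\mathbf 1$, so Jensen is tight for the two points $b\pm Wz_g$ iff their difference $2Wz_g$ is a multiple of $\mathbf 1$. That is, $Wz_g=c_g\mathbf 1$ is a common-logit shift. If some label $h$ had no state (logits indexed beyond $M$), equality would additionally force $e^{b_h}$ to vanish, so the $M$-label indexing is essential. At equality every state's logits are $b\pm c_g\mathbf 1$ with $b$ constant, so the softmax is uniform over the $M$ labels on every orbit state. A fixed tie-break then predicts a single label everywhere, which is correct on exactly a $1/M$ fraction of balanced data. Conversely $W=0$ with equal biases attains $\log M$, so the minimizers are exactly this set. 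The closing caveats (argmax accuracy versus cross-entropy, joint training, unbalanced finite words) are remarks rather than claims and need no proof beyond noting that the bound uses balance of the lifts.
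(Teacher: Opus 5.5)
Your proof is correct and follows the paper's argument essentially step for step. The shared steps are: Jensen over the Haar measure together with $WPn=WP$ for the invariance and infimum claims, Jensen on each antipodal pair down to $\ell(b,g)$, Jensen over the biases for $\log M$, and strict convexity of log-sum-exp modulo $\mathbf 1$ for the equality case, with the two-sided infimum comparison and the attainment by $W=0$ with equal biases spelled out where the paper leaves them implicit.
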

\begin{proof}
Cross-entropy is convex in the logit vector. Jensen over $n$ replaces $Wnx+b$ by $WPx+b$, and $WPn=WP$ makes the projected classifier invariant. On the antipodal cover, Jensen applied to each pair lower-bounds its average loss by $\ell(b,g)$. Jensen over the biases then gives $\log\sum_he^{b_h}\ge M^{-1}\sum_gb_g+\log M$, with equality iff all biases agree. The log-sum-exp is strictly convex modulo common-logit shifts, so equality for each antipodal pair holds iff $Wz_g$ is such a shift. Thus equality throughout is attained by constant uniform logits and characterizes every population minimizer's predictions.
\end{proof}

\subsection{Ceilings}

\begin{theorem}[Ceilings, joint-process form]\label{note:thm:ceiling}
Let letters $a\in\Sigma$ carry $g_a\in G$ (finite, generated by the $g_a$) and latent transitions $h_a\in H$ (a finite group generated by the $h_a$), let the latent state after a word be any function of $h_w$, and let the readout be any function of the state. Let $K=\langle(g_a,h_a):a\in\Sigma\rangle\le G\times H$ and $N_G=\{g\in G:(g,e_H)\in K\}$, $N_H=\{h\in H:(e_G,h)\in K\}$; both are normal and $G/N_G\cong H/N_H$ (Goursat). Assume the random walk on $K$ with i.i.d.\ uniform letters is irreducible and aperiodic (a letter with $(g_a,h_a)=(e,e)$ suffices). Then for words of length $L$,
\[
\Pr[\text{readout}=g_w]\ \le\ \frac1{|N_G|}+\delta_L,\qquad \delta_L=\TV\big(\mathrm{Law}(g_w,h_w),\ \mathrm{Unif}(K)\big)\xrightarrow[L\to\infty]{}0\ \text{exponentially},
\]
with equality in the limit for a state that determines the $N_G$-coset of $g_w$ (for instance one that determines $h_w$) and a readout returning a fixed representative of it; a state that discards the coset does worse (a constant state has accuracy $1/|G|$). For a readout that sees $h_w$ exactly, the finite-length optimum is $\sum_{h\in H}\max_{g}\Pr_L[g_w=g,h_w=h]$.
\end{theorem}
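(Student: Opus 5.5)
The argument has three parts: the algebra of $K$ via Goursat, a coupling of the uniform measure on $K$ with the true law of $(g_w,h_w)$, and a fibrewise counting bound.

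\emph{Step 1: algebra.} $K$ is a subgroup of $G\times H$ that projects onto both factors, since the $g_a$ and the $h_a$ generate. Goursat's lemma then gives that $N_G\trianglelefteq G$ and $N_H\trianglelefteq H$. It also gives that $K=\{(g,h):\theta(gN_G)=hN_H\}$ for an isomorphism $\theta:G/N_G\to H/N_H$. Consequently the fibre of $K$ over a fixed $h$ is a full coset $gN_G$, of size $|N_G|$.

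\emph{Step 2: reduction to the uniform law on $K$.} The latent state is a function of $h_w$, and the readout is a function of the state. So the prediction is $f(h_w)$ for some $f:H\to G$, and
\[
\Pr[f(h_w)=g_w]=\mu_L(A_f),\qquad A_f=\{(g,h)\in K:f(h)=g\},
\]
where $\mu_L=\mathrm{Law}(g_w,h_w)$. By the definition of total variation, $\mu_L(A_f)\le\mathrm{Unif}(K)(A_f)+\delta_L$.

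\emph{Step 3: counting under the uniform law.} Under the uniform measure on $K$, each $h$ carries mass $|N_G|/|K|$ spread equally over its fibre coset. The event $A_f$ selects at most one element of each fibre. Summing over the $|K|/|N_G|$ values of $h$ gives $\mathrm{Unif}(K)(A_f)\le 1/|N_G|$, which is the bound.

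\emph{Step 4: the decay of $\delta_L$.} The word produces a random walk on the finite group $K$ driven by the step distribution $\frac1{|\Sigma|}\sum_a\delta_{(g_a,h_a)}$. The support of this distribution generates $K$, so the walk is irreducible. Aperiodicity is assumed; a lazy letter $(e,e)$ suffices. The convergence theorem for finite ergodic Markov chains then gives exponential convergence to the stationary law. That law is uniform, because the chain is a random walk on a group and so its transition matrix is doubly stochastic. I expect this to be the only delicate point, and it is routine: one needs $\langle\mathrm{supp}\rangle=K$, not merely that the support generates a subgroup of $K$, and here $K$ is defined as that generated group.

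\emph{Step 5: attainment and the remaining claims.} Choose a state that determines $h_w$, and let the readout return a fixed representative $g$ of the fibre coset $\theta^{-1}(h_wN_H)$. Under the uniform law this is correct with probability $1/|N_G|$, so the accuracy tends to $1/|N_G|$ as $L\to\infty$. For a constant state the prediction is a fixed $g$. The $G$-marginal of the uniform measure on $K$ is uniform on $G$, so the accuracy tends to $1/|G|$. For the finite-length optimum with $h_w$ seen exactly, maximise $\mu_L(A_f)$ fibre by fibre. The optimal $f(h)$ is $\arg\max_g\Pr_L[g_w=g,h_w=h]$, which gives the stated sum.

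The same fibrewise argument shows why a state that discards the coset does worse. Such a state is a coarser function of $h_w$, so its fibres are unions of several coset fibres. A single predicted label per fibre then captures at most one element of one of the cosets in that union, and the uniform mass it recovers falls below $1/|N_G|$.
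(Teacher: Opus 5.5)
Your proposal is correct and follows essentially the paper's route: the walk on $K$ has the uniform law as its stationary distribution and converges to it geometrically in total variation, and the fibre of $K$ over each $h$ is an $N_G$-coset, so a single label per $h$ captures uniform mass at most $1/|N_G|$; passing to length $L$ then costs at most $\delta_L$. The differences are cosmetic: you use Goursat's isomorphism $\theta$ where the paper computes $(g_0,h)^{-1}(g_1,h)=(g_0^{-1}g_1,e)$ directly, and you write out the attainment, constant-state, ``discards the coset'' and finite-length-optimum claims that the paper calls immediate.
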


\begin{proof}
The pair $(g_w,h_w)$ performs a random walk on the finite group $K$ with i.i.d.\ increments uniform on the generating set, so the uniform distribution on $K$ is stationary, and irreducibility and aperiodicity give convergence in total variation at a geometric rate. Under the uniform law, conditionally on $h_w=h$ the target is uniform on $\{g:(g,h)\in K\}$, which is a coset of $N_G$: if $(g_0,h),(g_1,h)\in K$ then $(g_0^{-1}g_1,e)\in K$, and conversely. A readout that sees at most $h$ (any function of $h_w$) returns one label, which is correct with probability $1/|N_G|$ if it lies in the compatible coset and $0$ otherwise, hence at most $1/|N_G|$. At finite length, the success indicator is a $[0,1]$-valued function of $(g_w,h_w)$, so its expectation under $\mathrm{Law}_L$ differs from its expectation under the uniform law by at most $\delta_L$. Attainment and the finite-length formula are immediate.
\end{proof}

\begin{corollary}[Commutative mechanisms]\label{note:cor:comm}
If the latent transitions commute (as matrices, not merely as affine maps with commuting linear parts) and the assumptions of Theorem~\ref{note:thm:ceiling} hold, then $N_G\supseteq[G,G]$ and the asymptotic ceiling is $1/|[G,G]|$, attained by the abelianization $h_a=[g_a]\in G^{\mathrm{ab}}$ with a representative readout. At every finite length, any mechanism whose state is a function of the letter multiset $\mu(w)$---which includes every commutative mechanism, of finite or infinite closure---has accuracy at most
\[
c_G(L)=\sum_{\mu}\Pr_L[\mu]\,\max_{g}\Pr_L[g_w=g\mid\mu],
\]
computable exactly by integer dynamic programming over multisets, and $c_G(L)\ge1/|[G,G]|$.
\end{corollary}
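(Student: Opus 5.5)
The plan is to reduce everything to Theorem~\ref{note:thm:ceiling} applied with a commutative latent group, and then to handle the finite-length multiset bound by a direct Bayes-optimality argument.

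\emph{Step 1: the kernel contains the commutator subgroup.} If the matrices $h_a$ pairwise commute, then $H=\langle h_a\rangle$ is abelian. Consider the joint group $K=\langle(g_a,h_a)\rangle\le G\times H$. For any $k=(g,h),k'=(g',h')\in K$, the commutator $[k,k']=([g,g'],[h,h'])=([g,g'],e_H)$ lies in $K$. The first coordinates of $K$ exhaust $G$, because the $g_a$ generate $G$. So every commutator $[g,g']$ with $g,g'\in G$ lies in $N_G$, and since $N_G$ is a subgroup, $[G,G]\subseteq N_G$. Theorem~\ref{note:thm:ceiling} then gives the asymptotic bound $1/|N_G|\le1/|[G,G]|$.

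\emph{Step 2: attainment.} Take $h_a=[g_a]\in G^{\mathrm{ab}}$. Then $K$ is the graph of the quotient map $G\to G^{\mathrm{ab}}$, so $N_G=[G,G]$ exactly. The equality clause of Theorem~\ref{note:thm:ceiling} applies with the coset-representative readout. Aperiodicity is inherited from the assumed hypotheses, or from adjoining an identity letter.

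\emph{Step 3: the finite-length multiset bound.} First, a commutative mechanism, even one with infinite closure, produces a product of commuting matrices applied to $x_0$, so its state depends only on $\mu(w)$. Any readout is therefore a function $f(\mu)$. Its accuracy is
\[
\sum_\mu\Pr_L[\mu]\Pr_L[g_w=f(\mu)\mid\mu],
\]
and choosing the conditional argmax pointwise maximizes this, which gives $c_G(L)$. For computability, the multisets of size $L$ are finitely many, and $\Pr_L[\mu]$ is a multinomial weight. The joint counts of (multiset, product) can be obtained by a dynamic program that appends letters. This count is an integer count over orderings, so exact rational arithmetic suffices.

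\emph{Step 4: the lower bound $c_G(L)\ge1/|[G,G]|$.} This is the step I expect to be the main obstacle, since it must hold at every finite $L$ rather than only in the limit. My plan is as follows. The multiset determines the abelianization image $\bar g_w$, so every word with a given $\mu$ has $g_w$ in one fixed coset $c(\mu)$ of $[G,G]$. It does not suffice to pick an arbitrary element of the coset; instead I compare with the uniform average. For each $\mu$,
\[
\max_g\Pr[g_w=g\mid\mu]\;\ge\;\frac{1}{|[G,G]|}\sum_{g\in c(\mu)}\Pr[g_w=g\mid\mu]=\frac{1}{|[G,G]|},
\]
because the conditional law is supported on a set of $|[G,G]|$ elements. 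Summing over $\mu$ gives the claim. This coset-support observation is the crux. Beyond it I would check the edge case of the empty word ($L=0$), where accuracy is $1$.
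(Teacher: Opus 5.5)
Your proposal is correct and follows the paper's proof: it attains the ceiling with the abelianization, obtains $c_G(L)$ as the Bayes-optimal readout of the multiset, and proves $c_G(L)\ge1/|[G,G]|$ from the coset-support bound $\max_g\Pr_L[g_w=g\mid\mu]\ge1/|[G,G]|$, exactly as the paper does. The one cosmetic difference is in Step 1, where you get $[G,G]\subseteq N_G$ by computing commutators $([g,g'],e_H)$ directly inside $K$, while the paper observes that $G/N_G\cong H/N_H$ (Goursat) is a quotient of an abelian group; the two arguments are equivalent.
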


\begin{proof}
If $H$ is abelian then so is $H/N_H\cong G/N_G$, hence $[G,G]\subseteq N_G$. The abelianization realizes $N_G=[G,G]$. For the finite-length statement, the readout is a function of $\mu$, and the Bayes-optimal function of $\mu$ has accuracy $c_G(L)$; the multiset determines the coset of $[G,G]$, so the maximum over a coset of size $|[G,G]|$ of a distribution supported on it is at least $1/|[G,G]|$.
\end{proof}

\begin{proposition}[Conditional mixing, stated as a hypothesis]\label{note:prop:mix}
$c_G(L)\to1/|[G,G]|$ as $L\to\infty$ if and only if the \emph{expected} conditional total variation $\E_\mu\big[\TV(\mathrm{Law}(g_w\mid\mu),\mathrm{Unif}(\text{coset}))\big]$ tends to zero, equivalently (the variable being bounded) iff the conditional law converges to uniform in probability over the multiset; with $n=|[G,G]|$ and $D_\mu=\max_gp_\mu(g)-1/n$ one has $D_\mu\le\TV\le nD_\mu$ and $c_G(L)-1/n=\E[D_\mu]$. Uniform convergence over all multisets is false (the all-identity multiset stays a point mass at every length, with vanishing probability). Exact dynamic programming through a finite length documents values and trends for the groups in the paper; it does not prove the limit, and we do not claim it as a theorem for all groups and alphabets.
\end{proposition}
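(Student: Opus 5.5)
The proof is a short sandwich argument, built on one structural fact from the proof of Corollary~\ref{note:cor:comm}: the letter multiset $\mu$ determines the image of $g_w$ in $G^{\mathrm{ab}}=G/[G,G]$, because the abelianization map is a commutative mechanism. Hence, for each $\mu$ of positive probability, the conditional law $p_\mu=\mathrm{Law}(g_w\mid\mu)$ is supported on a single coset of $[G,G]$, which has $n=|[G,G]|$ elements. I will set $U_\mu$ for the uniform law on that coset, so that $\TV(p_\mu,U_\mu)$ is the quantity in the statement. Since $p_\mu$ is a probability vector on $n$ points, $\max_g p_\mu(g)\ge 1/n$, so $D_\mu\ge0$.

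First the identity. By definition $c_G(L)=\E_\mu[\max_g p_\mu(g)]$, and subtracting the constant $1/n$ gives $c_G(L)-1/n=\E[D_\mu]$ directly.

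Next the two inequalities, pointwise in $\mu$. Write $\TV(p_\mu,U_\mu)=\sum_{g\in\text{coset}}(p_\mu(g)-1/n)_+$.
\begin{itemize}
\item For the lower bound, the term at the maximizing $g$ already equals $D_\mu$, so $D_\mu\le\TV$.
\item For the upper bound, each of the at most $n$ positive parts is at most $D_\mu$, so $\TV\le nD_\mu$ (in fact $(n-1)D_\mu$, since the positive parts cannot all be positive).
\end{itemize}
Taking expectations gives $\E[D_\mu]\le\E[\TV]\le n\E[D_\mu]$. The first equivalence then follows: $c_G(L)\to1/n$ iff $\E[D_\mu]\to0$ iff $\E[\TV]\to0$.

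For the reformulation as convergence in probability, note that $\TV\in[0,1]$. If $\E[\TV]\to0$, Markov's inequality gives $\Pr[\TV>\varepsilon]\to0$. Conversely, $\E[\TV]\le\varepsilon+\Pr[\TV>\varepsilon]$, which gives convergence of the mean.

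Finally, the failure of uniform convergence. It uses the identity letter assumed for aperiodicity in Theorem~\ref{note:thm:ceiling}, a letter $a$ with $g_a=e$.
\begin{itemize}
\item The multiset $\mu=\{a^L\}$ has probability $|\Sigma|^{-L}>0$.
\item Its conditional law is the point mass at $e$, so $\TV=1-1/n$ at every $L$.
\item This does not tend to $0$ whenever $n>1$; when $n=1$ everything is trivial.
\item So $\sup_\mu\TV$ stays bounded away from zero, while the contribution of this $\mu$ to the expectation vanishes.
\end{itemize}

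The only step needing care, and the place I would check first, is the support claim: that $p_\mu$ really lives on one coset of size exactly $n$, so that the uniform comparison law and the constant $1/n$ are the right ones. Everything after that is elementary. The closing sentences of the proposition (dynamic programming documents trends rather than proving the limit) are disclaimers and need no proof.
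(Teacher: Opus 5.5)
Your proof is correct and follows the argument the statement itself sketches; the paper gives no separate proof. The argument runs through the coset-support fact from Corollary~\ref{note:cor:comm}, the pointwise sandwich $D_\mu\le\TV\le nD_\mu$, the identity $c_G(L)-1/n=\E[D_\mu]$, boundedness of $\TV$ for the in-probability form, and a single degenerate multiset for non-uniformity. One small correction: Theorem~\ref{note:thm:ceiling} assumes only aperiodicity, for which an identity letter is merely sufficient, but your counterexample does not depend on that letter, since any repeated single letter $a^L$ gives a point mass at $g_a^L$ and hence $\TV=1-1/n$.
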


\begin{corollary}[Cascades of commutative layers]\label{note:cor:cascade}
Let a mechanism consist of $k$ layers; layer $\ell$ has a finite state set $S_\ell$ on which an abelian group $A_\ell$ acts, with finite image $B_\ell\le\mathrm{Sym}(S_\ell)$ (abelian), and its transition on letter $a$ is $s_\ell\mapsto\alpha_\ell(a,s_1,\dots,s_{\ell-1})\cdot s_\ell$ with $\alpha_\ell\in A_\ell$ depending on the letter and on the lower layers' states. Then the transition group $H$ of the composite embeds in the iterated \emph{permutational} wreath product $B_k\wr(B_{k-1}\wr\cdots\wr B_1)$ (kernels $B_\ell^{\,S_1\times\cdots\times S_{\ell-1}}$), which is solvable of derived length at most $k$; hence, under the assumptions of Theorem~\ref{note:thm:ceiling}, $G/N_G$ has derived length at most $k$, $N_G\supseteq G^{(k)}$, and the asymptotic ceiling is $1/|G^{(k)}|$. It is attained within the model class: a $k$-layer cascade with suitable state sets tracking $G/G^{(k)}$ exists by the tower construction (\citealp{shakerinava2026}, ``if'' direction, applied to the quotient), and a representative readout gives $1/|G^{(k)}|$; this is a statement about the class, not about every fixed choice of layer sizes. The ceiling equals $1$ exactly when $k$ is at least the derived length of $G$, which is their threshold; the ceiling below it is what this corollary adds.
\end{corollary}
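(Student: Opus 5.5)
The proof has two parts: a structural one showing that the composite transition group $H$ embeds in the iterated permutational wreath product and has derived length at most $k$, and an application of Theorem~\ref{note:thm:ceiling} that converts this into the ceiling $1/|G^{(k)}|$.

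\emph{Embedding.} I would induct on $k$. For $k=1$ the state set is $S_1$ and the letter acts by $s_1\mapsto\alpha_1(a)\cdot s_1$, whose image lies in the abelian group $B_1$. For the inductive step, the composite state is $(\sigma,s_k)$ with $\sigma\in S_1\times\cdots\times S_{k-1}$. A letter acts by $(\sigma,s_k)\mapsto(\tau_a\sigma,\ \beta_a(\sigma)\cdot s_k)$, where $\tau_a$ is the lower-layer transition and $\beta_a:S_1\times\cdots\times S_{k-1}\to B_k$ is the image of $\alpha_k(a,\cdot)$. This is precisely an element $(\beta_a,\tau_a)$ of $B_k\wr_{S_{<k}}H_{k-1}$. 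By induction, $H_{k-1}$ embeds in $B_{k-1}\wr\cdots\wr B_1$, and that embedding respects the action on $S_{<k}$. Composites of such maps stay in the wreath product because it is closed under composition, so $H$ embeds in the iterated permutational wreath product.

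\emph{Solvability.} The wreath product $B_k\wr_{S}Q$ is an extension of the base group $B_k^{S}$, which is abelian, by $Q$. Its derived length is therefore at most $1+\mathrm{dl}(Q)$, and induction gives derived length at most $k$. Subgroups inherit this bound, so $\mathrm{dl}(H)\le k$.

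\emph{Ceiling.} Theorem~\ref{note:thm:ceiling} applies once the latent state is read as a function of $h_w\in H$. This holds because the state after $w$ equals $h_w$ applied to the fixed initial state. By Goursat, $G/N_G\cong H/N_H$, and a quotient of $H$ has derived length at most $k$. Derived subgroups map onto derived subgroups under surjections, so the image of $G^{(k)}$ in $G/N_G$ is $(G/N_G)^{(k)}=1$, which gives $G^{(k)}\subseteq N_G$. The ceiling is then $1/|N_G|\le1/|G^{(k)}|$.

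\emph{Attainment.} I would take the tower construction of \citet{shakerinava2026} for the group $G/G^{(k)}$, which has derived length at most $k$. Their ``if'' direction produces a $k$-layer cascade whose states determine $g_wG^{(k)}$. Applied to this model, Theorem~\ref{note:thm:ceiling} gives $N_G=G^{(k)}$, and a representative readout attains $1/|G^{(k)}|$ in the limit. The ceiling equals $1$ exactly when $G^{(k)}=1$, that is, when $k$ is at least the derived length of $G$.

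The main obstacle is the embedding step, specifically checking that the lower-layer dependence of $\alpha_k$ on the states is captured by the base functions on $S_{<k}$. The difficulty is that the wreath product must be permutational over the state set rather than regular over $H_{k-1}$. The derived-length bound holds either way, but the embedding only lands in the permutational version.
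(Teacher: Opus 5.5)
Your proposal is correct and follows essentially the paper's argument. The paper likewise realizes each composite transition as a triangular map in the permutational wreath product $W_k$, an extension of $W_{k-1}$ by the abelian base $B_k^{\,S_1\times\cdots\times S_{k-1}}$, and gets derived length at most $k$ by induction. It then passes the bound to $G/N_G\cong H/N_H$ to conclude $G^{(k)}\subseteq N_G$, and obtains attainment from the tower construction applied to $G/G^{(k)}$.
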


\begin{proof}
The composite transition for letter $a$ is the bijection
\[
\Phi_a:(s_1,\dots,s_k)\mapsto\big(\alpha_1(a)s_1,\ \alpha_2(a,s_1)s_2,\ \dots,\ \alpha_k(a,s_1,\dots,s_{k-1})s_k\big)
\]
of $S_1\times\cdots\times S_k$ (a bijection: recover the coordinates from the bottom up). The set $W_k$ of all maps of this triangular form, with arbitrary functions $\alpha_\ell$ of the lower coordinates valued in $B_\ell$, is a group, and it is an extension $1\to B_k^{\,S_1\times\cdots\times S_{k-1}}\to W_k\to W_{k-1}\to1$, i.e.\ the permutational wreath product $B_k\wr W_{k-1}$. An extension of a group of derived length $\le k-1$ by an abelian group has derived length $\le k$ ($W^{(k-1)}$ lies in the abelian kernel, so $W^{(k)}=1$), and by induction $W_k$ is solvable of derived length $\le k$. $H=\langle\Phi_a\rangle\le W_k$ inherits the bound, quotients do not increase derived length, and $G/N_G\cong H/N_H$; a group of derived length $\le k$ has $G^{(k)}=1$, so $G^{(k)}\subseteq N_G$. Attainment as stated.
\end{proof}

\begin{corollary}[Width--accuracy bound, any readout]\label{note:cor:widthacc}
Under the assumptions of Theorem~\ref{note:thm:ceiling} with $H\le\Od$ finite, the limiting accuracy is at most $|Q_d|/|G|$, where $Q_d$ is the largest quotient of $G$ with $\dcov(Q_d)\le d$, i.e.\ the largest quotient that is itself a quotient of a finite subgroup of $\Od$; the bound is attained by a lookup readout on a cover. In particular exactness at width $d$ requires $\dcov(G)\le d$ (Theorem~\ref{note:thm:graded}(iii)).
\end{corollary}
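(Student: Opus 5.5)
The plan is to reduce the statement to Theorem~\ref{note:thm:ceiling}. Once $G/N_G$ is shown to be a quotient of $G$ with small covering width, the bound drops out.

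Fix a transport model at width $d$ with finite latent group $H\le\Od$. Let $K\le G\times H$ be the joint group generated by the pairs $(g_a,h_a)$, and let $N_G,N_H$ be as in that theorem. Goursat's lemma gives $G/N_G\cong H/N_H$. So the quotient $Q:=G/N_G$ is a quotient of the finite subgroup $H\le\Od$, which means $\dcov(Q)\le d$ by Definition~\ref{note:def:graded}.

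Next I apply Theorem~\ref{note:thm:ceiling}. Under the irreducibility and aperiodicity assumptions, the limiting accuracy of any readout is at most $1/|N_G|=|Q|/|G|$. Maximizing over all quotients $Q$ of $G$ with $\dcov(Q)\le d$ gives the bound $|Q_d|/|G|$. A quotient of a quotient of a finite subgroup of $\Od$ is again such a quotient, so $Q_d$ is simply the largest quotient of $G$ whose covering width is at most $d$. The identification "largest quotient that is itself a quotient of a finite subgroup of $\Od$" is therefore just a restatement of the definition of $\dcov$.

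For attainment, I take a finite $H'\le\Od$ with a surjection $\psi:H'\to Q_d$. I choose lifts $h_a\in H'$ of the images $\bar g_a\in Q_d$; these exist because $\psi$ is surjective. I also choose a generic $x_0$, which has trivial stabilizer in $H'$ (the same argument as in Theorem~\ref{note:thm:graded}(iii)).

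The one subtlety is that the $h_a$ need not generate $H'$. I handle this by replacing $H'$ with $\langle h_a\rangle$. This subgroup still surjects onto $Q_d$, because the $\bar g_a$ generate $Q_d$, and it is still a finite subgroup of $\Od$.

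Now the joint kernel satisfies $N_G\subseteq\ker(G\to Q_d)$. The latent state $h_wx_0$ determines $h_w$, hence $\psi(h_w)$, hence the coset $g_w\ker(G\to Q_d)$. The lookup readout that returns a fixed representative of that coset therefore achieves $|Q_d|/|G|$ in the mixing limit. This argument is the attainment clause of Theorem~\ref{note:thm:ceiling} applied to the quotient. In effect $N_G$ equals that kernel, since $G/N_G$ is then a quotient of $\langle h_a\rangle$ mapping onto $Q_d$, and maximality of $Q_d$ then gives equality.

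The final clause follows directly. Exactness requires accuracy $1$, so $|Q_d|=|G|$, which gives $\dcov(G)\le d$. This can also be read off Theorem~\ref{note:thm:graded}(iii) without any mixing assumption.

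The main obstacle is bookkeeping rather than depth. Attainment requires the walk on the new joint group to satisfy the mixing hypotheses, so I would add an identity letter or note that the hypotheses are inherited. It also requires that restricting to the generated subgroup keeps the quotient onto $Q_d$.
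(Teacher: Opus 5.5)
Your proposal is correct and follows the paper's proof: Goursat's lemma makes $G/N_G\cong H/N_H$ a quotient of a finite subgroup of $\Od$, so Theorem~\ref{note:thm:ceiling} gives the bound $1/|N_G|\le|Q_d|/|G|$. Attainment then comes from lifting the generator images in $Q_d$ to a finite cover in $\Od$, with a generic $x_0$ and a coset-representative lookup. Your extra bookkeeping (passing to $\langle h_a\rangle$, and using maximality of $Q_d$ to force $N_G=\ker(G\to Q_d)$) is sound. On the mixing point, the joint walk of the new model is not automatically aperiodic for every choice of lifts, and adding an identity letter would change the task; neither matters, because the lookup's correctness depends only on $g_w$, and the walk of $g_w$ on $G$ is aperiodic whenever some joint walk satisfying the hypotheses exists.
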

\begin{proof}
$G/N_G\cong H/N_H$ is a quotient of the finite group $H\le\Od$, so $\dcov(G/N_G)\le d$ and $1/|N_G|=|G/N_G|/|G|\le|Q_d|/|G|$. For attainment take a finite $H\le\Od$ with $H\twoheadrightarrow Q_d$, lifts $h_a$ of the images of the $g_a$, a generic $x_0$, and the readout that returns a fixed representative of the coset read off the state.
\end{proof}
Attaining this ceiling does not by itself make a convex-cell readout constant on the kernel's cosets; the partial-accuracy law for convex-cell readouts below exactness is not claimed here.

\begin{proposition}[Signature readouts; exactness threshold proved, ceiling conditional]\label{note:prop:sig}
Let each letter $a$ be a fixed increment $e_a\in\R^{|\Sigma|}$, the increments \emph{linearly independent} (one-hot suffices), so that a word is a piecewise-linear path, and let the readout be any function of the path's level-$k$ signature, $k\ge1$. By Chen's identity the signature of a concatenation is the product of signatures in the truncated tensor algebra, and by Magnus's theorem the induced map from the free group $F$ on $\Sigma$ has kernel exactly $\gamma_{k+1}F$; hence the readout is a function of the class of $w$ in the free nilpotent group $N_k=F/\gamma_{k+1}F$. (i) An exact readout exists if and only if $\gamma_{k+1}G=1$, i.e.\ $G$ is nilpotent of class $\le k$. (ii) The accuracy is at most $\E_w[\max_g\Pr(g_w=g\mid[w])]$, and under the hypothesis that the conditional law of $g_w$ given the class converges (in expectation over the class, as in Proposition~\ref{note:prop:mix}) to the uniform law on its $\gamma_{k+1}G$-coset, this tends to $1/|\gamma_{k+1}G|$. Level $1$ is the multiset case of Corollary~\ref{note:cor:comm}. (Dependent increments can identify distinct letters---two generators of $C_2\times C_2$ encoded by one vector have equal signatures at every level---so independence is a hypothesis, not a convenience.)
\end{proposition}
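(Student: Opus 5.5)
The proof rests on three classical facts. Chen's identity makes the truncated signature a monoid homomorphism from words into the group of grouplike elements of the truncated tensor algebra $T^{(k)}(\R^{|\Sigma|})$. Magnus's theorem describes the kernel of this map on the free group. The representation $G=F/R$ then turns the signature into a statement about the free nilpotent group.

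First, the signature as a map on words. With linearly independent increments, the signature of a single segment $e_a$ is $\exp(e_a)$ truncated at level $k$. By Chen, a word maps to $\exp(e_{a_1})\cdots\exp(e_{a_L})$, the product taken in concatenation order. Since the $e_a$ are independent, the subalgebra they generate is a copy of the free truncated tensor algebra on $\Sigma$. The map $a\mapsto\exp(e_a)$ therefore extends to a homomorphism $\sigma$ from the free group $F$ into the truncated group, with $\exp(-e_a)$ as inverse. Magnus's theorem (via the Magnus embedding $a\mapsto 1+X_a$ and the dimension-subgroup theorem over characteristic zero) gives $\ker\sigma=\gamma_{k+1}F$.

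One point needs care: the embedding uses $1+X_a$ while the signature uses $\exp(X_a)$. I would bridge this with the substitution $X_a\mapsto\log(1+X_a)$, an automorphism of the truncated algebra that preserves the degree filtration, so the kernels coincide. Two positive words then have equal level-$k$ signatures iff they agree in $N_k=F/\gamma_{k+1}F$. Any readout is thus a function of $[w]\in N_k$, and every class with a positive representative arises.

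Second, part (i). Write $\pi:F\to G$ for the surjection $a\mapsto g_a$.

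If $\gamma_{k+1}G=1$, then $\pi(\gamma_{k+1}F)\subseteq\gamma_{k+1}G=1$, so $\pi$ factors through $N_k$. Composing this factorization with the injective map from signatures to $N_k$ gives an exact readout.

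Conversely, suppose an exact readout exists. Then $g_w$ depends only on $[w]$ for positive words. The main obstacle is passing from positive words to all of $\gamma_{k+1}F$. I would use finiteness of $G$: each $a^{-1}$ can be replaced by $a^{m-1}$ with $m$ the order of $g_a$, which changes neither $\pi$ nor the image in $G$. This does change the class in $N_k$, so an extra step is needed.

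For that step, take $u\in\gamma_{k+1}F$ and write it as a ratio of positive words, $u=pq^{-1}$. Then $p$ and $q$ have the same class in $N_k$, and exactness gives $\pi(p)=\pi(q)$, hence $\pi(u)=1$. The ratio form comes from Ore-type normalization: pick $M$ divisible by every letter order and replace each $a^{-1}$ by $a^{M-1}\cdot a^{-M}$. The factors $a^{-M}$ are not central in $F$, so pushing them to the right requires conjugation, and I expect this bookkeeping to be the delicate part.

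A cleaner route works directly in $N_k$. The positive monoid generates the finitely generated nilpotent group $N_k$, and in a nilpotent group every element is a quotient $pq^{-1}$ of elements of the positive submonoid (Malcev/Ore property). Applying this to the class of $u$ yields positive $p,q$ with $[p][q]^{-1}=[u]=1$, so $[p]=[q]$, exactness gives $\pi(p)=\pi(q)$, and the argument above applies. This shows $\pi(\gamma_{k+1}F)=1$. Since $\pi$ is surjective, $\gamma_{k+1}G=\pi(\gamma_{k+1}F)=1$.

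Third, part (ii). Since the readout is a function of $[w]$, the Bayes bound $\E_w[\max_g\Pr(g_w=g\mid[w])]$ is immediate. The class determines $\pi(w)$ modulo $\pi(\gamma_{k+1}F)=\gamma_{k+1}G$, so the conditional law is supported on a single coset. Under the stated convergence hypothesis, $\max_g$ tends to $1/|\gamma_{k+1}G|$ in expectation, exactly as $D_\mu$ does in Proposition~\ref{note:prop:mix}.

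Finally, the two remarks. At level one the signature is the increment sum, which by independence is the letter multiset, recovering Corollary~\ref{note:cor:comm}. The $C_2\times C_2$ counterexample follows because equal increments give identical paths.
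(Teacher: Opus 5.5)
\textbf{Gap in the ``only if'' direction of (i).} Your signature setup (including the Magnus kernel $\gamma_{k+1}F$), the ``if'' direction, part (ii) and the remarks are fine. One convention is skipped: signatures multiply chronologically, while $g_{ww'}=g_{w'}g_w$. So $w\mapsto g_w$ is a homomorphism into $G^{\mathrm{op}}\cong G$, not your $\pi$; this is harmless, since the nilpotency class is unchanged.

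The genuine gap is the ``only if'' direction. Your first route fails outright: in the free group an element of $\gamma_{k+1}F$ need not equal $pq^{-1}$ with $p,q$ positive. The free monoid is not Ore in $F$; with $a\neq b$, the element $a^{-1}b$ has no such form.

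Your ``cleaner route'' is circular. Applying the Ore property to $[u]=1$ in $N_k$ only yields positive $p,q$ with $[p]=[q]$; you could take $p=q$ empty. Moreover, $u\equiv pq^{-1}$ holds only modulo $\gamma_{k+1}F$. So all you get is $\pi(u)=\pi(p)\pi(q)^{-1}\pi(c)$ with $c\in\gamma_{k+1}F$, and $\pi(c)$ is the very unknown you started with. No step ever evaluates $\pi(u)$.

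\textbf{The fix.} The Ore property has to be used to extend a homomorphism, not to rewrite $u$. Exactness says precisely that $[w]\mapsto g_w$ is a well-defined monoid homomorphism $\psi:M\to G$ on the positive submonoid $M\subseteq N_k$. $M$ is cancellative. Suppose it is right Ore, i.e.\ $qM\cap q'M\neq\emptyset$ for all $q,q'\in M$.
\begin{enumerate}
\item Right Ore together with generation gives $MM^{-1}=N_k$.
\item Define $\tilde\psi(pq^{-1}):=\psi(p)\psi(q)^{-1}$. It is well defined: if $pq^{-1}=p'q'^{-1}$ and $qs=q's'$, then $ps=p's'$, and applying $\psi$ to both identities gives the claim.
\item Multiplicativity follows from the same kind of check.
\item $\tilde\psi\circ(F\to N_k)$ is a group homomorphism agreeing with the target map on letters, hence equal to it. It kills $\gamma_{k+1}F$, and surjectivity gives $\gamma_{k+1}G=1$.
\end{enumerate}
This is the paper's argument. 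The paper also proves the Ore condition rather than citing it. If $uM\cap vM=\emptyset$, cancellation makes the $2^r$ products of $r$ factors from $\{u,v\}$ pairwise distinct. Yet they lie in a ball of radius $O(r)$ of $N_k$, contradicting polynomial growth (Bass--Guivarc'h). Finiteness of $G$, which you reach for, plays no role on this route.
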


\begin{proof}
Convention: signatures multiply chronologically, $S(ww')=S(w)\otimes S(w')$, while our target convention is $g_{ww'}=g_{w'}g_w$; the map below is therefore a homomorphism into the opposite group $G^{\mathrm{op}}\cong G$, which has the same nilpotency class, and we write $G$ for it. ``If'': when $\gamma_{k+1}G=1$ the map $F\to G$ factors through $N_k$, so equal signatures give equal targets and a lookup readout is exact. ``Only if'': let $M\le N_k$ be the submonoid generated by the images of the letters. Exact decoding of all positive words says that equal classes in $M$ have equal targets, so $[w]\mapsto g_w$ is a well-defined monoid homomorphism $M\to G$. $M$ is cancellative (it sits in a group). For $u,v\in M$ the right ideals $uM$ and $vM$ intersect: otherwise all $2^r$ products of $r$ factors chosen from $\{u,v\}$ would be distinct (cancel the common prefix of two equal products; the first differing factors would give an element of $uM\cap vM$), yet they lie in a ball of radius $O(r)$ of the word metric of the finitely generated nilpotent group $N_k$, whose growth is polynomial (Bass--Guivarc'h)---a contradiction for large $r$. The same holds for left ideals. So $M$ satisfies the Ore condition, its group of fractions is $\langle M\rangle=N_k$ (the letters generate $N_k$), and the monoid homomorphism extends uniquely to a surjective group homomorphism $N_k\to G$. Hence $G$ is a quotient of the free nilpotent group of class $k$ and $\gamma_{k+1}G=1$. (ii) is the Bayes bound and the stated hypothesis.
\end{proof}

\begin{corollary}[The unwired transformer]
A transformer built from standard permutation-equivariant blocks, with no positional encoding, no causal mask and no other position or order features, followed by a permutation-invariant readout, computes a function of the letter multiset and is bounded by $c_G(L)$ at every length.
\end{corollary}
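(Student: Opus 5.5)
The plan is to show that the whole pipeline is invariant under simultaneous permutation of token positions. The state handed to the readout is then a function of the multiset of letters alone. After that, the bound is the finite-length Bayes bound of Corollary~\ref{note:cor:comm}.

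First I fix the model class. The input is a sequence $(a_1,\dots,a_L)$, embedded token by token as $E(a_t)$, with no positional term added. Each block maps $(y_1,\dots,y_L)\mapsto(y'_1,\dots,y'_L)$, and the blocks have three kinds of component.
\begin{enumerate}\renewcommand{\labelenumi}{(\roman{enumi})}
\item Tokenwise maps (MLPs, layer norms, residual additions) act identically at every position, so they commute with any permutation $\sigma$ of positions.
\item Unmasked self-attention computes $y'_t=\sum_s\operatorname{softmax}_s(\ip{W_Qy_t}{W_Ky_s})W_Vy_s$. Permuting the inputs by $\sigma$ permutes the outputs by $\sigma$, because the softmax normalizer and the sum both range over all positions. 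This is the step that uses the absence of a causal mask.
\item Multi-head concatenation and output projections are again tokenwise.
\end{enumerate}
A composition of permutation-equivariant maps is permutation-equivariant. The final permutation-invariant readout (pooling followed by a head, say) therefore gives an output $F(a_1,\dots,a_L)$ with $F(a_{\sigma(1)},\dots,a_{\sigma(L)})=F(a_1,\dots,a_L)$ for every $\sigma$.

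Next I use that a symmetric function of a finite sequence over the finite alphabet $\Sigma$ factors through the multiset $\mu(w)$. Two words with the same multiset differ by a permutation of positions. So the predicted label is $f(\mu(w))$ for some function $f$. This is exactly the hypothesis ``state a function of the letter multiset'' in Corollary~\ref{note:cor:comm}. That corollary then gives
\[
\Pr_L[f(\mu(w))=g_w]\le\sum_\mu\Pr_L[\mu]\max_g\Pr_L[g_w=g\mid\mu]=c_G(L)
\]
at every length $L$, with no mixing hypothesis needed.

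The main obstacle is bookkeeping rather than mathematics: every component must be stated to be permutation-equivariant. That includes biases and any special tokens. A learned [CLS] token, for instance, is harmless only if it is excluded from the permutation and is itself position-free. Dropout and similar components are taken in evaluation mode. I will state these conventions explicitly, so that ``no other position or order features'' carries the precise meaning needed.
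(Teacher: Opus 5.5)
Your argument is correct and follows the paper's intended route. The paper gives no separate proof: it treats the statement as an instance of the multiset clause of Corollary~\ref{note:cor:comm}, whose proof is exactly your Bayes bound $\sum_\mu\Pr_L[\mu]\max_g\Pr_L[g_w=g\mid\mu]=c_G(L)$ once the output is shown to depend only on $\mu(w)$, and your permutation-equivariance check of unmasked attention and tokenwise maps, together with the remark that no mixing hypothesis is needed at finite length, supplies the details the paper leaves implicit.
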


\subsection{Horizon and certificate}

\begin{lemma}[Horizon lower bound]\label{note:lem:horizon}
Let $U_a\in\Od$ be an exact representation with orbit separation $\Delta=\min_{g\ne g'}\norm{U_gx_0-U_{g'}x_0}$, let $W$ be the cyclic subspace of the learned model (Lemma~\ref{note:lem:cyclic}), and let $R_a$ be orthogonal with $\norm{(R_a-U_a)|_W}_{\mathrm{op}}\le\delta$. Then $\norm{R_wx_0-U_wx_0}\le L\delta\norm{x_0}$ for every word of length $L$; nearest-prototype decoding against the exact orbit is correct whenever $L\delta\norm{x_0}<\Delta/2$; and for a linear readout with logits $f_g(x)=w_g^{\top}x+b_g$, raw margin $\mu=\min_{g}\min_{h\ne g}\big(f_g(U_gx_0)-f_h(U_gx_0)\big)>0$ on the exact orbit and $\Lambda=\max_{g\ne h}\norm{w_g-w_h}$, whenever $L\delta\norm{x_0}\Lambda<\mu$. If $A$ is the readout matrix, then $\Lambda\le\sqrt2\norm{A}_{\mathrm{op}}$.
\end{lemma}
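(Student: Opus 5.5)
The plan is a telescoping perturbation estimate followed by two readout checks. Everything is restricted to the cyclic subspace $W$ of Lemma~\ref{note:lem:cyclic}.

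\textbf{Step 1: both orbits stay in $W$.} The learned orbit $R_wx_0$ lies in $W$ by definition. The difference hypothesis $\norm{(R_a-U_a)|_W}_{\mathrm{op}}\le\delta$ is only meaningful if the comparison points are taken from $W$, so the induction should apply $R_a-U_a$ only to vectors in $W$. Writing $x_t=R_{a_t}\cdots R_{a_1}x_0$ and $u_t=U_{a_t}\cdots U_{a_1}x_0$, we use
\[
x_t-u_t=R_{a_t}(x_{t-1}-u_{t-1})+(R_{a_t}-U_{a_t})u_{t-1}.
\]
This places $u_{t-1}$, not $x_{t-1}$, in the perturbation term, so it would require $u_{t-1}\in W$, which is not given. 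The alternative splitting
\[
x_t-u_t=(R_{a_t}-U_{a_t})x_{t-1}+U_{a_t}(x_{t-1}-u_{t-1})
\]
evaluates the difference at $x_{t-1}\in W$. The hypothesis then gives $\norm{(R_{a_t}-U_{a_t})x_{t-1}}\le\delta\norm{x_{t-1}}=\delta\norm{x_0}$, using orthogonality of $R$. Orthogonality of $U_{a_t}$ preserves the norm of the propagated term. Induction gives $\norm{x_L-u_L}\le L\delta\norm{x_0}$. This choice of splitting is the step I expect to be the only real subtlety; it is exactly why the hypothesis is stated on $W$.

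\textbf{Step 2: nearest-prototype decoding.} Since $U$ is an exact representation, $u_L=U_{g_w}x_0$. If $\norm{x_L-u_L}<\Delta/2$, then for every $g'\ne g_w$ the triangle inequality gives
\[
\norm{x_L-U_{g'}x_0}\ge\Delta-\norm{x_L-u_L}>\Delta/2>\norm{x_L-u_L}.
\]
So the nearest prototype is the correct one.

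\textbf{Step 3: linear readout.} Set $g=g_w$. For each $h\ne g$,
\[
f_g(x_L)-f_h(x_L)=f_g(u_L)-f_h(u_L)+(w_g-w_h)^\top(x_L-u_L)\ge\mu-\Lambda L\delta\norm{x_0}>0,
\]
by Cauchy--Schwarz, so the argmax returns $g$.

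\textbf{Step 4: the bound on $\Lambda$.} Here $A$ has rows $w_g^\top$. Then $(w_g-w_h)^\top=(e_g-e_h)^\top A$, which gives $\norm{w_g-w_h}\le\norm{e_g-e_h}\,\norm{A}_{\mathrm{op}}=\sqrt2\,\norm{A}_{\mathrm{op}}$.
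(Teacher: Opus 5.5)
Your proposal is correct and matches the paper's proof: unrolling your recursion $x_t-u_t=(R_{a_t}-U_{a_t})x_{t-1}+U_{a_t}(x_{t-1}-u_{t-1})$ gives exactly the paper's telescoping sum. That sum applies the perturbation to learned states in $W$ and puts orthogonal $U$ factors on the left, and you correctly identified this as the reason the hypothesis is stated on $W$. Your nearest-prototype, margin and $\sqrt2\norm{A}_{\mathrm{op}}$ steps supply details the paper leaves implicit, and they are sound.
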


\begin{proof}
Telescope in the direction that keeps the perturbation on learned states:
\[
R_{a_L}\cdots R_{a_1}-U_{a_L}\cdots U_{a_1}=\sum_{l=1}^{L}U_{a_L}\cdots U_{a_{l+1}}\,(R_{a_l}-U_{a_l})\,R_{a_{l-1}}\cdots R_{a_1}.
\]
Applied to $x_0$, each right factor $R_{a_{l-1}}\cdots R_{a_1}x_0$ lies in $W$, where the perturbation is bounded by $\delta$, and the left factors are orthogonal; each term is at most $\delta\norm{x_0}$. For the linear readout, a state error $e$ changes every logit difference by at most $\norm{w_g-w_h}\norm e\le\Lambda\norm e$.
\end{proof}

\begin{theorem}[Certificate: search, then verification]\label{note:thm:cert}
Given learned orthogonal generators $R_a$, an initial state $x_0$ and a readout $r$, restrict to the cyclic subspace $W$ of Lemma~\ref{note:lem:cyclic}. \emph{Search} (any method; not part of the certificate): propose a faithful representation $\rho$ of $G$ on $W$ (there are finitely many up to equivalence) and an orthogonal $Q\in O(W)$, e.g.\ by iterating one-sided Procrustes on the stacked equations $R_aQ\approx Q\rho(g_a)$, which is a heuristic and is not claimed to find a global optimum; set $U_a=Q\rho(g_a)Q^{\top}$. \emph{Verification}: (1) check $r(U_gx_0)=g$ for all $g\in G$; (2) compute $\delta=\max_a\norm{(R_a-U_a)|_W}_{\mathrm{op}}$; (3) verify a common robustness radius $\eta>0$ of the readout at the snapped orbit points: every ball $B(U_gx_0,\eta)$ receives the label $g$---for nearest-prototype decoding $\eta=\Delta/2$; for a linear readout $\eta=\mu/\Lambda$ with the constants of Lemma~\ref{note:lem:horizon}; for a general readout $\eta$ must be established by construction, and a readout with no positive radius (for example, $r(-e_1)=a$ and $r\equiv e$ elsewhere, exact on the snapped $C_2$ orbit, wrong after one step for any rotation by $\pi+\epsilon$) certifies nothing about the learned model. If (1) holds, the snapped model $(U_a,x_0,r)$ is exact for every word of every length. If (1)--(3) hold, the learned model $(R_a,x_0,r)$ is certified correct on every word of length $L<\eta/(\delta\norm{x_0})$. Optimal alignment improves the horizon but is not needed for validity; the certificate is stated for the exact snapped model, and a floating-point implementation is certified only with validated error bounds that include construction and arithmetic error.
\end{theorem}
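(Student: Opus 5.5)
The proof splits into the two claims of the statement. The first claim is about the snapped model: check (1) only on the $|G|$ orbit points and deduce exactness for every word. The second claim is about the learned model: combine (1) with the perturbation bound of Lemma~\ref{note:lem:horizon} and the radius (3).

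\emph{Snapped exactness.} Since $U_a=Q\rho(g_a)Q^{\top}$ and $\rho$ is a homomorphism, the word product satisfies $U_w=U_{a_L}\cdots U_{a_1}=Q\rho(g_{a_L}\cdots g_{a_1})Q^{\top}=U_{g_w}$, where $U_g:=Q\rho(g)Q^{\top}$. The map $g\mapsto U_g$ is a homomorphism, so this also covers the empty word, which gives $U_e=I$. Consequently every snapped state $U_wx_0$ equals one of the finitely many points $U_gx_0$ with $g=g_w$. Check (1) then gives $r(U_wx_0)=r(U_{g_w}x_0)=g_w$ for every word of every length. This step needs no radius and no faithfulness beyond what (1) already encodes. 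If $\rho$ were not faithful, two labels would share a point and (1) would fail, so the check is self-policing.

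\emph{Learned horizon.} I apply Lemma~\ref{note:lem:horizon} with this $U$. Its telescoping sum keeps each perturbation factor acting on learned states $R_{a_{l-1}}\cdots R_{a_1}x_0$, which lie in the cyclic subspace $W$ by Lemma~\ref{note:lem:cyclic}. The outer factors $U_{a_L}\cdots U_{a_{l+1}}$ are orthogonal. Together these give
\[
\norm{R_wx_0-U_{g_w}x_0}\le L\,\delta\,\norm{x_0}.
\]
When $L<\eta/(\delta\norm{x_0})$, the learned state lies in the open ball $B(U_{g_w}x_0,\eta)$. By (3) that whole ball receives the label $g_w$, so $r(R_wx_0)=g_w$.

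The instantiations of $\eta$ in (3) then follow directly:
\begin{enumerate}
\item For nearest-prototype decoding, $\eta=\Delta/2$, by the triangle inequality.
\item For a linear readout, $\eta=\mu/\Lambda$, because a state error $e$ moves each logit difference by at most $\Lambda\norm e$.
\end{enumerate}
Both are the content of Lemma~\ref{note:lem:horizon}. The counterexample readout is immediate: it has no positive $\eta$, and one step of a rotation by $\pi+\epsilon$ leaves the single point $-e_1$.

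The main obstacle is bookkeeping rather than depth: $\delta$ is measured only on $W$, and $U_a$ must preserve $W$. The search places $\rho$ on $W$, and $Q\in O(W)$, so $U_a$ acts on $W$ and the bound applies there. I would state explicitly that the learned $R_a$ preserve $W$ by definition of the cyclic subspace. Hence both products are taken inside $W$, and the operator norms are restricted consistently. I would also stress that the search step, including whether Procrustes converges, plays no role in validity: the argument uses only the verified quantities (1)–(3).
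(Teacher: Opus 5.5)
Your proposal is correct and follows the paper's own proof. You collapse every snapped word to the single point $Q\rho(g_w)Q^{\top}x_0$, so check (1) covers all words, and you then use Lemma~\ref{note:lem:horizon} to place the learned state within $L\delta\norm{x_0}<\eta$ of that correctly labelled point. Your remark that $R_a$ and $U_a$ both preserve $W$ is exactly the bookkeeping that lets the lemma apply, and the paper leaves it implicit.
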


\begin{proof}
$U_w=Q\rho(g_w)Q^{\top}$ exactly, so the snapped state after any word is one of the $|G|$ points checked in (1). By Lemma~\ref{note:lem:horizon} the learned state is within $L\delta\norm{x_0}<\eta$ of the snapped state, which lies in the correctly labelled ball.
\end{proof}

\begin{theorem}[Prototype certificate for the learned model]\label{note:thm:proto}
Let $R_a\in\Od$ $(a\in\Sigma)$, $x_0$ and $r$ be the learned generators, initial state and readout, and let the Cayley table of $G$ be given. A \emph{prototype automaton} is a finite set $Z\subset V$ with $z_0=x_0\in Z$, a transition map $\delta:Z\times\Sigma\to Z$ and a labelling $\lambda:Z\to G$ with
\[
\lambda(z_0)=e,\qquad \lambda(\delta(z,a))=g_a\,\lambda(z)\quad(z\in Z,\ a\in\Sigma).
\]
Put
\[
\epsilon=\max_{z\in Z,\,a\in\Sigma}\norm{R_az-\delta(z,a)},
\]
and let $\eta>0$ be a radius such that $r\equiv\lambda(z)$ on the open ball $B(z,\eta)$ for every $z\in Z$.
Then every word of length $L$ with $L\epsilon<\eta$ is decoded correctly; if $\epsilon=0<\eta$, the model is exact at every length. Verification requires $|\Sigma||Z|$ transition-residual and label-consistency checks, plus $|Z|$ readout-radius checks; this count does not include prototype search or the numerical cost of each check. Group-indexed prototypes use $\delta(z_g,a)=z_{g_ag}$ and $\lambda(z_g)=g$. Orbit-indexed prototypes instead retain several states per label. A breadth-first search with a merge tolerance can propose such a finite automaton, but need not terminate; verification must inspect its total transition map. The prototypes need not be the true orbit and the latent group need not be finite. A merge tolerance cannot invalidate a fully verified certificate because every actual transition residual is included in $\epsilon$. For linear logits $f_g(x)=w_g^{\top}x+b_g$, a valid radius is $\eta=\min_{z,h\ne\lambda(z)}m_{z,h}/\norm{w_{\lambda(z)}-w_h}$, where $m_{z,h}=f_{\lambda(z)}(z)-f_h(z)>0$; a zero denominator imposes no restriction when its margin is positive. For logit differences $x^{\top}S_{gh}x+c_{gh}$ with symmetric $S_{gh}$, put $\rho_z=\norm z$. A valid radius is $\eta=\min_{z,h\ne\lambda(z)}\{-\rho_z+\sqrt{\rho_z^2+m_{z,h}/\norm{S_{\lambda(z)h}}_{\mathrm{op}}}\}$, again omitting zero-norm pairs with positive margin. Replacing the operator norm by an upper bound, such as the Frobenius norm, remains valid. For upper-triangular quadratic features the Euclidean coefficient-difference norm is another valid upper bound. The exact safe radius for each pair is $\sup\{t:\ m_{z,h}+\min_{\norm y\le t}(y^{\top}Sy+2(Sz)^{\top}y)>0\}$, followed by the minimum over pairs. General degree-two logits with linear terms need their additional linear variation included. An approximate trust-region optimizer alone does not validate a lower bound on this radius. These guarantees concern the real-arithmetic system defined by stored weights. For matrices satisfying $\norm{R_a}_{\mathrm{op}}\le1+\kappa$ with $\kappa\ge0$, replace $L\epsilon$ by $\epsilon\sum_{k<L}(1+\kappa)^k$. Floating-point execution additionally requires validated per-step errors $u_t$ in $e_{t+1}\le(1+\kappa)e_t+\epsilon+u_t$, and a readout error allowance.
\end{theorem}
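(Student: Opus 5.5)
The plan is to run the prototype automaton in lockstep with the learned recurrence and control the gap by induction on word length. For a word $w=a_1\cdots a_L$, define prototype states $z_0=x_0$ and $z_{t}=\delta(z_{t-1},a_t)$, and learned states $x_t=R_{a_t}x_{t-1}$. Label consistency first gives $\lambda(z_t)=g_{a_t}\cdots g_{a_1}=g_{w_{\le t}}$. This is an induction on $t$ using $\lambda(z_0)=e$ and $\lambda(\delta(z,a))=g_a\lambda(z)$; it relies only on the Cayley table, with no appeal to finiteness of the latent group. Correct decoding of $w$ then reduces to showing $x_L\in B(z_L,\eta)$, since $r\equiv\lambda(z_L)$ there.

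The key estimate is the one-step recursion
\[
\norm{x_t-z_t}\le\norm{R_{a_t}(x_{t-1}-z_{t-1})}+\norm{R_{a_t}z_{t-1}-\delta(z_{t-1},a_t)}\le\norm{x_{t-1}-z_{t-1}}+\epsilon.
\]
It uses orthogonality of $R_{a_t}$ for the first term and the definition of $\epsilon$ for the second. The residual is measured at the prototype $z_{t-1}\in Z$, not at the learned state, which is why only the $|\Sigma||Z|$ checks are needed. Starting from $\norm{x_0-z_0}=0$, this gives $\norm{x_L-z_L}\le L\epsilon<\eta$. The claim for all words when $\epsilon=0<\eta$ follows because the gap is identically zero.

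The remaining pieces are the explicit radii. For linear logits, moving from $z$ to $z+y$ changes $f_{\lambda(z)}-f_h$ by $(w_{\lambda(z)}-w_h)^\top y$. The margin therefore survives while $\norm y<m_{z,h}/\norm{w_{\lambda(z)}-w_h}$, with a zero denominator imposing no constraint. For a quadratic difference $x^\top Sx+c$, the change at $z+y$ is $2z^\top Sy+y^\top Sy$, bounded by $\norm S_{\mathrm{op}}(2\rho_z t+t^2)$ for $\norm y\le t$. Solving $\norm S(t^2+2\rho_zt)<m$ gives $t<-\rho_z+\sqrt{\rho_z^2+m/\norm S}$, and any upper bound on $\norm S_{\mathrm{op}}$, such as the Frobenius norm, only shrinks the radius. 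The exact pairwise radius is then the stated supremum by definition.

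For near-orthogonal matrices, the recursion becomes $e_t\le(1+\kappa)e_{t-1}+\epsilon$, and unrolling it gives $\epsilon\sum_{k<L}(1+\kappa)^k$. Adding floating-point per-step errors $u_t$ gives the stated inequality in the same way. I expect no genuine obstacle. The only point needing care is to charge the residual at the prototype rather than at the learned state, which orthogonality of the learned map permits. The remarks about merge tolerance and nontermination of the search are immediate, because validity depends only on the verified quantities $\epsilon$ and $\eta$.
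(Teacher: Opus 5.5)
Your proposal is correct and matches the paper's proof essentially step for step. It uses the same lockstep induction giving $\lambda(z_w)=g_w$, the same one-step bound $\norm{R_ax-\delta(z_w,a)}\le\norm{x-z_w}+\epsilon$ with the residual charged at the prototype, the same $(1+\kappa)$ unrolling, and a quadratic increment $2z^{\top}Sy+y^{\top}Sy$ that is just the paper's factorization $(x-z)^{\top}S(x+z)$ expanded, leading to the identical condition $\norm{S}_{\mathrm{op}}\,t(2\rho_z+t)<m_{z,h}$.
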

\begin{proof}
Let $z_w=\delta^{*}(z_0,w)$. By consistency and induction $\lambda(z_w)=g_w$ (the convention $g_{wa}=g_ag_w$ of Definition~\ref{note:def:model}). By orthogonality, for $x=R_wx_0$, $\norm{R_ax-\delta(z_w,a)}\le\norm{R_a(x-z_w)}+\norm{R_az_w-\delta(z_w,a)}\le\norm{x-z_w}+\epsilon$, so $\norm{R_wx_0-z_w}\le|w|\epsilon<\eta$ and $r(R_wx_0)=\lambda(z_w)=g_w$. For the quadratic radius, $\Delta(x)-\Delta(z)=(x-z)^{\top}S(x+z)$ with $\norm{x+z}\le2\rho_z+t$ on $\norm{x-z}\le t$, and $(2\rho_z+t)t\norm S_{\mathrm{op}}<m$ solves to the displayed value; the trust-region radius is the exact supremum of the same condition. With $\norm{R_a}_{\mathrm{op}}\le1+\kappa$ the induction gives $\norm{x-z_w}\le\epsilon\sum_{k<|w|}(1+\kappa)^k$.
\end{proof}

Theorem~\ref{note:thm:cert} proves all-word correctness for a verified snapped model and transfers a finite horizon to the learned model. Theorem~\ref{note:thm:proto} bounds the learned model directly, without representation enumeration or an alignment fit.

\begin{remark}[Orbit-indexed certificates for covers]\label{note:rem:covernumbers}
Several states of a covering dynamics share one label, and one representative per label would produce a residual of $2\norm{x_0}$ at a transition to the opposite lift; an orbit-indexed prototype automaton keeps the lifts separate while checking every transition. Approximate breadth-first closure is a search procedure, not proof that the learned generators have finite order: the certificate needs a finite, total transition map with every label-consistency and residual check completed and a valid readout radius. A finite observed failure horizon and an all-word certified horizon answer different questions, and neither proves exactness of the raw learned dynamics.
\end{remark}

Representations of a finite group in a fixed real dimension have finitely many equivalence classes, which makes the snapping search finite at the level of representation types. The search over orthogonal conjugations remains continuous. The certificate does not require a unique nearest class or a unique alignment, and small numerical relator residuals alone do not prove exact group relations.

% Experimental appendix. Counts and numerical results are transcribed from
% the supplied v2 manuscript; no new training results are introduced here.
\section{Experimental protocols and complete reported results}\label{app:exp}

This appendix separates finite evaluation, algebraic extraction, and guarantees for the original learned recurrence. A synthetic example is a word $w=a_1\cdots a_L$ of group generators, with target $g_{a_L}\cdots g_{a_1}$ and state $R_{a_L}\cdots R_{a_1}x_0$. Unless stated otherwise, \emph{fit} means accuracy at least $0.99$ at $L=10$. \emph{Evaluation success} means accuracy $1.000$ on 2,000 fresh words at each reported test length through $L=1,000$. This finite criterion is distinct from correctness on every word. A \emph{snapped model} is a proposed exact representation or cover extracted from learned matrices; its algebra and readout must be checked separately. A \emph{raw-model certificate} bounds the horizon of the recurrence defined by the stored learned matrices, in real arithmetic.

\subsection{Width ladder and the minimal-width comparison}\label{app:ladder}

\paragraph{Eight-group ladder.}
The ladder uses orthogonal transitions, parameterized by matrix exponentials of skew-symmetric generators with an optional reflection, and training words of length at most ten. Evaluation uses fresh words at lengths up to 1,000. Extraction uses the target group's known representations: it is a group-informed analysis, not discovery without group labels. Table~\ref{tab:ladder_full} retains all eight groups, including cases with weak raw generalization. In particular, the extraction result should not be read as the accuracy of the unmodified learned recurrence.

\begin{table}[htbp]
\centering\small
\begin{tabular}{@{}lrrrr@{}}
\toprule
Group & $|G|$ & $\dmin^O$ & Raw, in distribution & Raw, $L=300$\\
\midrule
$C_{10}$ & 10 & 2 & 1.000 & 1.000\\
$D_4$ & 8 & 2 & 1.000 & 1.000\\
$D_{50}$ & 100 & 2 & 0.898 & 0.549\\
$A_4$ & 12 & 3 & 1.000 & 1.000\\
$S_4$ & 24 & 3 & 1.000 & 0.998--1.000\\
$A_5$ & 60 & 3 & 1.000 & 1.000\\
$Q_8$ & 8 & 4 & 1.000 & 0.746\\
$S_5$ & 120 & 4 & 0.998 & 0.843\\
\bottomrule
\end{tabular}
\caption{All eight reported ladder rows. Widths refer to $O(d)$, whereas the controlled $A_4/D_4$ grid below uses $SO(d)$. These are the recorded row summaries; they are not means over a common number of successful seeds.}
\label{tab:ladder_full}
\end{table}

The extracted models for $C_{10},D_4,A_4,S_4$, and $A_5$ realize the intended faithful representations; the $D_4$ result includes three of three reported seeds, and one $S_4$ run stalls. For $D_{50}$, the snapped model scores $1.000$ through $L=1,000$, with reported extraction residual about $0.18$. For $S_5$, the corresponding residual is $0.018$; an exact-orbit nearest-neighbor readout built from snapped prototypes scores $1.000$ through $L=1,000$. These residuals are extraction diagnostics and must not be substituted for the orbit-transition residual in the raw-model certificate without checking their definitions. Successful optimization is uncommon in some cells: approximately one in 25 attempts for $C_{10}$ and one in 18 for $S_5$; the $D_{50}$ search used 25 attempts.

The $Q_8$ row illustrates why the evidence categories matter. A reported extraction residual is $0.103$, and seven width-four runs remain near $0.50\pm0.02$ through $L=300$, consistent with a quotient solution. A width-four run that passes the length-1,000 evaluation fails a length-2,000 test. Width-three runs exceed $0.5$ at $L=300$ but fall below it by $L=1,000$; their learned generators are within $0.006$ of half-turns. This is evidence of approximate finite-horizon behavior, not an exact representation below the faithful-width bound. A heuristic drift scale is not an all-word certificate unless the full prototype residual and readout-radius conditions are verified.

\paragraph{Controlled $SO(d)$ grid.}
For $A_4$ and $D_4$, the comparison uses $R_a=\exp(A_a-A_a^\top)$ at $d=3$ and $d=6$, an affine readout, and softmax cross-entropy over group labels. Both groups have minimum special-orthogonal width three; $D_4$ has smaller orthogonal width two. The batch size is 128, with AdamW and zero weight decay. Training has three stages, each using a fresh optimizer and a warm-up--cosine learning-rate schedule:
\begin{enumerate}\setlength{\itemsep}{0pt}\setlength{\parsep}{0pt}\setlength{\topsep}{0pt}
\item Anchor: 1,500 steps on lengths 1--2, peak learning rate $3\times10^{-3}$.
\item Fit: 4,000 steps on lengths 1--10, peak learning rate $3\times10^{-3}$.
\item Polish: 3,000 steps on lengths $\{2,5,10,20,40\}$, peak learning rate $5\times10^{-4}$.
\end{enumerate}
The protocol includes ten seeds per cell and a second implementation of the same specification. Among the fitting runs in the reported comparison, $17/17$ at $d=3$ satisfy the finite evaluation criterion, versus $7/18$ at $d=6$. These are conditional counts: they neither establish that every fitting minimal-width model is exact nor imply that wider models must fail. Three nonfitting $D_4$ runs at $d=3$ remain at $0.50$ across the evaluated lengths. The theoretical rigidity result assumes exact decoding on every word; training fit alone is not its premise.

\paragraph{Certificate cohort.}
The certificate cohort is identified separately: seeds 0--5 of each $A_4$ and $D_4$ width-three run in the primary implementation, for twelve models in total. Their extracted orbits contain 12 and 8 labeled states, respectively. The reported raw-model horizons are 71--276 for $A_4$ and 100--222 for $D_4$. On a 200,000-letter random-walk probe, three $A_4$ runs first err between 7,253 and 73,076 letters, while three have no error within the probe; $D_4$ first errors range from 10,541 to 67,191. A censored probe is not a lower bound on correctness for \emph{every} word. These twelve certificate records are not evidence that all seventeen fitting runs in the pooled comparison received identical verification.

\paragraph{The forced score under independent frames.}\label{app:m5}
Six candidate entities carry vectors in $\R^{16}$; a relation is one orthogonal map $R\in O(16)$ applied between them, and the task is to select the candidate related to the query. In the control the inputs share one frame; in the test cell every entity's input is rotated by an independent Haar-random element of $O(16)$, with the relation data transformed consistently. Three scores: a fixed-frame bilinear score, the transported score of Theorem~\ref{thm:forced}, and an MLP score on the concatenated pair. Accuracies (shared frame / independent frames, three seeds, spreads at most $0.021$): bilinear $1.000/0.192$, transported $1.000/1.000$, MLP $1.000/0.159$; chance is $0.167$. The non-transported scores fall to chance exactly when the frames become independent; the transported score is unaffected.

\paragraph{Figures for the minimal-width and certificate cohorts.}
Figure~\ref{fig:minwidth} shows every seed of the scripts of record for $A_4$ and $D_4$ at $d=3$ (seeds $0$--$5$) and $d=6$ (seeds $0$--$2$ of the matched protocol); Figure~\ref{fig:horizons} in the main text plots the certified horizon of every certified run in Table~\ref{tab:evidence} against its first observed error (the minimum over the walks run; the one run with horizon $0$ is omitted; censored walks give lower bounds and enter no statistic).

\begin{figure}[htbp]
\centering
\includegraphics[width=\textwidth]{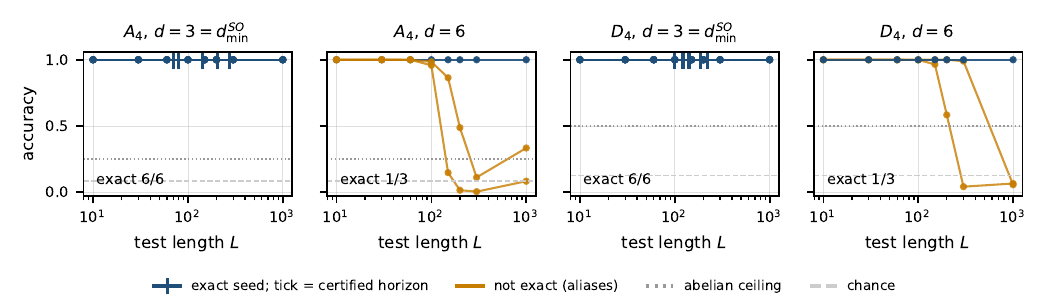}
\caption{Exactness at the minimal width, on the seeds of the scripts of record. Accuracy against test length (training words $\le10$, polish $\le40$) for $A_4$ and $D_4$ at the minimal $SO(d)$ width $d=3$ (seeds $0$--$5$) and at twice it, $d=6$ (seeds $0$--$2$ of the matched protocol). Every $d=3$ seed is exact to length $1000$ (blue), and its snapped generators close to a Latin square of $|G|$ elements with homomorphism defect at most $6\times10^{-13}$ (Table~\ref{tab:evidence}); the tick on each curve is its certified horizon under the record radius ($71$--$276$ for $A_4$, $100$--$222$ for $D_4$), one to two orders of magnitude before the first observed error. At $d=6$ one seed per group is exact and the others fit the training lengths and alias beyond them (orange), some falling through the abelian ceiling (dotted) to chance (dashed).}
\label{fig:minwidth}
\end{figure}

\subsection{Spinor cells and attention-score parity}\label{app:spinor}

\paragraph{Registered protocol.}
The spinor experiments extend the three-stage protocol to $C_3\times A_4$ and $A_4\times A_4$. For the first group, the alphabet is $\{c,c^{-1},a,a^{-1},b,e\}$, where $c$ generates $C_3$, $a,b$ generate $A_4$, and $e$ is the identity. The factors act on disjoint supports in the target permutation representation. Four heads are considered: affine logits; logits affine in $\mathrm{utri}(hh^\top)$; prototype logits $\beta\langle p_g,h\rangle$; and even prototype logits $\beta\langle p_g,h\rangle^2$. The last two are attention-form classification heads. Each registered cell uses twelve seeds; the original prototype cells initialize the score scale at one. All eight registered cells are included in Table~\ref{tab:spinor_full}, including failed controls.

\paragraph{Exploratory protocol.}
After the prototype heads failed to fit, both attention-form variants were changed to cosine scores, with $\log\beta$ initialized at $\log10$ and learned, and all three training stages doubled. Within this exploratory protocol the two width-four heads differ in whether their score is $\beta\cos(p_g,h)$ or $\beta\cos^2(p_g,h)$. The width-four cells use seeds 0--23 and the width-five monotone control seeds 0--11. This changes normalization, initial scale, and training budget relative to the registered protocol; it is not a confirmatory replication of that protocol.

\paragraph{Confirmatory replication.} The three cosine cells were re-run on seeds $24$--$47$, declared in advance, with byte-identical code (sha256 hashes recorded in a declaration written before the first run) against a criterion written down beforehand: at least one exact sign-blind seed in $24$; exactly zero exact monotone seeds at width four; at least three exact monotone seeds per twelve at width five. No run was excluded. Outcomes: sign-blind width four $3/24$ fit and exact (seeds $30$, $36$, $47$), each a certified double cover ($72$ states, kernel $\{\pm I\}$, $R_b^2=-I$, $-x_0$ in the orbit; snapped models exact to $10^5$ with margins $12.1$--$12.7$; certified horizons $10$--$14$ under the record radius; first raw errors at $8{,}362$--$10{,}484$ letters); monotone width four $0/24$, none fitting; monotone width five $7/24$ fit and exact, all faithful. Pooled with the sweep: sign-blind $6/48$, monotone $0/48$ at width four, $13/36$ at width five. Per-run rows are in Table~\ref{tab:evidence}.

\begin{table}[htbp]
\centering\small
\begin{tabular}{@{}llrrl@{}}
\toprule
Group, width & Readout & Fit & Eval. success & Extracted structure\\
\midrule
\multicolumn{5}{@{}l}{\emph{Registered protocol}}\\
$C_3\times A_4$, 4 & quadratic & 2/12 & 2/12 & double cover, 72 states\\
$C_3\times A_4$, 4 & affine & 0/12 & 0/12 & ---\\
$C_3\times A_4$, 4 & monotone prototype & 0/12 & 0/12 & ---\\
$C_3\times A_4$, 4 & even prototype & 0/12 & 0/12 & ---\\
$C_3\times A_4$, 5 & affine & 6/12 & 0/12 & 3 faithful, 36 states\\
$C_3\times A_4$, 5 & quadratic & 2/12 & 0/12 & 2 padded covers, 72 states\\
$A_4\times A_4$, 4 & quadratic & 0/12 & 0/12 & ---\\
$A_4\times A_4$, 6 & affine & 2/12 & 2/12 & faithful, 144 states\\
\midrule
\multicolumn{5}{@{}l}{\emph{Exploratory cosine-score protocol}}\\
$C_3\times A_4$, 4 & even cosine & 3/24 & 3/24 & double cover, 72 states\\
$C_3\times A_4$, 4 & monotone cosine & 0/24 & 0/24 & ---\\
$C_3\times A_4$, 5 & monotone cosine & 7/12 & 6/12 & faithful, 36 states\\
\midrule
\multicolumn{5}{@{}l}{\emph{Confirmatory replication, seeds 24--47}}\\
$C_3\times A_4$, 4 & even cosine & 3/24 & 3/24 & double cover, 72 states\\
$C_3\times A_4$, 4 & monotone cosine & 0/24 & 0/24 & ---\\
$C_3\times A_4$, 5 & monotone cosine & 7/24 & 7/24 & faithful, 36 states\\
\bottomrule
\end{tabular}
\caption{All fourteen spinor cells. Fit and evaluation success concern raw learned matrices. The final column describes the extracted candidate models and orbit sizes; it is not a statement that the raw matrices generate a finite group. Zero finite-evaluation successes are empirical outcomes, not consequences forced by an all-word impossibility theorem.}
\label{tab:spinor_full}
\end{table}

\paragraph{Identifying a cover.}
The two successful registered width-four quadratic runs have 72 approximately closed states over 36 labels, with a lift near $-x_0$. The proposed exact model is the double cover $C_3\times 2T\to C_3\times A_4$, with kernel $\{\pm I\}$. For the target double transposition $b$, the raw residual satisfies $\|R_b^2+I\|_F\le3.6\times10^{-3}$; numerical evaluation after snapping gives residual about $10^{-15}$. The intended exact construction satisfies $R_b^2=-I$. Approximate closure and a small numerical residual identify a candidate structure; neither alone proves the raw latent group is finite.

The raw models pass the finite evaluation criterion through $L=1,000$ and drift at longer horizons. Their original geometric-radius certificates give $L=15$; tighter trust-region radii yield reported horizons 36--38. The snapped models pass the reported evaluations at $10^3$, $10^4$, and $10^5$. All-word correctness of an extracted model requires the finite algebraic and readout verification described below, rather than these long-word tests alone.

At width five, three affine runs yield faithful extracted models despite failing the raw length-1,000 criterion. The two fitting quadratic runs instead yield the width-four cover with an idle fifth direction. For $A_4\times A_4$, none of the twelve width-four quadratic runs fits, so the six-to-four separation remains a constructive expressivity result; the width-six affine control has two evaluation successes. These failures constrain the optimization claim and are retained alongside the positive cases.

\paragraph{Chirality diagnostic.}
In the identified covers, the $C_3$ letters and the involution act in opposite $\mathfrak{su}(2)$ factors of $\mathfrak{so}(4)$; the reported rotation angles are $\pi/3$ and $\pi/2$. A target three-cycle can have components in both factors, with its component in the $C_3$ factor parallel to that generator. This is compatible with composing the intended cover with $(x,y)\mapsto(x\phi(y),y)$, where $\phi:A_4\to C_3$ is the abelianization map. This diagnostic is more specific than an accuracy test, but the universal readout-width theorem does not require every approximate trained solution to have this particular parameterization.

\paragraph{Candidate extraction and finite verification.}
The extraction procedure searches approximate joint closure of labels and latent states, then proposes nearby exact generators. Numerical averaging or rounding can help produce that proposal. A tolerance-based closure search does not establish a finite group, and a near-representation stability theorem does not automatically certify the output of a numerical implementation.

For an all-word certificate of the proposed snapped model, supply a finite labeled orbit $Z$, exact transitions $\widehat R_a z=\delta(z,a)$ for every $z,a$, an initial state in $Z$, and readout correctness on all of $Z$. The transition labels must satisfy $\lambda(\delta(z,a))=g_a\lambda(z)$. These checks imply correctness by induction. Alternatively, verify an exact representation or cover algebraically and enumerate its orbit and readout. Floating-point residuals near machine precision are diagnostic evidence for these identities, not exact arithmetic proofs of them. Search cost and the cost of numerical or symbolic arithmetic are separate from the $|\Sigma||Z|$ transition count.

\paragraph{Certificate for the raw recurrence.}
The prototype search starts from $x_0$, expands by all letters, and merges nearby states at tolerance $0.05\|x_0\|$, requiring label consistency. The output must have a total transition table: every state--letter pair is checked, regardless of how the prototypes were proposed. Let
\[
\epsilon=\max_{z\in Z,a\in\Sigma}\|R_a z-\delta(z,a)\|,
\qquad
r(x)=\lambda(z)\quad\text{whenever }\|x-z\|<\eta.
\]
The certified horizon is $\lceil\eta/\epsilon\rceil-1$ for $\epsilon>0$ and unbounded when $\epsilon=0<\eta$ (Theorem~\ref{thm:cert}); the affine and quadratic readout radii, the near-orthogonal accumulation bound and the real-arithmetic qualification are as stated with Theorem~\ref{note:thm:proto}. A label-consistent approximate orbit can yield a valid finite-horizon certificate without being an exact group; the recorded radii are the conservative ones, and every reported horizon concerns the real-arithmetic recurrence defined by the stored parameters.

\subsection{Readout intervention on fixed cover states}\label{app:frozen}

The intervention holds the dynamics fixed and changes only the decoder. It uses the two registered width-four quadratic covers (seeds 1 and 4) and exploratory even-cosine seed 4. For each run, the 72 states and their 36 labels are enumerated, and a new head is fitted by L-BFGS with five restarts. The experiment is performed separately on the raw approximately closed states and on the snapped cover.

On a balanced \emph{exact} antipodal orbit, each label $g$ has states $\pm z_g$ with equal weight. For affine logits, Jensen's inequality gives the cross-entropy lower bound $\log36$; a minimizing classifier predicts uniformly on these states. This theorem concerns an exact balanced orbit and the optimization objective. It does not assert that every affine classifier has chance accuracy, nor that a raw approximate orbit has exactly the same optimum.

\begin{table}[htbp]
\centering\small
\begin{tabular}{@{}lll@{}}
\toprule
Readout & Reported accuracy & Reported cross-entropy\\
\midrule
Affine, fitted by cross-entropy & near chance ($1/36$) & 3.5835 ($\log36$)\\
Nearest selected lift, affine & 0.500 & not the optimized objective\\
Full quadratic & 1.000 & $\le10^{-5}$\\
Squared prototype & 1.000 & $\le10^{-5}$\\
\bottomrule
\end{tabular}
\caption{Readout intervention with fixed states. The fitted affine loss is reported to four decimals. The exact theorem applies to the balanced antipodal cover; raw approximate states are a separate empirical test. A selected-lift affine head demonstrates achievable one-half accuracy, not optimal affine classification accuracy.}
\label{tab:frozen_full}
\end{table}

The fitted affine heads attain cross-entropy 3.5835 on each reported run. The logit variation across snapped states is of order $10^{-4}$; on raw states it is of order $10^{-2}$. This distinction matters because approximate antipodality does not satisfy the exact theorem's assumptions. The quadratic and squared-prototype heads reach accuracy $1.000$ and cross-entropy at most $10^{-5}$ on the same state sets.

The selected-lift construction chooses one representative per label and assigns its nearest affine prototype; it scores $1/2$. For 36 antipodal label pairs, any convex-cell classifier scores at most $37/72$: at most one decision cell can contain both lifts, since every such cell contains the origin. Thus one-half is a constructive lower bound, not a demonstrated optimum. The experiment distinguishes loss minimization from achievable accuracy while isolating readout expressivity from the task of learning the dynamics.

\subsection{Parsed public benchmarks and controls}\label{app:public}

All models receive the same parsed relational interface within each benchmark. The results therefore concern composition given extracted relations, not text understanding or a comparison with text-level state of the art. Table~\ref{tab:public_full} preserves every reported control. A dash denotes an unreported training-fit summary, not zero accuracy.

\begin{table}[htbp]
\centering\small
\begin{tabular}{@{}llrr@{}}
\toprule
Task & Mechanism & Train fit & Mean OOD\\
\midrule
StepGame & transformer, sinusoidal PE & 0.889 & 0.481\\
 & additive transport, difference readout & --- & 0.753\\
 & additive transport, probe readout & 0.63 & 0.36\\
 & general linear transport, probe readout & --- & 0.73\\
 & rotation transport, probe readout & --- & 0.82\\
 & exact lattice, period 12, probe readout & --- & 0.84\\
 & exact lattice, period 3 & 0.75 & 0.46\\
\midrule
CLUTRR & transformer, sinusoidal PE & 0.942 & 0.436\\
 & unconstrained matrix scan, $d=32$ & --- & 0.848\\
 & orthogonal matrix scan, $d=32$ & 0.98 & 0.202\\
\bottomrule
\end{tabular}
\caption{Complete reported public-benchmark controls. Transformers have three layers and width 64. StepGame trains on $k\le5$ and tests on $k=6$--$10$; CLUTRR trains on $k\in\{2,3\}$ and tests on $k=4$--$10$. The composer and probe rows use different readouts and cannot be treated as consecutive one-variable interventions. The CLUTRR orthogonal control reports three seeds.}
\label{tab:public_full}
\end{table}

\paragraph{StepGame.}
The dataset version includes the maintainers' February 2024 correction. An audit of the parsed interface covers more than 65,000 examples, with 100\% agreement on the clean strata tested; three residual generator defects were identified, and the training-split corruption census is 10.2\%. This is an audit of those strata and that release, not a guarantee for arbitrary textual inputs.

StepGame relations compose as displacements in $\mathbb Z^2$. The additive composer with its difference readout obtains mean OOD accuracy 0.753 versus 0.481 for the transformer, a difference of 27.2 percentage points; the recorded difference is at least 23 points at every evaluated OOD hop count. A gauge-attention aggregation variant matches the mean-aggregation composer to four decimals in the reported experiment. Commutativity alone does not imply that every possible learned attention weighting must equal mean aggregation.

The source-initialized probe controls use a different readout. Additive transports score 0.36, general linear maps 0.73, rotations 0.82, and a period-12 exact lattice representation 0.84. Under this probe construction, a linear map preserves a zero state at an unreached node, whereas an affine translation can create a signal there. The linear-to-rotation comparison is consistent with improved conditioning: the unconstrained matrices have reported singular values between 0.03 and 1.6, whereas rotations preserve norm. These are mechanisms compatible with the controls, rather than proofs that they account for every performance difference.

With lattice period three, the recorded OOD accuracy is approximately 0.457 (rounded to 0.46 in the table), consistent with collisions modulo the period. Period 12 avoids those particular collisions within the benchmark range. A periodic representation of $\mathbb Z^2$ still aliases displacements outside its identifiable range; the result does not establish a faithful finite-dimensional periodic encoding of all displacements.

\paragraph{CLUTRR.}
The experiment uses the Edge Transformer release~\citep{bergen2021}, variant \texttt{db\_9b2173cf}, with 35,394 training examples of lengths $k\in\{2,3\}$. Mixed-length test files are split by $k$ before scoring over $k=2$--$10$. The baseline has sinusoidal positional encoding, three layers, and width 64. The scan uses 18 relation matrices of width 32; the orthogonal control constrains the same family of matrices to $O(32)$ on the same interface.

The unconstrained scan has mean OOD accuracy 0.848 versus 0.436 for the transformer. The orthogonal control has mean 0.202 while fitting its training data at about 0.98. Hence the orthogonality intervention changes the OOD mean by 64.6 percentage points. This large controlled difference supports a task-dependent choice of transition constraints.

There is a relevant algebraic motivation: if $e\ne1$ is an idempotent in a monoid and $\rho$ is a faithful unital representation, then $\rho(e)$ cannot be invertible, since $\rho(e)^2=\rho(e)$ would imply $\rho(e)=I$. The experiment does not establish that CLUTRR's typed composition rules form a particular monoid, nor that a relation such as sibling composition satisfies an unconditional idempotence identity. It therefore does not identify idempotence as the sole cause of the measured effect. The narrower empirical conclusion is that imposing isometries helps one parsed task and substantially harms the other.

\subsection{Evidence tables}\label{app:evidence}
Constructed witnesses (the first evidence field) are in Table~\ref{tab:witnesses}; every certified or snap-certified run, with the four fields---constructed, snapped, raw-certified, finite evaluation---is a row of Table~\ref{tab:evidence}.

\begin{table}[htbp]
\centering\small
\begin{tabular}{@{}llrrl@{}}
\toprule
target $G$ & constructed $H\le O(4)$ & $|H|$ & labels & readout separation\\
\midrule
$C_3\times A_4$ ($\dmin=5$) & $C_3\times2T$, $x\mapsto q_1x\bar q_2$ & 72 & 36 & quadratic (min $0.176$); $R_b^2=-I$\\
$A_4\times A_4$ ($\dmin=6$) & $2T\times2T/\pm(1,1)$ & 288 & 144 & quadratic\\
$A_5\times A_5$ ($\dmin=6$) & $2I\times2I/\pm(1,1)$ & 7200 & 3600 & quadratic\\
\bottomrule
\end{tabular}
\caption{Constructed cover witnesses at width $4$, verified numerically (closure, kernel $\{\pm I\}$, label count, quadratic separation of the antipodal-pair labels).}
\label{tab:witnesses}
\end{table}

{\scriptsize
\setlength{\tabcolsep}{1.6pt}\begin{longtable}{@{}lllcrllrlllll@{}}
\caption{Evidence table, one line per certified or snap-certified run. Run: group (C3A4 $=C_3\times A_4$, A4A4 $=A_4\times A_4$, C3A4$^*$ the 4-letter pilot alphabet), width, readout (lin, quad, pm/pe $=$ prototype attention monotone/sign-blind, cm/ce $=$ cosine attention monotone/sign-blind), seed; protocol (rec $=$ record, exp $=$ exploratory, conf $=$ confirmatory). \emph{Finite evaluation}: fit ($\ge0.99$ at $L=10$) / exact ($1.000$ on $2000$ words at $L=1000$) / raw accuracy at $10^4$. \emph{Raw model}: orbit states (cover index), residual $\epsilon$, radius $\eta$ (record / operator-norm / exact where computed), certified horizon (same three), first observed error (minimum over the walks run; $\dagger$ $=$ none within the walk, a lower bound). \emph{Snapped model}: latent closure (LS $=$ Latin-square closure of the snapped generators), kernel of the cover (ord.\ 2 $=$ order two on a padded block, not $\pm I$), homomorphism defect (floating point), margin $m$ on every orbit state, finite evaluation at $10^3/10^4(/10^5)$.}\label{tab:evidence}\\
\toprule
run & prot. & fit/ex/$10^4$ & orbit & $\epsilon$ & $\eta$ & cert.\ $L$ & first err. & closure & ker. & defect & $m$ & snapped eval\\
\midrule
\endfirsthead
\toprule
run & prot. & fit/ex/$10^4$ & orbit & $\epsilon$ & $\eta$ & cert.\ $L$ & first err. & closure & ker. & defect & $m$ & snapped eval\\
\midrule
\endhead
\bottomrule
\endfoot
A4A4-6-lin-s2 & rec & y/y/1.00 & 144 (1) & 0.1531 & 2.23/2.23/2.23 & 14/14/14 & 14268 & 144 & 1 & 3e-15 & 3.8 & 1/1/1 \\
A4A4-6-lin-s9 & rec & y/y/0.47 & 144 (1) & 0.2560 & 1.84/1.84/1.84 & 7/7/7 & 2546 & 144 & 1 & 3e-15 & 2.6 & 1/1/1 \\
C3A4-4-quad-s1 & rec & y/y/0.00 & 72 (2) & 0.0218 & 0.342/0.471/0.831 & 15/21/38 & 1737 & 72 & $\pm I$ & 3e-13 & 7.7 & 1/1/1 \\
C3A4-4-quad-s4 & rec & y/y/0.00 & 72 (2) & 0.0216 & 0.341/0.479/0.784 & 15/22/36 & 2872 & 72 & $\pm I$ & 3e-13 & 7.6 & 1/1/1 \\
C3A4-5-lin-s1 & rec & y/n/0.05 & 36 (1) & 0.0641 & 2.58/2.58/2.58 & 40/40/40 & 457 & 36 & 1 & 2e-15 & 6.1 & 1/1/1 \\
C3A4-5-lin-s2 & rec & y/n/0.04 & 36 (1) & 0.0393 & 2.45/2.45/2.45 & 62/62/62 & 533 & 36 & 1 & 2e-15 & 6.9 & 1/1/1 \\
C3A4-5-lin-s6 & rec & y/n/0.00 & 36 (1) & 0.0673 & 2.5/2.5/2.5 & 37/37/37 & 572 & 36 & 1 & 2e-15 & 6.1 & 1/1/1 \\
C3A4-5-quad-s3 & rec & y/n/0.05 & 72 (2) & 0.4035 & 0.299 & 0 & 613 & 72 & ord.\ 2 & 3e-15 & 5.5 & 1/1/1 \\
C3A4-5-quad-s7 & rec & y/n/0.28 & 72 (2) & 0.2177 & 0.277 & 1 & 260 & 72 & ord.\ 2 & 2e-15 & 6.7 & 1/1/1 \\
A4-3-lin-s0 & rec & y/y/-- & 12 (1) & 0.0332 & 2.387 & 71 & 7253 & 12 (LS) & -- & 6e-13 & -- & 1/1 \\
A4-3-lin-s1 & rec & y/y/-- & 12 (1) & 0.0099 & 2.736 & 276 & $\dagger$ & 12 (LS) & -- & 3e-15 & -- & 1/1 \\
A4-3-lin-s2 & rec & y/y/-- & 12 (1) & 0.0345 & 2.757 & 79 & 7578 & 12 (LS) & -- & 4e-13 & -- & 1/1 \\
A4-3-lin-s3 & rec & y/y/-- & 12 (1) & 0.0113 & 2.302 & 204 & $\dagger$ & 12 (LS) & -- & 1e-14 & -- & 1/1 \\
A4-3-lin-s4 & rec & y/y/-- & 12 (1) & 0.0191 & 2.728 & 142 & 73076 & 12 (LS) & -- & 2e-14 & -- & 1/1 \\
A4-3-lin-s5 & rec & y/y/-- & 12 (1) & 0.0159 & 2.247 & 141 & $\dagger$ & 12 (LS) & -- & 2e-14 & -- & 1/1 \\
D4-3-lin-s0 & rec & y/y/-- & 8 (1) & 0.0139 & 3.090 & 222 & 61582 & 8 (LS) & -- & 8e-15 & -- & 1/1 \\
D4-3-lin-s1 & rec & y/y/-- & 8 (1) & 0.0178 & 2.530 & 141 & 67191 & 8 (LS) & -- & 3e-14 & -- & 1/1 \\
D4-3-lin-s2 & rec & y/y/-- & 8 (1) & 0.0246 & 3.049 & 123 & 11315 & 8 (LS) & -- & 3e-14 & -- & 1/1 \\
D4-3-lin-s3 & rec & y/y/-- & 8 (1) & 0.0161 & 3.027 & 187 & 43252 & 8 (LS) & -- & 8e-15 & -- & 1/1 \\
D4-3-lin-s4 & rec & y/y/-- & 8 (1) & 0.0273 & 2.732 & 100 & 10541 & 8 (LS) & -- & 7e-14 & -- & 1/1 \\
D4-3-lin-s5 & rec & y/y/-- & 8 (1) & 0.0213 & 2.616 & 122 & 12622 & 8 (LS) & -- & 5e-14 & -- & 1/1 \\
C3A4-4-ce-s4 & exp & y/y/1.00 & 72 (2) & 0.0024 & 0.0312/0.192/0.224 & 12/78/91 & 32532 & 72 & $\pm I$ & 5e-14 & 11.2 & 1/1/1 \\
C3A4-4-ce-s12 & exp & y/y/0.00 & 72 (2) & 0.0038 & 0.0397/0.231/0.259 & 10/60/67 & 2471 & 72 & $\pm I$ & 5e-15 & 10.1 & 1/1/1 \\
C3A4-4-ce-s14 & exp & y/y/0.00 & 72 (2) & 0.0062 & 0.11/0.721/0.83 & 17/115/133 & 2148 & 72 & $\pm I$ & 2e-14 & 9.2 & 1/1/1 \\
C3A4-5-cm-s2 & exp & y/y/0.00 & 36 (1) & 0.0124 & 0.339/1.34/1.34 & 27/108/108 & 4378 & 36 & 1 & 3e-13 & 11.8 & 1/1/1 \\
C3A4-5-cm-s6 & exp & y/y/0.99 & 36 (1) & 0.0029 & 0.181/0.708/0.708 & 62/242/242 & 1687 & 36 & 1 & 3e-14 & 12.4 & 1/1/1 \\
C3A4-5-cm-s8 & exp & y/y/0.00 & 36 (1) & 0.0129 & 0.23/1/1 & 17/77/77 & 3627 & 36 & 1 & 2e-15 & 10.3 & 1/1/1 \\
C3A4-5-cm-s9 & exp & y/y/0.98 & 36 (1) & 0.0110 & 0.103/0.444/0.444 & 9/40/40 & 11129 & 36 & 1 & 2e-15 & 10.5 & 1/1/1 \\
C3A4-5-cm-s10 & exp & y/y/0.00 & 36 (1) & 0.0139 & 0.192/0.869/0.869 & 13/62/62 & 6400 & 36 & 1 & 2e-15 & 12.0 & 1/1/1 \\
C3A4-5-cm-s11 & exp & y/y/0.22 & 36 (1) & 0.0095 & 0.224/1.02/1.02 & 23/107/107 & 1310 & 36 & 1 & 2e-15 & 11.3 & 1/1/1 \\
C3A4-4-ce-s30 & conf & y/y/0.03 & 72 (2) & 0.0129 & 0.131 & 10 & 8362 & 72 & $\pm I$ & 1e-13 & 12.1 & 1/1/1 \\
C3A4-4-ce-s36 & conf & y/y/0.54 & 72 (2) & 0.0037 & 0.056 & 14 & 9484 & 72 & $\pm I$ & 1e-13 & 12.7 & 1/1/1 \\
C3A4-4-ce-s47 & conf & y/y/1.00 & 72 (2) & 0.0075 & 0.093 & 12 & 10484 & 72 & $\pm I$ & 8e-14 & 12.3 & 1/1/1 \\
C3A4-5-cm-s25 & conf & y/y/0.00 & 36 (1) & 0.0049 & 0.092 & 18 & 3640 & 36 & 1 & 3e-13 & 12.5 & 1/1/1 \\
C3A4-5-cm-s28 & conf & y/y/0.00 & 36 (1) & 0.0034 & 0.076 & 21 & 2343 & 36 & 1 & 2e-15 & 12.1 & 1/1/1 \\
C3A4-5-cm-s35 & conf & y/y/0.05 & 36 (1) & 0.0285 & 0.114 & 3 & 6420 & 36 & 1 & 2e-15 & 9.6 & 1/1/1 \\
C3A4-5-cm-s37 & conf & y/y/0.99 & 36 (1) & 0.0052 & 0.142 & 27 & 1308 & 36 & 1 & 2e-13 & 12.2 & 1/1/1 \\
C3A4-5-cm-s41 & conf & y/y/0.65 & 36 (1) & 0.0093 & 0.155 & 16 & 1565 & 36 & 1 & 2e-15 & 12.2 & 1/1/1 \\
C3A4-5-cm-s42 & conf & y/y/0.00 & 36 (1) & 0.0095 & 0.156 & 16 & 3218 & 36 & 1 & 6e-13 & 11.9 & 1/1/1 \\
C3A4-5-cm-s44 & conf & y/y/1.00 & 36 (1) & 0.0082 & 0.099 & 12 & $\dagger$ & 36 & 1 & 2e-15 & 11.0 & 1/1/1 \\
C3A4*-4-quad-s14 & pilot & y/y/0.00 & 72 (2) & 0.0035 & 0.107/0.147/0.245 & 30/42/70 & 2335 & 72 & $\pm I$ & 1.2e-14 & -- & 1/1 \\
C3A4*-4-quad-s19 & pilot & y/y/0.14 & 72 (2) & 0.0032 & 0.101/0.139/0.233 & 31/42/72 & 5825 & 72 & $\pm I$ & 1.6e-14 & -- & 1/1 \\
C3A4*-4-quad-s23 & pilot & y/y/0.39 & 72 (2) & 0.0069 & 0.0919/0.13/0.208 & 13/18/30 & 5511 & 72 & $\pm I$ & 4.7e-13 & -- & 1/1 \\
C3A4*-4-quad-s25 & pilot & y/y/0.00 & 72 (2) & 0.0047 & 0.102/0.132/0.215 & 21/27/45 & 2985 & 72 & $\pm I$ & 5.6e-14 & -- & 1/1 \\
\end{longtable}}

\subsection{Supporting depth, signature, transformer, and graph experiments}\label{app:additional}

\paragraph{Commutative cascades.}
The depth experiment uses stacked diagonal scans whose layer-$(k+1)$ angles depend on the current token and the state of layer $k$. It has four groups, $k=1,2,3$, and two seeds per cell, for 24 learned runs, evaluated at lengths 250--300. In the finite-group cascade model covered by the theory, depth $k$ yields the limiting ceiling $1/|G^{(k)}|$ under the stated joint mixing assumptions. The values are shown in Table~\ref{tab:depth_bounds}; they are structural bounds for that model, not universal claims about every architecture described as a diagonal state-space model.

\begin{table}[htbp]
\centering\small
\begin{tabular}{@{}lrrrl@{}}
\toprule
Group & $k=1$ & $k=2$ & $k=3$ & First exact constructed tower\\
\midrule
$D_4$ & $1/2$ & 1 & 1 & depth 2\\
$A_4$ & $1/4$ & 1 & 1 & depth 2\\
$S_4$ & $1/12$ & $1/4$ & 1 & depth 3\\
$A_5$ & $1/60$ & $1/60$ & $1/60$ & none in this class\\
\bottomrule
\end{tabular}
\caption{Derived-series limiting ceilings for the finite cascade setting. A finite test length requires the corresponding mixing correction. Constructed witnesses and learned scans are evaluated separately.}
\label{tab:depth_bounds}
\end{table}

The learned cells are consistent with the bounds: the recorded $A_5$ accuracy is 0.0168 versus the limiting value $1/60\simeq0.0167$. Constructed towers for $D_4,A_4,S_4$, following the derived-series construction, score 1.0000 at lengths 300 and 1,000 at depths 2, 2, and 3. Learned cascades do not reach these witnesses in the cells whose ceiling is one; the best reported learned accuracy there is 0.323 at $L=300$. A constructive expressivity threshold and a successful optimization result are different claims.

\paragraph{Signature controls.}
Level-one constructions using letter counts realize the abelianization readout: the limiting accuracies are $0.50$ for $D_4$, $0.25$ for $A_4$, $1/12$ for $S_4$, and $0.04$ for $D_{50}$. A level-two $D_4$ witness uses modular arithmetic on integer signature coordinates and scores 1.000 through the reported length-1,000 evaluation. Learned signature heads do not reach these constructed values; for $A_4$, the reported accuracies are 0.068--0.091 across the tested levels, close to chance $1/12$ and below $0.25$.

The exactness obstruction and the partial-accuracy ceiling have different hypotheses. Exact decoding through a level-$k$ nilpotent quotient imposes the corresponding lower-central-series condition. An accuracy ceiling of $1/|\gamma_{k+1}G|$ additionally requires that the representation and readout factor through the stated finite nilpotent quotient, together with the relevant distributional or mixing assumptions. It is not a blanket bound on arbitrary nonlinear functions of real-valued truncated signatures. Normalization also changes the information presented to the learned head; the modular witnesses rely on integer coordinates that a normalized feature representation need not preserve.

\paragraph{Order-blind and causal transformers.}
These experiments use JAX transformers with three layers, width 64, and training words of length at most ten. The $A_4$ runs use two 10,000-step training cycles. The three conditions are an order-blind bag model, a causal model without positional encoding, and a causal model with sinusoidal encoding. Absence of positional embeddings alone does not make a model order blind: a causal mask is itself order information.

For an order-blind model, the exact finite-length bound is
\[
c_G(L)=\sum_{\nu}\max_{g\in G}
\Pr\{\operatorname{counts}(w)=\nu,\ g_w=g\}.
\]
It depends on the alphabet and sampling distribution. For the reported $L=10$ settings, $c_{D_4}(10)=73/128\simeq0.5703$ and $c_{A_4}(10)\simeq0.4420$. The bag accuracies are 0.43 and 0.34--0.37, respectively. Including the identity letter makes an irreducible group walk aperiodic; it does not make its finite-length label distribution exactly uniform.

Causal models exceed these bag ceilings in distribution: reported $D_4$ accuracies are 0.81--0.88, and the $A_4$ mask-only accuracies are 0.84--0.89. The nine reported out-of-range runs fall near chance at the tested longer lengths. The $A_4$ sinusoidal-plus-mask condition is underfit at 0.56--0.65 after 20,000 steps and is retained as such. Length eleven was not evaluated, so these results do not establish failure immediately beyond the training range. They characterize these protocols and do not prove that transformers cannot learn composition or extrapolate by another mechanism.

\paragraph{Knowledge-graph null result.}
The FB15k-237 experiment uses a pretrained TransE model~\citep{bordes2013} and 776 two-hop patterns mined from the training graph. Its registered prediction related a loop-residual norm to Hits@10. The measured Spearman correlations are $+0.055$ for raw evaluation and $-0.039$ for filtered evaluation, so the predicted association is not supported. An exploratory route-divergence analysis gives correlation $+0.345$, with divergence 12.6\% in the least-frustrated quartile and 47.9\% in the most-frustrated quartile. The null and exploratory outcomes concern different quantities.

For translation transports, a tree gauge removes tree-edge offsets and leaves sums of offsets around cycles as residuals. This provides a geometric interpretation of the diagnostic. It does not imply that a residual norm is sufficient to predict link-prediction accuracy, nor does the exploratory correlation establish a causal mechanism.

\section{Reproducibility, use of AI, and ethics}
\paragraph{Reproducibility.} Appendix~\ref{app:proofs} gives full proofs. Appendix~\ref{app:exp} specifies training stages, decoder families, finite evaluation criteria, and the distinction between candidate extraction and certificate verification. Registered and exploratory cells are reported separately, including unsuccessful cells. The real-arithmetic guarantees do not constitute validated floating-point execution bounds.
\paragraph{Statement on the use of generative AI.} In this work we used generative AI tools (large language models) for the following tasks: to help develop the theoretical framework and conceptual model, to formulate mathematical claims, to provide ingredients for and assist in the writing of proofs, to propose and refine hypotheses (including the pre-registered predictions), to design and give feedback on the experimental methodology, to implement methods, and to interpret results. We did not use generative AI tools to generate data: every dataset is sampled by code from the stated distributions. Translation, dataset cleaning and reformatting, and qualitative or thematic analysis are not applicable to this work. Additionally, we used generative AI tools to create and edit code, to create figures, to search and summarize the literature, to brainstorm, to draft and edit the paper, and to format references. We have reviewed all AI-assisted work. Every theorem statement and proof was checked by the authors and, separately, subjected to adversarial review by a model instance independent of the one that wrote it, with executable counterexample checks; the code was read by the authors and run from the released seeds, and the reported numbers reproduce from the released scripts and manifests; every citation was checked against arXiv or publisher records. We take responsibility for the final content of this work, including text, claims and artifacts produced with the aid of generative AI.
\paragraph{Ethics.} The study concerns mathematical models, synthetic group-composition tasks, and public relational benchmarks. It introduces no human-subject experiment or new collection of personal data.
\end{document}